\documentclass{article}
\usepackage[utf8]{inputenc}
\usepackage[T1]{fontenc}
\usepackage{main}
\usepackage{microtype}
\usepackage{graphicx}
\usepackage{times}
\usepackage{amsmath}
\usepackage{amssymb}
\usepackage{amsthm}
\usepackage{bm}
\usepackage{mathtools}
\usepackage{enumitem}
\usepackage{booktabs}
\usepackage{xcolor}
\usepackage{natbib}
\definecolor{mydarkblue}{rgb}{0,0.08,0.45}
\usepackage[colorlinks=true,linkcolor=mydarkblue,citecolor=mydarkblue,filecolor=mydarkblue,urlcolor=mydarkblue]{hyperref}
\usepackage{fancyhdr}

\usepackage{comment}
\usepackage{dsfont}
\usepackage{float}
\usepackage{multirow}
\usepackage{makecell}
\usepackage{hhline}
\usepackage{colortbl}
\usepackage[linesnumbered,ruled,vlined]{algorithm2e}
\usepackage[noend]{algorithmic}

\definecolor{algcomment}{RGB}{95,110,125}

\definecolor{yjc}{RGB}{225,0,100}
\definecolor{lxs}{RGB}{138,43,226}
\definecolor{jcg}{RGB}{100,160,0}
\definecolor{adam}{RGB}{0,0,200}
\definecolor{own_pink}{RGB}{217,25,169}
\definecolor{own_blue}{RGB}{0,100,223}

\definecolor{gred}{RGB}{250, 210, 207}
\definecolor{coolblue1}{rgb}{0.91, 0.94, 0.98}
\definecolor{coolblue2}{rgb}{0.76, 0.85, 0.94}
\definecolor{coolblue3}{rgb}{0.54, 0.72, 0.87}
\definecolor{coolblue4}{rgb}{1, 1, 1}

\usepackage{xspace}
\usepackage{cleveref}
\usepackage[]{multicol, multirow}
\usepackage[most]{tcolorbox}
\usepackage{etoc}

\tcbuselibrary{listingsutf8}
\tcbuselibrary{listingsutf8,breakable}

\newtcolorbox[auto counter]{observation}[1][]{
  colback=black!5!white,
  colframe=black!70!white,
  fonttitle=\bfseries,
  title=Observation~\thetcbcounter,
  enhanced,
  boxrule=0.6pt,
  left=1mm,right=1mm,top=1mm,bottom=1mm,
  #1
}

\newtcolorbox[auto counter]{takeaway}[1][]{
  colback=teal!3!white,
  colframe=teal!55!black,
  fonttitle=\bfseries,
  title=Takeaway~\thetcbcounter,
  enhanced,
  boxrule=0.5pt,
  left=1mm,right=1mm,top=1mm,bottom=1mm,
  #1
}

\newtcolorbox[auto counter]{practicalguidance}[1][]{
  colback=cyan!3!white,
  colframe=cyan!60!black,
  fonttitle=\bfseries,
  title=Practical Guidance~\thetcbcounter,
  enhanced,
  boxrule=0.5pt,
  left=1mm,right=1mm,top=1mm,bottom=1mm,
  #1
}

\newtcolorbox[auto counter]{discussion}[1][]{
  colback=violet!4!white,
  colframe=violet!60!black,
  fonttitle=\bfseries,
  title=Discussion~\thetcbcounter,
  enhanced,
  boxrule=0.5pt,
  left=1mm,right=1mm,top=1mm,bottom=1mm,
  #1
}

\newcommand{\defn}{\coloneqq}

\newcommand{\ror}{\sigma} 
\newcommand{\unb}{\cU} 

\newcommand{\LRQFTRL}{{\sf L-Robust-Q-FTRL}\xspace}
\newcommand{\OLRQFTRL}{{\sf Online-L-Robust-Q-FTRL}\xspace}

\newcommand{\cA}{\mathcal{A}}

\newcommand{\cD}{\mathcal{D}}

\newcommand{\cP}{\mathcal{P}}

\newcommand{\cS}{{\mathcal{S}}}

\newcommand{\cU}{\mathcal{U}}

\newcommand{\cX}{\mathcal{X}}
\newcommand{\cY}{\mathcal{Y}}

\newcommand{\mymid}{\,|\,} 

\usepackage{scalerel,stackengine}
\stackMath
\newcommand\reallywidehat[1]{%
\savestack{\tmpbox}{\stretchto{%
  \scaleto{%
    \scalerel*[\widthof{\ensuremath{#1}}]{\kern-.6pt\bigwedge\kern-.6pt}%
    {\rule[-\textheight/2]{1ex}{\textheight}}
  }{\textheight}%
}{0.5ex}}%
\stackon[1pt]{#1}{\tmpbox}%
}
\newcommand\reallywidecheck[1]{%
\savestack{\tmpbox}{\stretchto{%
  \scaleto{
    \scalerel*[\widthof{\ensuremath{#1}}]{\kern-.6pt\bigwedge\kern-.6pt}%
    {\rule[-\textheight/2]{1ex}{\textheight}}
  }{\textheight}%
}{0.5ex}}%
\stackon[1pt]{#1}{\scalebox{-1}{\tmpbox}}%
}

\allowdisplaybreaks[4]

\definecolor{algogray}{RGB}{130,130,130}
\definecolor{mypurple}{RGB}{120,40,180}
\definecolor{thmblue}{RGB}{40,80,180}
\definecolor{deforange}{RGB}{220,130,40}
\usepackage{etoolbox}

\tcbset{themedboxbase/.style={
  enhanced, breakable,
  boxrule=0.35pt, arc=1mm,
  coltitle=black, fonttitle=\small\sffamily\bfseries,
  attach boxed title to top left={xshift=4mm,yshift*=-1.2mm},
  boxsep=1.5mm, top=1.5mm, bottom=1.5mm, left=1mm, right=1mm,
  before skip=10pt, after skip=10pt
}}

\newtcolorbox{algorithmbox}[1]{%
  themedboxbase,
  colframe=algogray,
  colbacktitle=algogray!20!white,
  colback=algogray!5!white,
  boxed title style={sharp corners, boxrule=0pt,
    top=1pt, bottom=0.5pt, left=4mm, right=4mm,
    borderline={0.5pt}{0pt}{algogray!40!white}},
  title={\textbf{#1}}
}

\tcbset{thmboxstyle/.style={
  themedboxbase,
  colframe=mypurple,
  colbacktitle=mypurple!15!white,
  colback=mypurple!5!white,
  boxed title style={sharp corners, boxrule=0pt,
    top=1pt, bottom=0.5pt, left=4mm, right=4mm,
    borderline={0.5pt}{0pt}{mypurple!40!white}}
}}

\tcbset{theoremboxstyle/.style={
  themedboxbase,
  colframe=thmblue,
  colbacktitle=thmblue!15!white,
  colback=thmblue!5!white,
  boxed title style={sharp corners, boxrule=0pt,
    top=1pt, bottom=0.5pt, left=4mm, right=4mm,
    borderline={0.5pt}{0pt}{thmblue!40!white}}
}}

\tcbset{defboxstyle/.style={
  themedboxbase,
  colframe=deforange,
  colbacktitle=deforange!15!white,
  colback=deforange!5!white,
  boxed title style={sharp corners, boxrule=0pt,
    top=1pt, bottom=0.5pt, left=4mm, right=4mm,
    borderline={0.5pt}{0pt}{deforange!40!white}}
}}

\newtcolorbox{responsebox}[2][]{
    breakable, enhanced,
    colback=cyan!5, colframe=cyan!50!blue,
    coltext=black, coltitle=white,
    fonttitle=\bfseries\rmfamily,
    arc=1mm, boxrule=1pt,
    width=0.95\linewidth, center,
    title=#2, #1
}

\theoremstyle{plain}
\newtheorem{lemma}{Lemma}

\newtheorem{theorem}{\textbf{Theorem}}

\newtheorem{definition}{Definition}
\newtheorem{corollary}{Corollary}[theorem]
\theoremstyle{remark}

\renewenvironment{lemma}[1][]{%
  \refstepcounter{lemma}%
  \begin{tcolorbox}[thmboxstyle, title={\textbf{Lemma~\thelemma\ifblank{#1}{}{\quad(#1)}}}]%
}{\end{tcolorbox}}

\renewenvironment{definition}[1][]{%
  \refstepcounter{definition}%
  \begin{tcolorbox}[defboxstyle, title={\textbf{Definition~\thedefinition\ifblank{#1}{}{\quad(#1)}}}]%
}{\end{tcolorbox}}

\title{Taming the Curses of Multiagency in Robust Markov Games with Large State Space through Linear Function Approximation}

\author{
    \textbf{Jingchu Gai}\thanks{Machine Learning Department, Carnegie Mellon University.} \quad
    \textbf{Laixi Shi}\thanks{Department of Electrical and Computer Engineering, Johns Hopkins University, MD 21286, USA.} \\
    CMU Machine Learning Department \quad Johns Hopkins University \\
    \texttt{jgai@andrew.cmu.edu/laixis@jhu.edu}
}

\begin{document}
\raggedbottom
\sloppy

\maketitle
\thispagestyle{fancy}
\fancyhead{}
\lhead{\includegraphics[height=0.5cm]{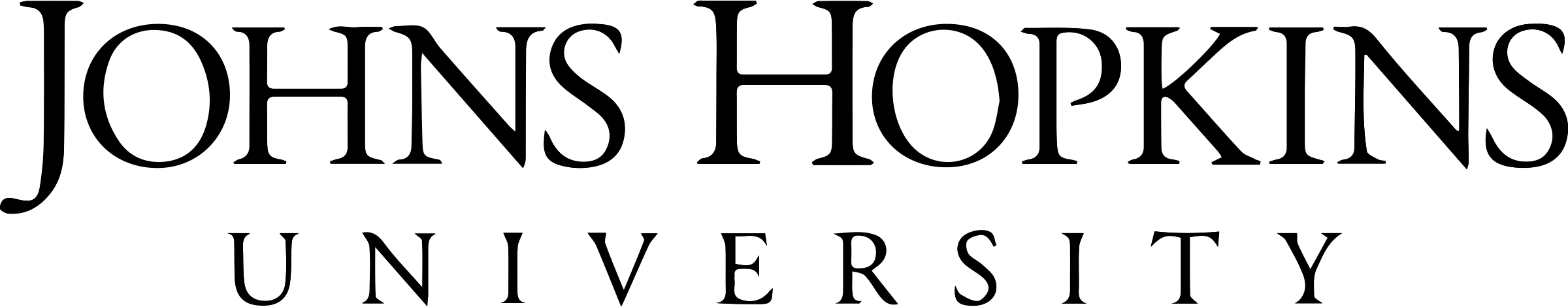}}
\rhead{%
  \raisebox{-0.1cm}{\includegraphics[height=0.8cm]{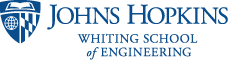}}%
}
\fancyfoot{}
\cfoot{\thepage}
\renewcommand{\headrulewidth}{0pt}
\renewcommand{\footrulewidth}{0pt}
\setlength{\headheight}{12pt}
\addtolength{\topmargin}{00pt}
\setlength{\headsep}{3mm}
\addtolength{\topmargin}{-0.8cm}
\pagestyle{plain}

\begin{abstract}
    Multi-agent reinforcement learning (MARL) holds great potential but faces robustness challenges due to environmental uncertainty. To address this, distributionally robust Markov games (RMGs) optimize worst-case performance when the environment deviates from the nominal model within a uncertainty set. Beyond robustness, an equally urgent goal for MARL is data efficiency---sampling from vast state and action spaces that grow exponentially with the number of agents potentially leads to the curse of multiagency. However, current provably data-efficient algorithms for RMGs are limited to tabular settings with finite state and action spaces, which are only computationally manageable for small-scale problems, leaving RMGs with large-scale (or infinite) state spaces largely unexplored. The only existing work beyond tabular settings focuses on linear function approximation (LFA) for a restrictive class of RMGs using vanish minimal value assumption and still suffers from sample complexity with the curse of multiagency. In this work, we focuses on general RMGs with LFA. For uncertainty sets defined by total variation distance, we develop provably data-efficient algorithms that break the curse of multiagency in both the generative model setting and a newly proposed online interactive setting. To our knowledge, our results are the first to break the curse of multiagency of sample complexity for RMGs with large (possibly infinite) state spaces, regardless of the uncertainty set construction.
\end{abstract}

\setcounter{tocdepth}{2}
\tableofcontents

\section{Introduction}

Multi-agent reinforcement learning (MARL), formulated as Markov games (MGs) for strategic games, provides a powerful framework for sequential decision-making problems involving multiple agents interacting in a shared environment. MARL has demonstrated strong empirical success across a wide range of domains, including fine-tuning multi-LLM agent systems \cite{park2025maporl}, autonomous driving \citep{zhou2020smarts}, games \citep{silver2017mastering}, and ecosystem protection \citep{fang2015security}. Due to the inherently conflicting objectives among agents, solution concepts for MGs are typically defined in terms of equilibrium notions: joint strategies under which no rational agent has an incentive to deviate unilaterally, assuming all other agents keep their strategies fixed. Widely studied examples include Nash equilibria (NE) \citep{nash1951non,shapley1953stochastic}, correlated equilibria (CE), and coarse correlated equilibria (CCE) \citep{aumann1987correlated,moulin1978strategically}.

Robustness becomes a central challenge in MARL, as learned solutions can be highly sensitive to subtle perturbations in human behavior and environmental dynamics, potentially leading to substantial degradation in both individual-agent performance and overall return \citep{balaji2019deepracer,zhang2020robust,zeng2022resilience,yeh2021sustainbench}. This motivates the development of algorithms that remain reliable under uncertainty—particularly in high-stakes or safety-critical applications—and has spurred growing interest in robust MARL through the lens of distributionally robust Markov games (RMGs) \citep{kardecs2011discounted,zhang2020robust,shi2024sample}. Unlike standard MGs, which assume a fixed environment, RMGs explicitly pursue robustness by optimizing worst-case performance over environments within a prescribed uncertainty set.

Beyond robustness, an equally critical goal in MARL is data efficiency. In extensive real-world applications, agents must sample from environments with vast state and action spaces. This challenge is further amplified in MARL by the exponential growth of the joint action space as the number of agents increases, leading to potentially exploding sample complexity—a phenomenon known as the {\em curse of multiagency}. Consequently, substantial research effort has focused on designing provably sample-efficient algorithms not only for RMGs \citep{blanchet2024double,shi2024breakcursemultiagencyrobust} but also for broader MARL problems \citep{wang2023breaking,jin2021v}.

However, current sample-efficient algorithms are often limited to tabular RMGs with finite state and action spaces. In practice, (multi-agent) RL problems inherently involve large-scale or even continuous and infinite-dimensional state spaces—e.g., transportation with autonomous vehicles \cite{hua2025multi}, robotics and swarm manipulation \cite{orr2023multi-new}, water and energy systems \cite{jia2025recent}, and financial markets \cite{mohl2025jaxmarl}. Scaling learning algorithms to such large-scale state spaces poses significant data and computational challenges. To address this, function approximation—such as deep neural networks—is widely employed in RL \citep{silver2016mastering,vinyals2019grandmaster,wang2023breaking,cui2023breaking}. Nevertheless, incorporating function approximation into provably sample-efficient solutions for RMGs remains largely unexplored. The only existing work \cite{zheng2025distributionally} studies linear function approximation in an online setting, focusing on a restrictive class of RMGs \citep{wang2023breaking,jin2019provablyefficientreinforcementlearning} and requiring an additional vanishing minimal value assumption to obtain meaningful guarantees; moreover, its sample complexity still suffers from the curse of multiagency. This raises an open question:
\begin{center}
    {\em  Can the curse of multiagency be tamed in robust Markov games with large-scale state spaces? }
\end{center}
To address this gap, to the best of our knowledge, this work provides the first provably sample-efficient algorithm for RMGs with large-scale (or infinite) state spaces that avoids the curse of multiagency. In particular, we focus on linear function approximation. Let $S$ and $A_1, \ldots, A_n$ denote the cardinality of the state space and the action spaces of $n$ agents, respectively; let $d$ denote the feature dimension of the linear function class for each state-action pair (see \eqref{eq:linaer-mg-assumption} for details).
Let $H$ denote the horizon (number of steps per episode), and let $T$ denote the total number of episodes in the online learning process. The main contributions are summarized as below:

\begin{itemize}

\item We focus on \emph{distributionally robust linear Markov games} (R-LMGs), i.e., RMGs with linear function approximation. Under the data collection mechanism with a generative model, we develop a provable algorithm that achieves an $\varepsilon$-approximate CCE for R-LMGs using total variation uncertainty set, with sample complexity at most $\tilde{\mathcal{O}}(H^9 d^3/\epsilon^4)$ (cf.~Theorem~\ref{thm:main_generative_model}). To the best of our knowledge, this is the first provable sample-complexity guarantee for R-LMGs under the generative model setting, irrespective of the uncertainty-set formulation; moreover, it effectively overcomes the curse of multi-agency. Technically, unlike tabular RMGs, large-scale (possibly infinite) state spaces make exhaustive sampling over all state--action pairs infeasible. We therefore employ a infinite-to-finite reduction to sample a carefully chosen finite subset and generalize to the full state space.

\item Beyond the generative model setting, we consider a more practical data-collection regime—an online interactive protocol in which agents sample Markovian trajectories from transition kernels within the uncertainty set of R-LMGs, rather than solely from the nominal kernel. In this setting, we propose a sample-efficient algorithm that attains sublinear regret $\tilde{\mathcal{O}}\!\left(d\,\max_{i\in[n]} A_i\, H^2 \sqrt{T}\right)$ (cf.~Theorem~\ref{thm:main_online}), thereby breaking the curse of multiagency. Technically, since the true robust value function is unknown, we cannot sample directly from the worst-case transition kernel and must instead approximate the adversarial environment. To this end, beyond the optimistic value estimates commonly used in standard linear MGs for policy updates, we introduce a new sampling strategy that leverages an additional sequence of pessimistic robust value estimates to approximate the adversarial model, combined with a hybrid sampling scheme to establish our guarantees.

\end{itemize}
\begin{table}[t]
\begin{center}
\resizebox{\textwidth}{!}{
\begin{tabular}{c|c|c|c|c|c}
\hline \hline

Data Mechanism & Algorithm & Robust MG Model  & Uncertainty Set & \makecell{Break \\ Curse} & Sample Complexity/ $\mathrm{Regret}(T)$ \tabularnewline
\hline \hline

\multirow{8}{*}{Generative Model} 

& P$^2$MPO \vphantom{$\frac{1^{7}}{1^{7^{7}}}$} & \multirow{2}{*}{Tabular} & \multirow{2}{*}{$(s,\boldsymbol{a})$-rectangularity} & \multirow{2}{*}{No} & \multirow{2}{*}{$S^4 \left(\prod_{i=1}^n A_i\right)^3 \frac{H^4}{\varepsilon^2}$ } \tabularnewline
& \citep{blanchet2024double} & & & & \tabularnewline
\cline{2-6}

& DR-NVI \vphantom{$\frac{1^{7}}{1^{7^{7}}}$} & \multirow{2}{*}{Tabular} & \multirow{2}{*}{$(s,\boldsymbol{a})$-rectangularity} & \multirow{2}{*}{No} & \multirow{2}{*}{$\frac{SH^3 \prod_{i=1}^n A_i}{\varepsilon^2} \min \Big\{H, \frac{1}{\min_i \sigma_i}\Big\}$} \tabularnewline
& \citep{shi2024sample} & & & & \tabularnewline
\cline{2-6}

& Robust-Q-FTRL \vphantom{$\frac{1^{7}}{1^{7^{7}}}$} & \multirow{2}{*}{Tabular} & fictitious & \multirow{2}{*}{\textcolor{red}{Yes}} & \multirow{2}{*}{
$\frac{SH^6 \sum_{i=1}^n A_i}{\varepsilon^4} \min \Big\{H, \frac{1}{\min_i \sigma_i}\Big\}$} \tabularnewline
& \citep{shi2024breakcursemultiagencyrobust} & & $(s,a_i)$-rectangularity & & \tabularnewline
\hhline{~-----}

& \cellcolor{cyan!10} \textbf{This Work} \vphantom{$\frac{1^{7}}{1^{7^{7}}}$} & \cellcolor{cyan!10} \textcolor{red}{Linear}, & \cellcolor{cyan!10} fictitious & \cellcolor{cyan!10} & \cellcolor{cyan!10} \tabularnewline
& \cellcolor{cyan!10} (Generative) & \cellcolor{cyan!10} $d = S \max_{i \in[n]} A_i$ & \cellcolor{cyan!10} $(s,a_i)$-rectangularity & \cellcolor{cyan!10}\multirow{-2}{*}{\textcolor{red}{Yes}} & \cellcolor{cyan!10} \multirow{-2}{*}[0.2em]{$\frac{H^9d^3}{\varepsilon^4}$} \tabularnewline

\hline \hline

\multirow{6}{*}{Online Interaction} 

& \multirow{2}{*}{\makecell[c]{DRMG \\ \citep{farhat2025online}}} \vphantom{$\frac{1^{7}}{1^{7^{7}}}$} & \multirow{2}{*}{Tabular} & \multirow{2}{*}{$(s,\boldsymbol{a})$-rectangularity} & \multirow{2}{*}{No} & \multirow{2}{*}{$\tilde{\mathcal{O}} \big( \sqrt{\min \{ \sigma_{\min}^{-1}, H \} H S  (\prod_{i\in[n]} A_i)  T} \big),$} \tabularnewline
& & & & & \tabularnewline
\cline{2-6}

& DR-CCE-LSI \vphantom{$\frac{1^{7}}{1^{7^{7}}}$} & \textcolor{red}{Linear}, & \multirow{2}{*}{$(s,\boldsymbol{a})$-rectangularity} & \multirow{2}{*}{No} & \multirow{2}{*}{ $ d_{\mathsf{curse}} H \min\!\big\{ H, \tfrac{1}{\min\{\sigma_i\}} \big\} \,  \sqrt{T}$  } \tabularnewline
& \citep{zheng2025distributionally} & $d_{\mathsf{curse}} = S\prod_{i\in[n]}A_i$ & & & \tabularnewline
\hhline{~-----}

& \cellcolor{cyan!10} \textbf{This Work} \vphantom{$\frac{1^{7}}{1^{7^{7}}}$} & \cellcolor{cyan!10} \textcolor{red}{Linear}, & \cellcolor{cyan!10} fictitious & \cellcolor{cyan!10} & \cellcolor{cyan!10} \tabularnewline
& \cellcolor{cyan!10} (Online) & \cellcolor{cyan!10} $d = S \max_{i \in[n]} A_i$ & \cellcolor{cyan!10} $(s,a_i)$-rectangularity & \cellcolor{cyan!10}\multirow{-2}{*}{\textcolor{red}{Yes}} & \cellcolor{cyan!10} \multirow{-2}{*}[0.2em]{$d \max_{i\in[n]}A_i\, H^2    \sqrt{T}$} \tabularnewline

\hline \hline
\end{tabular}
}
\end{center}
\caption{The existing results of sample complexity in the generative model setting (or regret in the online setting) for finding an $\varepsilon$-approximate equilibrium in finite-horizon, multi-agent, general-sum robust Markov games (RMGs), omitting logarithmic factors. To our knowledge, this work provides the first algorithms that break the curse of multiagency for large (possibly infinite) state spaces using linear function approximation.}
\label{tab:prior-work-double-line}
\end{table}

\section{Related work}

\paragraph{Distributionally Robust RL.}
Standard reinforcement learning (RL) methods can be highly sensitive to
perturbations of environment dynmaics (transition kernels and reward function)
\citep{zhang2020robust,mahmood2018benchmarking}. To enhance robustness, 
distributionally robust RL (robust RL) seeks policies that optimize worst-case
performance over a prescribed uncertainty set by incorporating
distributionally robust optimization (DRO) into sequential decision making.
DRO has also been extensively studied in supervised learning
\citet{rahimian2019distributionally,gao2020finite,bertsimas2018data,
duchi2018learning,blanchet2019quantifying,gao2023distributionally}.

Distributionally robust RL was first studied in the single-agent setting
\citep{nilim2005robust,iyengar2005robust,badrinath2021robust,zhou2021finite,
shi2022distributionally,shi2023curious}. More recently, DRO formulations have been incorporated to multi-agent RL and formulates robust Markov
games (RMGs) \citet{zhang2020robust,kardecs2011discounted,ma2023decentralized,
blanchet2023double,shi2024sample}. Existing sample-complexity guarantees
for algorithms of RMGs largely focus on tabular settings with finite state
and action spaces \citet{ma2023decentralized,blanchet2024double,
shi2024sample,shi2024breakcursemultiagencyrobust}, while comparatively few
results address large-scale or infinite state spaces. To our knowledge, the only existing work \cite{zheng2025distributionally} studied RMGs under linear function approximation. \cite{zheng2025distributionally} focuses on a restrictive class of RMGs \citep{zheng2025distributionally,lu2024distributionally} that requires an additional vanishing minimal value assumption, which is a more restrictive subclass of RMGs than the setting considered in this work; moreover, its sample complexity still suffers from the curse of multiagency when transferred to the tabular setting. In this work, we focus on general RMGs under independent per-agent linear function approximation modeling with no further assumption on the structure of the value function, and achieve sample complexity that breaks the curse of multi-agency. See the discussion after Theorem~\ref{thm:main_online} for a detailed comparison.



\paragraph{Multi-Agent RL with Function Approximation.}
Linear function approximation (LFA) was first studied in the single-agent setting \citep{jin2020provably,yang2020reinforcement}. Building on these techniques, \citet{xie2020learning} and \citet{chen2024unifiedalgorithmsrldecisionestimation} investigated Markov games under centralized LFA, where a central function class models value functions across the joint actions of all agents. This setting makes circumventing the curse of multi-agency challenging. To address this issue, \citet{wang2023breaking,cui2023breaking} introduced independent per-agent linear function classes and correspondingly developed provably efficient algorithms that break the curse of multi-agency; further, \citet{dai2024refinedsamplecomplexitymarkov} obtained optimal dependence on the action-set size via sharper analysis. Beyond LFA, several works have studied Markov games with general function approximation in both centralized settings \citep{huang2022towards,ni2022representation,xiong2022self,chen2022almost,zhan2023decentralized,jin2022power} and per-agent settings \citep{wang2023breaking}. In this paper, we extend independent per-agent LFA to a popular robust counterpart of standard MGs---robust linear MGs (R-LMGs). Compared to standard MGs, in both generative and online settings, we must handle the additional statistical error induced by the high nonlinearity of R-LMGs with respect to the nominal transition kernel. In the online interactive setting in particular, additional challenges arise from the need to sample from the approximate adversarial environment and from the fact that the LFA assumption applies solely to the nominal kernel; kernels within the uncertainty set do not necessarily preserve this linear structure. To address this, we introduce a \emph{hybrid sampling strategy} that uses a sequence of pessimistic robust value estimates, allowing us to estimate the nominal transition kernel satisfying LFA through ridge regression and then apply the pessimistic estimates to approximate the adversarial model for subsequent robust objective estimation.

\paragraph{Breaking the Curse of Multiagency in Multi-Agent RL.}
In multi-agent reinforcement learning (MARL), the joint action space grows
exponentially with the number of agents, making it crucial to develop
algorithms whose sample complexity does not scale exponentially in the
number of agents---often referred to as breaking the curse of multi-agency. This challenge is ubiquitous in MARL and has attracted substantial interest.
\citet{song2021can,rubinstein2017settling,bai2020provable} show that learning
a Nash equilibrium in general Markov games can require sample complexity
exponential in the number of agents. This has motivated work on alternative
solution concepts, such as correlated equilibrium (CE) and coarse correlated
equilibrium (CCE), for which non-exponential guarantees are possible. For standard MGs, \citet{daskalakis2023complexity} provide a complexity
bound of $\widetilde{O}(H^{11}S^{3}\max_{i\in[n]}A_i/\epsilon^3)$ for learning
a CCE, and \citet{jin2021v,song2021can,li2022minimax} also propose algorithms with non-exponential sample complexity, both for tabular MGs. Beyond the tabular setting,
\citet{cui2023breaking} and \citet{wang2023breaking} develop algorithms that
break the curse of multi-agency under independent per-agent LFA. Beyond standard Markov games,
\citet{song2021can,alatur2024independentpolicymirrordescent,
dong2024convergencenashequilibriumnoregret} study Markov potential games and
give algorithms that learn Nash equilibria with sample complexity that does
not scale exponentially with the number of agents.

For robust Markov games, breaking the curse remains largely underexplored. The only existing results that breaks the curse of multi-agency 
\citet{shi2024breakcursemultiagencyrobust,jiao2024minimax} are restricted to tabular RMGs with R-contamination or TV distance. To the best of our knowledge, this is the first work to break the curse of multi-agency for large-scale (continuous) state space, in both generative model and online data collection setting.

\section{Problem Formulation}
\subsection{Background: standard linear Markov games}
\noindent\textbf{Finite-horizon multi-agent general-sum Markov games (MGs).} We consider an episodic finite-horizon general-sum MG represented by the tuple $(H, \mathcal{S}, \{\mathcal{A}_i\}_{i \in [n]}, P, \{r_i\}_{i \in [n]})$, where $H$ denotes the length of each episode and $\mathcal{S}$ represents the state space. The joint action space for all agents is defined as $\mathcal{A} = \mathcal{A}_1 \times \cdots \times \mathcal{A}_n$, where, for each $i \in [n]$, $\mathcal{A}_i$ is a finite action set of agent $i$ with cardinality $A_i$. We use $\mathbf{a} \in \mathcal{A}$ to represent the joint action of all agents, and $\mathbf{a}_{-i}$ to denote the joint action of all agents except for agent $i$. 
Additionally, $P = \{P_h\}_{h \in [H]}$ is a collection of transition kernels, where $P_h(\cdot \mid s, \mathbf{a})$ gives the distribution of the next state when agents take action $\mathbf{a}$ at state $s$ and step $h$. For each agent $i$, $r_i = \{r_{i,h}\}_{h \in [H]}$ represents the reward functions, where agent $i$ receives $r_{i,h}(s, \mathbf{a}) \in [0, 1]$ when all agents take action $\mathbf{a}$ at state $s$ and step $h$.

To continue, we focus on Markov policies, where the action selection rule depends only on the current state, independent of the previous trajectory. Specifically, for each agent $i \in [n]$, actions are chosen according to the policy $\pi_i = \{\pi_{i,h} : \mathcal{S} \rightarrow \Delta(\mathcal{A}_i)\}_{1 \leq h \leq H}$. Here, $\pi_{i,h}(a_i \mymid s)$ denotes the probability of selecting action $a_i \in \mathcal{A}_i$ in state $s$ at time step $h$.The joint policy for all agents is represented as $\pi = \pi_1 \times \pi_2 \times \cdots \times \pi_n : \mathcal{S} \times [H] \rightarrow \Delta(\mathcal{A})$. Additionally, we define $\pi_{-i} = \otimes_{j \neq i} \pi_j$ to denote the joint policy of all agents except for agent $i$.
For any given joint policy $\pi$ and transition kernel $P$ in a Markov game, the long-term cumulative reward for the $i$-th agent can be described by the value function $V_{i,h}^{\pi, P}: \mathcal{S} \rightarrow \mathbb{R}$ (or the Q-function $Q_{i,h}^{\pi, P}: \mathcal{S} \times \mathcal{A} \rightarrow \mathbb{R}$) as follows: for all $(h, s, \mathbf{a}) \in [H] \times \mathcal{S} \times \mathcal{A}$,
\begin{equation}
\label{eq:value-function-defn}
\begin{aligned}
&\textstyle Q_{i,h}^{\pi, P}(s, \mathbf{a}):=\mathbb{E}_{\pi,P}\big[\sum_{t=h}^{H}  r_{i,t}\big(s_{t}, \mathbf{a}_{t}\big)\mid s_{h}=s, \mathbf{a}_h = \mathbf{a}\big],\quad \textstyle V_{i,h}^{\pi, P}(s) :=\mathbb{E}_{\pi,
P}\big[\sum_{t=h}^{H} r_{i,t}\big(s_{t}, \mathbf{a}_{t}\big)\mid s_{h}=s\big]. 
\end{aligned}
\end{equation}
Here, the expectation is taken over the trajectory $\{(s_t, \mathbf{a}_t)\}_{h \leq t \leq H}$ generated by following the joint policy $\pi$ under the transition kernel $P$.

\paragraph{Linear Markov games: MGs with per-agent independent linear function approximation.} In this work, we focus on MGs with linear function approximation (LFA), abbreviated by linear MGs later, assuming that each agents' transitions and reward functions can be described by independent linear functions, as proposed by \citet{cui2023breaking,wang2023breaking,dai2024refinedsamplecomplexitymarkov}.
Specifically, in any linear MG with transition kernel $P =P^0$, for each agent $i \in [n]$, there exists a known $d$-dimensional feature mapping $\phi_i: \mathcal{S} \times \mathcal{A}_i \rightarrow \mathbb{R}^d$, such that for all  joint policy $\pi: [H] \times \mathcal{S} \mapsto \Delta(\mathcal{A})$, as well as   $(i,h,s,a_i) \in [n] \times [H] \times \mathcal{S}\times\mathcal{A}_i$, we have
\begin{equation}    
  \label{eq:linaer-mg-assumption}
\begin{aligned}
    &\mathbb{E}_{a_{-i} \sim \pi_{-i}(s)}\left[P_h^0(\cdot \mymid s, (a_i,\mathbf{a}_{-i}))\right] = \phi_i(s,a_i)^\top \mu_{i,h}^{\pi_{-i}},\quad  \mathbb{E}_{a_{-i} \sim \pi_{-i}(s)}\left[r_{i,h}(s,(a_i,\mathbf{a}_{-i}))\right] = \phi_i(s,a_i)^\top \theta_{i,h}^{\pi_{-i}},
\end{aligned}
\end{equation}
where $\mu_{i,h}^{\pi_{-i}} = (\mu_{i,h}^{\pi_{-i},(1)}, \ldots, \mu_{i,h}^{\pi_{-i},(d)})^\top$ consists of $d$-rows, where each row $\mu_{i,h}^{\pi_{-i},(j)}$ is a unknown signed measure over $\mathcal{S}$. $\theta_{i,h}^{\pi_{-i}} \in \mathbb{R}^d$ is the unknown reward parameter. We assume the standard normalization bounds:  $\lVert \theta_{i,h}^{\pi_{-i}} \rVert_2 \leq d$ and $\lVert \mu_{i,h}^{\pi_{-i}}(s) \rVert_2 \leq \sqrt{d}$ for all $(s,h) \in \mathcal{S} \times [H]$.
Here, $\mu_{i,h}^{\pi_{-i}}(s) \in \mathbb{R}^d$ denotes the feature vector corresponding to state $s$ over time step $h$.

\paragraph{Remark.} The above assumption on the linear structure (cf.~\eqref{eq:linaer-mg-assumption}) is strictly weaker than that in the closely related prior work \cite[Assumption 3.1]{zheng2025distributionally}, which also studies robust linear MGs but adopts a centralized linear structure. Specifically, \citet{zheng2025distributionally} requires the transition kernel $P_h(s^\prime \mymid s,\mathbf{a})$ for each state-joint action $(s,\mathbf{a})$ to admit a linear representation proportional to a feature vector $\phi(s,\mathbf{a})$.

\subsection{Distributionally robust linear MGs}
To consider robustness for linear MGs, we focus on a distributionally robust linear Markov game (R-LMG), characterized by the tuple $\{ \mathcal{S}, \{\mathcal{A}_i\}_{i \in [n]}, \{\mathcal{U}_{\rho}^{\sigma_i}(P^0, \cdot)\}_{ i \in[n]}, r, H \}$. The key difference between an R-LMG and a standard linear MG is that, in an R-LMG, the transition kernel for each agent $i$ can vary within its own uncertainty set $\mathcal{U}_{\rho}^{\sigma_i}(P^0,\cdot)$ around a nominal transition kernel $P^0$. Note that the nominal transition $P^0$ and $r$ satisfy the linear function approximation described in \eqref{eq:linaer-mg-assumption}, while other transitions within the uncertainty set $\mathcal{U}_{\rho}^{\sigma_i}(P^0, \cdot)$ do not need to satisfy LFA or any other modeling assumptions. We shall specify the uncertainty set $\mathcal{U}_{\rho}^{\sigma_i}(P^0,\cdot)$ with $\rho$ and $\sigma_i$ shortly. 
\paragraph{Fictitious uncertainty set.} Robust MGs (and also robust linear MGs) form a broad class due to the flexibility
in specifying uncertainty sets that capture different notions of model
mismatch and shifts. For a generic uncertainty set $\mathcal{U}_{\rho}^{\sigma_i}(P^0,\cdot)$,
$\rho$ represents a divergence function that quantifies the discrepancy between transition
kernels, and $\sigma_i\ge 0$ controls the uncertainty radius for agent $i$.
Various choices of the divergence function $\rho$ can be considered, including 
but not limited to $f$-divergence family (such as total variation and Kullback-Leibler (KL) divergence), Wasserstein distance \citet{xu2023improved}, and learning-based approximation \citet{ding2023seeing}. In this work, we adopt
the fictitious uncertainty sets $\mathcal{U}_{\rho}^{\sigma_i}(P^0,\pi)$
from \citet{shi2024breakcursemultiagencyrobust}, which model each agent as addressing and responding to the aggregate uncertainty induced by all other agents, rather than decomposing uncertainty into agent-by-agent individual uncertainty. This aligns with realistic human decision-making under uncertainty, as observed in behavioral economics \citep{goeree2005regular,sandomirskiy2024narrow}. 

To specify, we firstly introduce some useful notations.  For all $(s, \mathbf{a}, h) \in \mathcal{S} \times \mathcal{A} \times [H]$, let $P_{h,s,\mathbf{a}}^0 \in \Delta(\mathcal{S})$ represent the nominal transition kernel such that, for all $s^\prime \in \mathcal{S}$, we have $P_{h,s,\mathbf{a}}^0(s^\prime) = P_h^0(s^\prime \mid s, \mathbf{a})$. Additionally, let $\mathcal{A}_{-i} := \prod_{j\neq i} A_j$. And we define the expected nominal transition kernel conditioned on agent $i$ selecting action $a_i \in \mathcal{A}_i$, while the other agents follow a conditional policy, as described below:
\begin{align}
\label{eq:basic_transition}
  \textstyle P_{h,s,a_i}^{\pi_{-i}}(s^\prime):=\sum_{\mathbf{a}_{-i}\in\mathcal{A}_{-i}}\frac{\pi_h(a_i, \mathbf{a}_{-i}\mid s)}{\pi_{i,h}(a_i\mid s)}P_{h,s,\mathbf{a}}^0(s^\prime),\quad\forall s^\prime\in\mathcal{S}.
\end{align}

Armed with the notation, for any joint policy $\pi: \mathcal{S} \times [H] \to \Delta(\mathcal{A})$ and uncertainty levels $\{\sigma_i\}_{1 \leq i \leq n}$, the uncertainty set is defined as $\{\mathcal{U}_{\rho}^{\sigma_i}(P^0, \pi)\}_{i \in [n]}$ as $\mathcal{U}_{\rho}^{\sigma_i}(P^0, \pi) = \otimes \mathcal{U}_{\rho}^{\sigma_i}(P_{h,s,a_i}^{\pi_{-i}})$, where
\begin{align}\label{eq:fictitious-uncertainty-set}
  \mathcal{U}_{\rho}^{\sigma_i}(P_{h,s,a_i}^{\pi_{-i}}) := \big\{ P \in \Delta(\mathcal{S}) : \rho(P, P_{h,s,a_i}^{\pi_{-i}}) \leq \sigma_i \big\}.
\end{align}

\noindent\textbf{Robust value functions.} In a robust linear Markov game, each agent optimizes its worst-case performance. For a joint policy $\pi: \mathcal{S} \times [H] \to \Delta(\mathcal{A})$, we define agent $i$'s robust value function $\{V_{i,h}^{\pi,\sigma_i}\}_{h\in[H]}$ and robust Q-function $\{Q_{i,h}^{\pi,\sigma_i}\}_{h\in[H]}$: for all $(i, h, s, a_i) \in [n] \times [H] \times \mathcal{S} \times \mathcal{A}_i$,
\begin{equation}
  \label{eq:value-function-defn-robust}
\begin{aligned}
     \textstyle Q_{i,h}^{\pi,\sigma_i}(s, a_i) :=  \inf_{P \in \mathcal{U}_{\rho}^{\sigma_i}(P^0,\pi)} Q_{i,h}^{\pi,P}(s, a_i), \quad \textstyle V_{i,h}^{\pi,\sigma_i}(s) := \inf_{P \in \mathcal{U}_{\rho}^{\sigma_i}(P^0,\pi)} V_{i,h}^{\pi,P}(s).
\end{aligned}
\end{equation}
Notably, unlike in \eqref{eq:value-function-defn}, the $Q$-function for agent $i$ is evaluated based solely on its individual action $a_i \in \mathcal{A}_i$, rather than the joint action $\mathbf{a} \in \mathcal{A}$ of all agents.
We further define the robust best-response policy as the corresponding
benchmark value function. For each agent $i \in [n]$, given the
policies of the other $n-1$ agents, define for all $(h,s)\in[H]\times\mathcal{S}$,
\begin{equation}
  \label{eq:defn-optimal-V}
\begin{aligned}
  \textstyle V_{i,h}^{\star,\pi_{-i},\sigma_i}(s)
  := \max_{\pi_i':\,\mathcal{S}\times[H]\to\Delta(\mathcal{A}_i)}
  V_{i,h}^{\pi_i'\times \pi_{-i},\,\sigma_i}(s).
\end{aligned}
\end{equation}
Any maximizer $\pi_i'$ is referred to as a robust best response of agent $i$.


\paragraph{Solution concepts: robust equilibria.}
Analogous to standard MGs, we introduce two types of equilibria as the learning objectives in R-LMGs, the robust variants --- robust Nash equilibrium (NE) and robust coarse correlated equilibrium (CCE):
\begin{itemize}
  \item Robust Nash equilibrium (Robust NE): A product policy $\pi=\pi_1\times\cdots\times\pi_n:
  \mathcal{S}\times[H]\to\prod_{i=1}^n\Delta(\mathcal{A}_i)$ is a robust NE if
  \begin{equation}
    \label{eq:defn-robust-Nash-E}
    V_{i,1}^{\pi,\sigma_i}(s)=V_{i,1}^{\star,\pi_{-i},\sigma_i}(s),
    \quad \forall (i,s)\in [n] \times \mathcal{S}.
  \end{equation}
In other words, fixing other agents' policy $\pi_{-i}$, agent $i$ cannot improve its worst-case outcome (i.e., the robust value function) by unilaterally changing its own policy.
   \item Robust coarse correlated equilibrium (Robust CCE): A distribution over joint product policies
   $\xi:=\{\xi_h\}_{h\in[H]}:[H]\to\Delta(\mathcal{S}\to\prod_{i\in[n]}\Delta(\mathcal{A}_i))$
   is a robust CCE if, for all $(i,s)\in[n]\times\mathcal{S}$,
  \begin{align*}
      &\textstyle \mathbb{E}_{\pi \sim \xi} \big[  V_{i,1}^{\pi, \sigma_i}(s) \big]
      \geq \max_{\pi_i^\prime: \mathcal{S} \times [H] \to \Delta(\mathcal{A}_i)} \mathbb{E}_{\pi \sim \xi} \big[  V_{i,1}^{\pi^\prime \times \pi_{-i}, \sigma_i}(s) \big].
    \end{align*}
    Intuitively, when the joint policy $\pi$ is drawn from $\xi$, no agent can increase its expected robust value by committing to an alternative policy. 
\end{itemize}


\noindent Building on the definition of the fictitious uncertainty set, the following theorem establishes the existence of both a robust NE and a robust CCE in robust linear MGs (R-LMGs). The proof is deferred to Appendix~\ref{sec:proof-eq:ne-existence}.
\begin{tcolorbox}[thmboxstyle, title={\textbf{Existence of robust NE}}]
    \begin{theorem}
    \label{eq:ne-existence}
      For any robust linear MG $\{ \mathcal{S}, \{\mathcal{A}_i\}_{1 \leq i \leq n}, \{\mathcal{U}_{\rho}^{\sigma_i}(P^0, \cdot)\}_{1 \leq i \leq n}, r, H \}$ with linear approximation (cf.~\eqref{eq:linaer-mg-assumption}) and fictitious uncertainty sets, there exists at least one robust NE, which further implies the existence of at least one robust CCE.
    \end{theorem}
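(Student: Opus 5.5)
We plan to construct a robust NE by backward induction over $h=H,\dots,1$, solving at each state a one-shot game whose payoffs are given by robust Q-functions. This follows the existence argument of \citet{shi2024breakcursemultiagencyrobust} for tabular fictitious RMGs. The linear structure \eqref{eq:linaer-mg-assumption} plays no role in existence; we only need $\mathcal{S}$ measurable and each $\mathcal{A}_i$ finite.

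\textbf{Step 1: robust Bellman equation.} Since the fictitious uncertainty set $\mathcal{U}_{\rho}^{\sigma_i}(P^0,\pi)=\otimes\,\mathcal{U}_{\rho}^{\sigma_i}(P_{h,s,a_i}^{\pi_{-i}})$ is $(h,s,a_i)$-rectangular, the infimum in \eqref{eq:value-function-defn-robust} decouples across steps. We will verify by induction that
\[
Q_{i,h}^{\pi,\sigma_i}(s,a_i)=\mathbb{E}_{\mathbf{a}_{-i}\sim\pi_{-i,h}(s)}\big[r_{i,h}(s,(a_i,\mathbf{a}_{-i}))\big]+\inf_{P\in\mathcal{U}_{\rho}^{\sigma_i}(P_{h,s,a_i}^{\pi_{-i}})}P\,V_{i,h+1}^{\pi,\sigma_i},
\]
with $V_{i,h}^{\pi,\sigma_i}(s)=\mathbb{E}_{a_i\sim\pi_{i,h}(s)}Q_{i,h}^{\pi,\sigma_i}(s,a_i)$ and $V_{i,H+1}\equiv 0$. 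The inequality "$\ge$" holds because any kernel in the product set yields at least this value step by step. The inequality "$\le$" follows by choosing $\epsilon$-minimizers independently at each $(h,s,a_i)$, which rectangularity permits. This decomposition shows that the robust values at step $h$ depend on $\pi$ only through $\pi_h$ and the continuation values $V_{\cdot,h+1}$.

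\textbf{Step 2: per-state fixed point.} Suppose $\pi_{h+1:H}$ has been fixed and the continuation values $V_{i,h+1}$ are known. For each state $s$, define agent $i$'s payoff on the mixed profile $x=(x_1,\dots,x_n)\in\prod_j\Delta(\mathcal{A}_j)$ as
\[
u_i(x)=\sum_{a_i}x_i(a_i)\Big[\mathbb{E}_{x_{-i}}r_{i,h}(s,(a_i,\cdot))+\inf_{P\in\mathcal{U}_\rho^{\sigma_i}(P_{h,s,a_i}^{x_{-i}})}P V_{i,h+1}\Big].
\]
Here $P_{h,s,a_i}^{x_{-i}}=\sum_{\mathbf{a}_{-i}}x_{-i}(\mathbf{a}_{-i})P^0_{h,s,\mathbf{a}}$ for product $x$, which matches \eqref{eq:basic_transition}. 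The payoff $u_i$ is linear, hence concave, in $x_i$. This is the key advantage of the fictitious set: the bracketed term does not depend on $x_i$. We will show that $u_i$ is continuous in $x$, and then apply Kakutani's theorem, or directly Nash's theorem for concave games (Debreu–Glicksberg–Fan), to obtain a per-state equilibrium $\pi_h(\cdot\mid s)$.

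\textbf{Step 3: continuity, the main obstacle.} We need $x_{-i}\mapsto\inf_{\rho(P,Q)\le\sigma_i}PV$ to be continuous, where $Q=P_{h,s,a_i}^{x_{-i}}$ depends affinely, and hence continuously in total variation, on $x_{-i}$. For total variation, and more generally for $f$-divergences, we will use the dual form. For TV this reads $\inf_{P}PV=\max_{\alpha}\{Q[V]_\alpha-\sigma_i(\alpha-\min_{s'}[V]_\alpha)\}$, which is a supremum of functions that are affine in $Q$ and Lipschitz with constant $\|V\|_\infty\le H$. It is therefore Lipschitz in $Q$ under total variation. Alternatively, we can argue directly: if $\rho(Q,Q')\le\delta$, any $P$ feasible for $Q$ can be mixed slightly toward $Q'$ to become feasible for $Q'$, at a cost of $O(\delta H)$. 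For a generic $\rho$ we would assume joint convexity and continuity, as in prior work.

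\textbf{Step 4: conclusion.} Because the $Q$-values of a policy at step $h$ depend only on $\pi_{h:H}$, any unilateral deviation by agent $i$ can be improved step by step backward via the robust Bellman optimality equation, and $V_{i,h}^{\star,\pi_{-i},\sigma_i}$ satisfies the analogous max-recursion. The per-state equilibrium condition at each $(h,s)$ then gives $V_{i,1}^{\pi,\sigma_i}=V_{i,1}^{\star,\pi_{-i},\sigma_i}$ by induction. If $\mathcal{S}$ is uncountable, the selection $s\mapsto\pi_h(\cdot\mid s)$ must be measurable; we will invoke a measurable selection theorem for the closed-graph equilibrium correspondence. Finally, a robust NE $\pi$ yields a robust CCE by taking $\xi$ to be the Dirac mass at $\pi$.
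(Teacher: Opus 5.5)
Your proof is correct. It departs from the paper exactly where the paper does something new.

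\textbf{The paper's route.} The paper treats all states at once. For the one-step game at step $h$ it works on the full policy space $X=\{\pi:\mathcal{S}\to\prod_i\Delta(\mathcal{A}_i)\}$, gets compactness from Tychonoff and local convexity from the product topology, and applies Glicksberg's theorem to the statewise best-response correspondence $\phi$. It defers upper semicontinuity, convexity and the backward recursion to \citet{shi2024breakcursemultiagencyrobust}.

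\textbf{Your route.} You observe that the stage payoff at $s$ depends only on $\pi_h(\cdot\mid s)$. The fixed-point problem therefore splits into independent finite-dimensional games, and Kakutani (or Debreu--Glicksberg--Fan) at each $(h,s)$ suffices. This buys three things:
\begin{itemize}
\item It avoids infinite-dimensional topology. In particular, the paper's sequential Definition~\ref{def:semi-continous} is weaker than the closed-graph condition Glicksberg's theorem needs on the non-first-countable product $\bigl(\prod_i\Delta(\mathcal{A}_i)\bigr)^{\mathcal{S}}$ when $\mathcal{S}$ is uncountable. This is harmless in the paper only because $\phi$ is a product of per-state closed-graph correspondences.
\item It makes explicit the continuity step the paper omits. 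Your TV dual bound is $H$-Lipschitz in the nominal kernel.
\item It exposes measurability of $s\mapsto\pi_h(\cdot\mid s)$ as the real remaining issue. The Glicksberg fixed point does not address this at all, since $X$ contains non-measurable maps.
\end{itemize}

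\textbf{The cost.} You need a gluing step. If measurability is ignored, take any per-state equilibrium, e.g.\ the lexicographically smallest one. Otherwise use a Kuratowski--Ryll-Nardzewski selection. This applies because the equilibrium correspondence is compact-valued and defined by finitely many inequalities between Carath\'eodory functions. Measurability in $s$ follows from the dual form by taking the supremum over rational $\alpha$. The paper's global formulation, in turn, needs no selection argument and mirrors the tabular proof almost verbatim.

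\textbf{One small correction.} Your alternative argument, mixing a feasible $P$ toward $Q'$, costs order $H\delta/(\sigma_i+\delta)$, not $O(\delta H)$. That still gives continuity for $\sigma_i>0$, and $\sigma_i=0$ is trivial. The Lipschitz constant $H$ comes only from the dual-form argument.
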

\end{tcolorbox}

 This theorem extends existing robust-NE existence results in \cite{shi2024breakcursemultiagencyrobust} for robust MGs with finite state spaces to settings with possibly infinite state spaces, by combining Glicksberg’s fixed-point theorem \citep{glicksberg1952further} with additional analytic arguments.

Additionally, for any joint policy $\pi: \mathcal{S} \times [H] \to \Delta(\mathcal{A})$, the robust value function can be represented by the following equation, which we refer to as the robust Bellman equation:
\begin{align}\label{eq:robust-bellman-equation}
  \textstyle V^{\pi,\sigma_i}_{i,h}(s) = \mathbb{E}_{a_i \sim \pi_{i,h}(s)} \big[ r_{i,h}^{\pi_{-i}}(s, a_i) + \inf_{\mathcal{U}_{\rho}^{\sigma_i}(P_{h,s,a_i}^{\pi_{-i}})} P V_{i,h+1}^{\pi,\sigma_i}\big],
\end{align}
where we define $r_{i,h}^{\pi_{-i}}(s, a_i)=\mathbb{E}_{a_{-i}\sim\pi_{-i,h}(s)}\left[r_{i,h}(s,\mathbf{a})\right]$ for all $(s,a_i)\in\mathcal{S}\times\mathcal{A}_i$.

\paragraph{Specification of the uncertainty set: total variation (TV).}
In this work, we consider the total variation (TV) distance as the divergence function $\rho$, motivated by its simplicity, practical utility \citep{szita2003varepsilon, lee2021optidice, pan2023adjustable}, and potential theoretical advantages \citep{shi2023curious, shi2024sample}. Namely,  for any two distributions $P, P^\prime \in \Delta(\mathcal{S})$, the TV distance between $P$ and $P^\prime$ is given by:
\begin{align}
\textstyle \left\lVert P - P^\prime \right\rVert_{d_{\mathsf{TV}}} = \frac{1}{2}\sum_{s \in \mathcal{S}} \left| P(s) - P^\prime(s) \right|.
\end{align}
For simplicity, we denote $\mathcal{U}^{\sigma_i}(\cdot) \defn \mathcal{U}_{d_{\mathsf{TV}}}^{\sigma_i}(\cdot)$ throughout the rest of the paper.

\section{Robust Linear MGs with a Generative Model}

\label{sec:generative_model}
In both single-agent and multi-agent RL, three data-collection mechanisms are widely used: the generative model setting \citep{zhang2020model-based,li2022minimax}, the offline setting \citep{blanchet2023double}, and the online setting \citep{jin2018q}. In this section, we begin with the fundamental generative model setting, in which we assume access to a generative model or simulator that provides samples $s^\prime \sim P_h^0(\cdot \mid s, \mathbf{a})$ from the nominal environment $P^0$ for any given state, joint action, and time-step tuple $(s, \mathbf{a}, h) \in \mathcal{S} \times \mathcal{A} \times [H]$.

To learn equilibria in MARL, we focus on finding an {\em $\varepsilon$-approximate CCE} using as few samples as possible. This is motivated by the fact that computing a Nash equilibrium is generally intractable \citep{daskalakis2013complexity}, while finding an exact robust equilibrium is computationally hard and often impractical \citep{aghassi2006robust,wiesemann2013robust}. Mathematically, our goal is to find a distribution   $\xi=\{\xi_h\}_{h\in[H]}: [H] \rightarrow \Delta(\cS \rightarrow  \prod_{i\in[n]} \Delta(\cA_i))$ over policies such that 
\begin{align}
\label{eq:appximate_CCE}
	 \textstyle\max_{s\in \cS, i\in[n]} \big\{ \mathbb{E}_{\pi\sim \xi} \big[V_{i,1}^{\star,\pi_{-i}, \sigma_i }(s)\big] - \mathbb{E}_{\pi\sim \xi}\left[V_{i,1}^{\pi,\sigma_i}(s)\right] \big\} \leq \varepsilon.
\end{align}

\subsection{Algorithm Design}\label{sec:generative-algo-design}
To solve robust linear MGs (R-LMGs), we propose \LRQFTRL (outlined in Algorithm~\ref{alg:linear_generative_model}). Motivated by Robust-Q-FTRL algorithm developed for robust MGs in tabular setting \citep{shi2024breakcursemultiagencyrobust}, it incorporates designs tailored to linear function approximation (LFA) for large-scale possibly infinite state space, through an infinite-to-finite sampling design and leveraging ridge regression to estimate the transition probabilities and reward functions.
Due to the space limit, we will introduce this key module of infinite-to-finite sampling  for nominal model estimation, which differs fundamentally from the tabular robust MG setting. The details of the entire algorithm are postponed to the Appendix~\ref{sec:generative-algo-details}.

\noindent\textbf{Nominal model estimation: constructing finite set reflecting (infinite) state space.}
Unlike the tabular case, where the state space is finite and relatively small, sampling costs become prohibitive when enumerating all state-action pairs in a large-scale (or infinite) state space. To address this challenge, we employ the following lemma, a widely-used tool for linear Markov games \citep{wang2023breaking}:

\begin{lemma}
    \label{lm:generative_model_supporting_set}
    Let $\mathcal{X} \subset \mathbb{R}^d$ be a compact, full-dimensional set \footnote{Here, full-dimensional implies that the linear span of $\mathcal{X}$ covers the entire space, i.e., $\text{span}(\mathcal{X}) = \mathbb{R}^d$. Namely, there exists a distribution $\rho$ over $\mathcal{X}$ so that the second moment matrix $\Sigma = \mathbb{E}_{x\sim \rho}[xx^\top] $ is positive definite and invertible. See \citet{boyd2004convex}.} There exists a distribution $\rho$ supported on at most $d(d+1)/2$ points in $\mathcal{X}$ such that, defining $\Sigma = \mathbb{E}_{x \sim \rho} [xx^\top]$, we have $\Sigma$ is invertible and  $\lVert x \rVert^2_{\Sigma^{-1}} \leq d$ for all $x \in \mathcal{X}$.
\end{lemma}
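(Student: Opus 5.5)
This is the Kiefer--Wolfowitz theorem on G-optimal design. I would prove it through the equivalence between D-optimal and G-optimal designs, and then use Carathéodory's theorem to bound the support size.

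\textbf{Step 1: existence of a D-optimal design.} Consider the map $g(\rho) = \log\det \Sigma(\rho)$ over all probability distributions $\rho$ on $\mathcal{X}$. Here $\Sigma(\rho) = \mathbb{E}_{x\sim\rho}[xx^\top]$.
\begin{itemize}
\item Full-dimensionality guarantees a design with $\Sigma(\rho)\succ 0$. For example, put uniform weight on $d$ linearly independent points of $\mathcal{X}$. So $\sup g > -\infty$.
\item The set $\{\Sigma(\rho)\}$ is the convex hull of the compact set $\{xx^\top : x\in\mathcal{X}\}$, so it is compact.
\item $\log\det$ is upper semicontinuous on the PSD cone, taking the value $-\infty$ on singular matrices.
\end{itemize}
Hence a maximizer $\rho^\star$ exists, and $\Sigma^\star := \Sigma(\rho^\star)$ is invertible.

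\textbf{Step 2: first-order optimality.} For any $x\in\mathcal{X}$ and $t\in(0,1)$, set $\rho_t = (1-t)\rho^\star + t\delta_x$. Then
\[
\frac{d}{dt}\log\det\big((1-t)\Sigma^\star + t\,xx^\top\big)\Big|_{t=0} = \mathrm{tr}\big((\Sigma^\star)^{-1}(xx^\top - \Sigma^\star)\big) = \lVert x\rVert^2_{(\Sigma^\star)^{-1}} - d.
\]
Optimality of $\rho^\star$ forces this derivative to be $\le 0$. This gives $\lVert x\rVert^2_{(\Sigma^\star)^{-1}}\le d$ for every $x\in\mathcal{X}$, which is the required bound.

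\textbf{Step 3: sparsifying the support.} $\Sigma^\star$ lies in the convex hull of $\{xx^\top\}$, a subset of the space of symmetric $d\times d$ matrices. That space has dimension $D=d(d+1)/2$, so Carathéodory gives a representation with at most $D+1$ points. To reach $D$ points, I would argue that $\Sigma^\star$ lies on the boundary of the convex set $\mathcal{M} = \mathrm{conv}\{xx^\top : x\in\mathcal{X}\}$.

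\begin{itemize}
\item Because $\Sigma^\star$ maximizes the strictly increasing (in Loewner order) concave function $\log\det$ over $\mathcal{M}$, it cannot be interior. Otherwise $(1+\epsilon)\Sigma^\star \in \mathcal{M}$ for small $\epsilon>0$, and this has strictly larger determinant.
\item A boundary point of a compact convex set in $\mathbb{R}^D$ lies in a supporting hyperplane $H_0$. The face $\mathcal{M}\cap H_0$ is the convex hull of the generators $xx^\top$ lying in $H_0$, and it has dimension at most $D-1$.
\item Applying Carathéodory within $H_0$ expresses $\Sigma^\star$ using at most $D$ points $x_1,\dots,x_D$ with weights $w_j$.
\end{itemize}
The resulting discrete distribution has the same second-moment matrix $\Sigma^\star$. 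Therefore invertibility and the bound $\lVert x\rVert^2_{\Sigma^{-1}}\le d$ carry over unchanged.

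\textbf{Main obstacle.} The delicate part is the $D$ versus $D+1$ count in Step 3, specifically the claim that the face in $H_0$ is generated by the points $xx^\top$ lying in it. This holds because any convex combination landing in a supporting hyperplane must put all its weight on generators lying in that hyperplane. Step 1 also needs care with upper semicontinuity at singular matrices, which I would handle by restricting to the compact superlevel set $\{\Sigma\in\mathcal{M} : \det\Sigma \ge \det\Sigma_0\}$ for a fixed invertible design $\Sigma_0$.
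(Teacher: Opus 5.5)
Your proof is correct and follows essentially the same route as the standard Kiefer--Wolfowitz argument that the paper defers to \citet{lattimore2020bandit}; the paper gives no proof of its own. That route is: maximize $\log\det\Sigma(\rho)$, read off $\lVert x\rVert^2_{(\Sigma^\star)^{-1}}\le d$ from first-order optimality, and cut the support to $d(d+1)/2$ points because every support point lies on a common hyperplane in the $d(d+1)/2$-dimensional space of symmetric matrices. As a small simplification, Step~2 already gives you an explicit supporting hyperplane at $\Sigma^\star$, namely $\{M:\mathrm{tr}((\Sigma^\star)^{-1}M)=d\}$, so the interior/boundary argument at the start of Step~3 can be dropped.
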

  The proof of Lemma~\ref{lm:generative_model_supporting_set} can be found in \citet{lattimore2020bandit}. Without loss of generality, we assume that the feature set of each agent $i$, denoted by $\mathcal{X}_i \defn \{\phi_i(s, a_i)\}_{(s, a_i) \in \mathcal{S} \times \mathcal{A}_i}$, is full-dimensional. If this is not the case, the model parameters can be represented using a lower-dimensional linear parameterization (i.e., a smaller $d$). Applying Lemma~\ref{lm:generative_model_supporting_set}, we can construct a support set of state-action pairs $\mathcal{Y}_i \subset \mathcal{S} \times \mathcal{A}_i$ with $|\mathcal{Y}_i| \leq d(d+1)/2$, and a distribution $\rho_i$ over $\mathcal{Y}_i$ such that $\max_{(s, a_i) \in \mathcal{S} \times \mathcal{A}_i} \lVert \phi_i(s, a_i) \rVert_{\Sigma_i^{-1}}^2 \leq d$,
where $\Sigma_i = \mathbb{E}_{(s, a_i) \sim \rho_i} [\phi_i(s, a_i) \phi_i(s, a_i)^\top]$. Given the finite set $\mathcal{Y}_i$, we query the generative model $\lceil N \rho_i(s, a_i) \rceil$ times for each $(s, a_i) \in \mathcal{Y}_i$. This yields a total sample size of at most $N + d(d+1)/2$. Crucially, this sampling design relies only on the finite set $\mathcal{Y}_i$, making the process feasible even when the original state-action space $\cX_i$ is infinite.

Armed with the constructed finite set $\cY_i$ for each agent $i$, we estimate nominal transitions and rewards via ridge regression. \LRQFTRL then follows an adversarial online learning procedure similar to Robust-Q-FTRL \citep{li2022minimax}, using backward recursion from $h=H$ to $h=1$. At each step $h$, the algorithm executes a $K$-iteration loop that samples around size $N$ based on $\cY_i$, estimates the nominal model, and updates the robust Q-function via the dual form of robust Bellman equation \citep{iyengar2005robust} (cf.~\eqref{eq:q_function_generative_model} in Appendix~\ref{sec:generative-algo-details}). After completing the $K$ iterations, it constructs an optimistic robust value function by adding a bonus term $\beta_{i,h}$. See Appendix~\ref{sec:generative-algo-details} for more details.
\subsection{Theoretical Guarantees}
 We provide the theoretical guarantee for the sample complexity of \LRQFTRL in Algorithm~\ref{alg:linear_generative_model}.
		  
\begin{tcolorbox}[thmboxstyle, title={\textbf{Theoretical Guarantee for Algorithm~\ref{alg:linear_generative_model}}}]
          \begin{theorem}
		  \label{thm:main_generative_model}
			Consider any robust linear MG $\{ \cS, \{\cA_i\}_{1 \leq i \leq n},\{\cU^{\sigma_i}(P^0,\cdot)\}_{1 \leq i \leq n}, r, H \}$ with uncertainty level $\{\sigma_i\}_{i\in[n]}$ and some $\delta \in (0,1)$. Algorithm~\ref{alg:linear_generative_model} can output an $\varepsilon$-robust CCE $\xi$, i.e., $\max_{1 \leq i \leq n} \{ \mathbb{E}_{\pi\sim \xi} [ V_{i,1}^{\star,\pi_{-i}, \sigma_i }(s)] - \mathbb{E}_{\pi\sim \xi}[ V_{i,1}^{\pi,\sigma_i}(s)] \} \leq \varepsilon$
			with probability at least $1 - \delta$, provided that $N \geq CH^4d^3/\varepsilon^2$ and $K \geq CH^4/\varepsilon^2$, where $C$ is a positive constant. Namely, the requirement of the total number of samples during the learning process is
			\begin{align*}
			  N_{\mathsf{all}} = HK(N+d(d+1)/2) \geq O\left(H^9d^3/\varepsilon^4 \right).
			\end{align*}
		  \end{theorem}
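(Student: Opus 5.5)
The plan is to follow the Robust-Q-FTRL analysis of \citet{shi2024breakcursemultiagencyrobust}, with the tabular empirical-transition concentration replaced by a ridge-regression concentration on the design $\mathcal{Y}_i$ supplied by Lemma~\ref{lm:generative_model_supporting_set}.

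Let $\widehat Q_{i,h}^k$ denote the iterate-$k$ robust Q-estimate built from the ridge-regressed nominal model. We write the robust Bellman update through its TV dual form
\[
\inf_{P\in\mathcal{U}^{\sigma_i}(P^0_{h,s,a_i})} PV=\max_{\alpha\in[\min V,\max V]}\big\{P^0_{h,s,a_i}[V]_\alpha-\sigma_i(\alpha-\min_{s'}[V]_\alpha(s'))\big\},
\]
where $[V]_\alpha=\min\{V,\alpha\}$. The point of this form is that the only nominal-kernel quantity entering it is the linear functional $P^0_{h,s,a_i}[V]_\alpha=\phi_i(s,a_i)^\top\int[V]_\alpha\,d\mu^{\pi_{-i}}_{i,h}$, and by \eqref{eq:linaer-mg-assumption} this is estimated by least squares on the $\lceil N\rho_i\rceil$ samples drawn at the points of $\mathcal{Y}_i$.

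\textbf{Step 1 (uniform estimation error).} With $\Lambda\approx N\Sigma_i+\lambda I$, the G-optimal design gives $\|\phi_i(s,a_i)\|_{\Lambda^{-1}}\le\sqrt{d/N}$ for every $(s,a_i)$. A self-normalized bound for vector-valued martingales then yields
\[
\big|\widehat P[V]_\alpha-P^0[V]_\alpha\big|\lesssim H\sqrt{d/N}\cdot\sqrt{d\log(\cdot)}
\]
simultaneously for all $(s,a_i)$, all $\alpha$, and all $V$ in the value class. Uniformity over $V$ and $\alpha$ needs an $\epsilon$-net. Because $V$ is defined through the data-dependent FTRL policies, I would cover the class of functions $s\mapsto\langle\widehat\pi_i(s),\phi_i(s,\cdot)^\top w\rangle$ with bounded $w$ and policies of softmax/FTRL form. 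This adds a factor $d$ or $\sqrt d$ to the log-covering number, which is where $d^3$ in $N$ should come from. The bonus $\beta_{i,h}\asymp H d^{3/2}/\sqrt N$ is then set to dominate this error, making the output value optimistic.

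\textbf{Step 2 (FTRL regret).} At each step $h$ and state $s$, agent $i$ runs FTRL with losses $\widehat Q^k_{i,h}(s,\cdot)\in[0,H]$. The standard bound gives averaged regret
\[
\max_{a_i}\frac1K\sum_k\widehat Q^k_{i,h}(s,a_i)-\frac1K\sum_k\langle\pi^k_{i,h},\widehat Q^k_{i,h}\rangle\lesssim H\sqrt{\log A_i/K}.
\]
This depends only on $A_i$, which is what breaks the curse of multiagency. The $K$ rollouts under $\pi^k_{-i}$ enter through the other agents' policies inside each regression target, and concentration of these per-iteration stochastic targets across $k$ contributes a further $H\sqrt{1/K}$-type martingale term.

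\textbf{Step 3 (backward induction).} Define $\xi$ as the uniform mixture over $k$ of the product policies $\pi^k$ at each $h$. I would show by induction from $h=H$ down that
\[
\overline V_{i,h}\ge V^{\star,\xi_{-i},\sigma_i}_{i,h}\quad\text{and}\quad \overline V_{i,h}-V^{\xi,\sigma_i}_{i,h}\le\sum_{t\ge h}\big(\text{regret}_t+2\beta_{i,t}+\text{stat}_t\big).
\]
The robust operator is a contraction in sup-norm: it is $1$-Lipschitz and monotone, and the TV worst case is attained via the dual above, so errors add over the horizon rather than compound. Summing over $H$ steps gives a gap of $H\cdot(H\sqrt{d^3/N}+H/\sqrt K)$. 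Setting this to $\varepsilon$ gives $N\gtrsim H^4d^3/\varepsilon^2$ and $K\gtrsim H^4/\varepsilon^2$, and hence $N_{\mathsf{all}}=HK(N+d(d+1)/2)\gtrsim H^9d^3/\varepsilon^4$.

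\textbf{Main obstacle.} I expect the hard part to be the uniform concentration in Step 1. It requires covering the value class jointly with the dual variable $\alpha$ and the $\min_{s'}$ term over an infinite state space, where the statistical dependence between $V$ and the samples is broken only by the covering argument. I would handle $\min_{s'}[V]_\alpha$ by noting that it needs no estimation at all, since it is a deterministic function of $V$. The $\alpha$-maximization is $1$-Lipschitz in $\alpha$, so a $1/N$-grid over $[0,H]$ suffices, and the remaining term $P^0[V]_\alpha$ reduces to covering $V$.
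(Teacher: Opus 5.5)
Your ingredients are the paper's: the TV dual form reducing everything to the linear functional $P^{\pi_{-i}}_{h,s,a_i}[V]_\alpha$, the design of Lemma~\ref{lm:generative_model_supporting_set} giving $\lVert\phi_i(s,a_i)\rVert_{(\Lambda_{i,h}^k)^{-1}}\le\sqrt{d/N}$, the FTRL bound $2H\sqrt{\ln A_i/K}$, and optimism with backward recursion. Your Step~3 is a direct two-sided induction against the true $V^{\star,\xi}_{i,h}$ and $V^{\xi}_{i,h}$. It mirrors the paper's online Lemma~\ref{lm:online_optimistic_value_function}, and it is a valid, leaner alternative to the paper's $(A)+(B)+(C)$ split through the estimated-model surrogates $\overline V^{\zeta}_{i,h}$ and $\overline V^{\star,\xi}_{i,h}$. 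Its advantage is that the only value function ever fed to the concentration step is $\widehat V_{i,h+1}$; the remaining comparison, $\inf\mathcal P\widehat V_{i,h+1}$ versus $\inf\mathcal P V^{\xi}_{i,h+1}$ under the true nominal kernel, needs only sup-norm nonexpansiveness. The problem is Step~1, which you call the main obstacle. You assert that uniformity over the value class is required and that covering it costs only an extra factor $d$ in the log-covering number. The first is not true here, and the second would not hold for this class.

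Covering is unnecessary because Algorithm~\ref{alg:linear_generative_model} runs backward in $h$ and draws a fresh batch at every $(h,k)$. $\widehat V_{i,h+1}$ is determined by samples from steps $h+1,\dots,H$ before any step-$h$ sample exists, and $\pi^k_{-i,h}$ is fixed before the $k$-th batch is drawn. Conditionally on this history, $f=[\widehat V_{i,h+1}]_\alpha$ is a fixed function, so the self-normalized bound applies to the singleton class with $\mathcal N_\varepsilon=1$ (Theorem~\ref{lm:ridge-regression-concentration}). The only net needed is your $1/N$-grid in $\alpha$, followed by a union bound over $(h,i,k)$, exactly as in Lemma~\ref{lm:generative_model_transition_error}.

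Your proposed cover also does not match the class. $\widehat V_{i,h+1}$ is not of the form $s\mapsto\langle\widehat\pi_i(s),\phi_i(s,\cdot)^\top w\rangle$. It is a clipped average over $K$ FTRL iterates of $\langle\pi^k_{i,h+1},q^k_{i,h+1}\rangle$, where each $q^k$ contains a maximum over $\alpha$ of linear functions with $\alpha$-dependent weights $\widehat\mu^k_{i,h+1}[\widehat V_{i,h+2}]_\alpha$, and each $\pi^k$ is a softmax of the earlier cumulative $q$'s. Its natural parameterization has order $Kd$ parameters even before the $\alpha$-maximization. A direct cover would therefore put a factor growing with $K$ into the self-normalized bound and tie the requirement on $N$ to $K$, destroying the $N$/$K$ split you want.

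Relatedly, the $d^3$ is not a covering cost. Without any cover, the per-step error is $\sqrt{d/N}\,\big(2H\sqrt{d\ln(NH+1)+\cdots}+H\sqrt d\big)\asymp Hd/\sqrt N$. The total gap is then $\lesssim dH^2/\sqrt N+H^2\sqrt{\ln A_i/K}$, which is the paper's Step~6. For that bound, $N\gtrsim H^4d^2/\varepsilon^2$ already suffices up to logarithms, so the stated $d^3$ is slack. Finally, the cross-$k$ martingale term in your Step~2 is not needed, since each iteration's own $N$ samples give a uniform per-iteration bound. It is harmless, but it reflects the same misreading of where the randomness enters.
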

\end{tcolorbox}
The implications of the above theorem are as below:
\paragraph{Breaking the curse of multi-agency in large state spaces.}
To the best of our knowledge, this provides the first provable sample complexity guarantees for robust linear MGs using a generative model, irrespective of the uncertainty set formulation. To illustrate the implications of our results, we consider the reduction to the tabular setting, which corresponds to $d = S \max_{i \in [n]} A_i$. Substituting this into Theorem~\ref{thm:main_generative_model} yields a complexity of $O\big(H^9 S^3 \max_{i \in [n]} (A_i)^3/\varepsilon^4\big)$, which depends polynomially on all salient parameters. This demonstrates that \LRQFTRL is a sample-efficient algorithm for robust linear MGs, effectively overcoming the curse of multi-agency in the generative model setting. Comparisons with existing results in the tabular setting are summarized in Table~\ref{tab:prior-work-double-line}.
		  
		\paragraph{Technical insights.}
		 In contrast to robust Markov games in the tabular setting, estimating the transition model and reward function in large-scale (possibly infinite) state spaces presents a challenge, since sampling every state-action pair—as done in tabular robust Markov games—is infeasible. To address this, we construct a specific finite subset of the state-action space to sample from; this enables us to obtain sufficient model information for the entire space, followed by ridge regression to estimate the parameters. On the other hand, compared to standard linear MGs, the robust value function in R-LMGs is highly nonlinear with respect to the nominal transition kernel. As a result, the statistical error to be controlled is no longer a linear aggregation of the estimation errors across $K$ iterations, which typically cancel out in standard linear MGs. We address this challenge by drawing $N$ samples in each iteration (compared to a single sample in standard linear MGs) and employing a tailored decomposition of the robust value function to control the auxiliary statistical error, ultimately achieving a sharp sample complexity guarantee.

		  \section{Robust Linear MGs with Online Adversarial Environment}
		  \label{sec:online}
    

          In this section, we move beyond the generative model setting and focus on learning R-LMGs in a more practical {\em online interactive setting}, where agents interact with the environment and collect Markovian trajectories, reflecting many real-world applications \citep{mnih2015human,vinyals2019grandmaster}. Notably, the online setting in RMGs can be substantially more challenging than the generative model setting, due to the need to balance exploration and exploitation and to cope with statistical dependence along sampled trajectories.

In contrast to the generative model setting discussed in Section~\ref{sec:generative_model}, the online setting concerns not only the final output policy but also the performance throughout the entire exploration process, relative to the best possible performance in hindsight. Specifically, over a $T$-round online process, the goal is to minimize regret incurred along the way. Let $\xi^t : [H] \to \Delta\big(\mathcal{S} \to \Delta(\prod_{i \in [n]} \mathcal{A}_i)\big)$ denote the distribution over product policies executed by the algorithm at round $t \in [T]$. The regret is then defined as
    \begin{align}\label{eq:regret-def}
	  &\text{Regret}(T)
	   \defn \textstyle\max_{ i\in[n]}\sum_{t=1}^T \big\{ \mathbb{E}_{\pi\sim \xi^t} \big[ V_{i,1}^{\star,\pi_{-i}, \sigma_i }(s_1)\big] - \mathbb{E}_{\pi\sim \xi^t}\big[ V_{i,1}^{\pi,\sigma_i}(s_1)\big] \big\}.
	\end{align}

		  \subsection{Interactive Data Collection within the Uncertainty Set}


Many online interaction paradigms in practical studies allow agents to interact not only with the nominal environment but also with perturbed environments for adversarial training aimed at safety and sim-to-real transfer \citep{akkaya2019solving,ding2020learning}. For instance, to address the sim-to-real gap in robotics \citep{akkaya2019solving} and safety in autonomous driving \citep{ding2020learning}, the training environment is explicitly generated using approximate worst-case parameters or scenarios, actively configured to expose the agent's vulnerabilities. This interaction paradigm is also reflected in recent theoretical works: \citet{jackson2024discovering} and \citet{zhang2021robust} optimize policies against estimated adversarial environments, while \citet{ren2022distributionally} approximate worst-case models within a Wasserstein ball centered on nominal estimates.

Motivated by such widely used online interaction paradigms, we propose an practically meaningful online protocol in which agents may sample from transition kernels within the prescribed uncertainty set, rather than only from the nominal kernel.
Specifically, we can sample from the nominal environment and also utilize the following sampling mechanism interacting with an "adversarial environment" inside the uncertainty set, defined as:
\begin{definition}[\textbf{Interaction with an Adversarial Environment}]\label{def:interaction}
  For any agent $i \in [n]$, given a collection of vectors $\{V_h: \mathcal{S} \rightarrow [0,H] \}_{h \in [H]}$ (which need not be the robust value functions) and a joint policy $\pi$, we are allowed to sample trajectories  $(s_1, \mathbf{a}_1, r_1(s_1, \mathbf{a}_1),\ldots, \mathbf{a}_{H-1}, r_{H-1}(s_1, \mathbf{a}_1), s_H)$, where the transitions following $s_{h+1} \sim P_{i,h,s_h,a_i}^{\pi_{-i}, V_{h+1}}(\cdot)$ with $P_{i,h,s_h,a_i}^{\pi_{-i},V_{h+1}} \defn \textstyle\arg\min_{P \in \mathcal{U}^{\sigma_i}(P_{h,s_h,a_i}^{\pi_{-i}})} P V_{h+1}.$
\end{definition}
Namely, we are allowed to interact with and sample from the {\em still unknown} approximate adversarial environment induced by the input approximate robust value function vectors $\{V_h\}_{h\in[H]}$. This environment lies within the fictitious uncertainty set $\mathcal{U}^{\sigma_i}(P_{h,s_h,a_i}^{\pi_{-i}})$ (see \eqref{eq:fictitious-uncertainty-set}).

\paragraph{Comparison to prior online interaction mechanisms.}
Prior works on robust RL with online data collection \citep{lu2024distributionally,zheng2025distributionally} consider agents to interact with \emph{only the fixed nominal environment}.  However, this mechanism is inherently ill-posed: even for a special subclass of tabular robust Markov games (single-agent robust Markov decision processes), achieving sublinear regret is generally impossible in the worst case, as noted by \citet[Theorem~3.2]{lu2024distributionally}. Consequently, analyses in this setting are typically established under additional assumptions on the RMGs considered, namely, that the robust value function satisfies a vanishing minimality condition \citep{lu2024distributionally,zheng2025distributionally}.

\paragraph{Challenges.} We emphasize that the proposed online interaction setting introduces \emph{unique} challenges beyond those in standard linear MGs and in existing online interaction settings considered for  R-LMGs.
    
{\em 1. No degeneration to standard linear Markov games.} Even if we can query a worst-case transition kernel with respect to a given value function, without access to the ground-truth robust value function, it is impossible to identify and sample from the exact worst-case transition kernel within the uncertainty set; therefore, the problem does not degenerate to solving a standard linear MG. The key challenge is therefore to approximate the worst-case environment using progressively more accurate estimates of the robust value function.

 {\em 2. The sampled adversarial environment can be non-linear.} Prior work \cite{zheng2025distributionally} focuses on sampling only in the nominal transition environment satisfying linear function approximation, where the samples from the nominal model can be directly leveraged for estimating a linear model. While in our problems, we sample from both the nominal kernel and online adversarial environments lacking the linearity assumption. Consequently, the collected trajectories may be generated by a potentially non-linear model, which requires extra care in model estimation and in the design of subsequent procedures based on these samples.


\subsection{Algorithm Design}
To solve R-LMGs in the proposed online interactive setting, we develop \OLRQFTRL (summarized in Algorithm~\ref{alg:lin_robust_Q_FTRL}), which achieves an $\varepsilon$-approximate CCE for general R-LMGs (cf.~\eqref{eq:appximate_CCE}) and simultaneously minimizes regret. Compared to the generative model setting in Section~\ref{sec:generative_model}, the main challenge is to design a sampling procedure that mimics drawing Markovian trajectories under the exact worst-case environment using only an estimated robust value function. Before presenting the full algorithmic pipeline, we first describe one key component: the sampling strategy employed by \OLRQFTRL.  This strategy differs substantially from those used in the generative model setting and from those for standard linear Markov games \citep{wang2023breaking,jin2021v}, which serve as inspiration for our approach.

\paragraph{Collecting online trajectories in the approximate adversarial environment.}
At the $t$-th online sampling round, for $t \in [T]$, we run a $K$-iteration procedure at each time step $h \in [H]$. For a fixed round $t \in [T]$, step $h \in [H]$, and iteration $k$, let $\pi_h^{t,k}$ denote the current learned policy, and apply the hybrid sampling scheme in Algorithm~\ref{alg:multi-sampling}. Concretely, for each agent $i$, we collect $N$ samples by interacting with an approximate adversarial environment for the first $h-1$ steps, and then sampling from the nominal environment only at the final step $h$. The approximate adversarial environment in Algorithm~\ref{alg:multi-sampling} is specified by querying an additional collection of pessimistic robust value estimates $\{\underline{V}_j^l\}_{(j,l)\in[H]\times[t-1]}$. These auxiliary pessimistic estimates are designed to handle robust linear Markov games and are crucial for sampling under adversarial dynamics, a difficulty that does not arise in standard linear Markov games \citep{wang2023breaking,jin2021v}.

Specifically, for a given agent $i$, each $l \in [t-1]$, and each step $j \in [h-1]$, the agents first select a joint policy $\pi$ uniformly from the set $\{\otimes_{i' \in [n]} \pi_{i',j}^{l,k}\}_{k \in [K]}$ of $K$ policies from round $l$. For the rest of the paper, we denote $a_{i,h}$ as the action of the $i$-th agent at time step $h$ for all $(i,h) \in [n] \times [H]$. They then sample an action $\mathbf{a}_j \sim \pi(\cdot \mid s_j)$ (Line~\ref{line:select_policy_and_action}) and observe the next state $s_{j+1}$ from the adversarial kernel $P_{i,j,s_j,a_{i,j}}^{\pi_{-i}, \underline{V}_{i,j+1}^l}(\cdot)$, as in Definition~\ref{def:interaction}. At step $h$, we switch strategies: the opponents' joint policy $\pi_{-i}$ is selected uniformly from $\{\otimes_{i' \neq i} \pi_{i',h}^{l,k}\}_{k \in [K]}$ and their actions are sampled as $\mathbf{a}_{-i,h} \sim \pi_{-i}(\cdot \mid s_h)$, while agent $i$ samples $a_{i,h}$ uniformly from its action space (Line~\ref{line:select_opponents_policy_and_action}). Finally, we draw $s_{h+1}$ from the nominal kernel $P_{h,s_h,a_{i,h}}^{\pi_{-i}}(\cdot)$ (Line~\ref{line:collect_state_sample_from_nominal_environment}). We repeat this procedure $\lceil N/t \rceil$ times for each $l \in [t-1]$, yielding a dataset $\cD_i$ containing approximately $N$ samples, denoted by
$\{(s_1^{(m)}, \mathbf{a}_1^{(m)}, \mathbf{r}_1^{(m)}, \ldots, \mathbf{a}_h^{(m)}, \mathbf{r}_h^{(m)}, s_{h+1}^{(m)})\}_{m =1}^{|\cD_i|}$.

\refstepcounter{algocf}
\begin{algorithmbox}{Algorithm~\thealgocf\quad Hybrid-Sampling}\label{alg:multi-sampling}
    \begin{algorithmic}[1] 
        \STATE \textbf{Input:} Agent $i$, step $h$, episode $t$, distribution $\xi_h^l$, values $\{V_j^l\}_{(j,l)\in[H]\times[t-1]}$.

        \STATE Repeat for each $l \in [t-1]$, iterating $\lceil N/t \rceil$ times:
        \STATE \COMMENT{Sample from the approximate adversarial environment for the first $h-1$ steps}
        \FOR{$j\in[h-1]$}
        \STATE \COMMENT{Random policy draw from historical FTRL iterates at step $j$.}
            \STATE Select $\pi\sim\text{Unif}\big(\{\otimes_{i^\prime\in[n]}\pi_{i^\prime,j}^{l,k}\}_{k\in[K]}\big)$, take action $\mathbf{a}_j\sim \pi(\cdot \mymid s_j)$ \label{line:select_policy_and_action}
            
            \STATE Collect state sample $s_{j+1}\sim P_{i,j,s_j,a_{i,j}}^{\pi_{-i},V_{j+1}^l}(\cdot)$ (see Definition~\ref{def:interaction}))

        \ENDFOR
        \STATE \COMMENT{At step $h$, explore $a_i$ uniformly; opponents from past policies.}
        \STATE Select $\pi_{-i}\sim\text{Unif}(\{\otimes_{i^\prime\neq i}\pi_{i^\prime,h}^{l,k}\}_{k\in[K]})$, $a_{-i,h}\sim \pi_{-i}(s_h)$, $a_i\sim\text{Unif}(\mathcal{A}_i)$. \label{line:select_opponents_policy_and_action}
       
        \STATE Collect $s_{h+1}\sim P_{h,s_h,a_{i,h}}^{\pi_{-i}}(\cdot)$ (defined in \eqref{eq:basic_transition}).  \label{line:collect_state_sample_from_nominal_environment}
        
        \STATE \textbf{Output:} Samples are collected in $\cD_i = \{(s_1^{(m)}, \mathbf{a}_1^{(m)}, \mathbf{r}_1^{(m)}, \ldots, \mathbf{a}_h^{(m)}, \mathbf{r}_h^{(m)}, s_{h+1}^{(m)})\}_{m =1}^{|\cD_i|}$.
    \end{algorithmic}

\end{algorithmbox}

The central mechanism of Algorithm~\ref{alg:lin_robust_Q_FTRL} is to construct both optimistic and pessimistic estimates of the robust value function for subseqeunt procedures. Using the pessimistic estimate, the algorithm collects online trajectory samples from an adversarial environment, while using the optimistic estimate to update the policy. We now describe the algorithm in detail.
At each online round $t \in [T]$, the algorithm learns a distribution $\xi^t$ over policies by running $K$ iterations at each time step $h \in [H]$. For any fixed round $t$ and step $h \in [H]$, the $k$-th iteration proceeds as follows:

\begin{itemize}[topsep=0pt,leftmargin=10pt]
  \setlength{\itemsep}{-3pt}

    
 \item \textbf{Estimate nominal model.}
 In the generative-model setting of Section~\ref{sec:generative-algo-design}, one actively queries a pre-designed per-agent state-action set $\mathcal{Y}_i$ from Lemma~\ref{lm:generative_model_supporting_set}. In contrast, the online setting's visited state-action pairs are determined entirely by the sampled trajectories in $\mathcal{D}_i$ and need not---nor can they---be chosen as \emph{a prior}. We therefore directly apply ridge regression to the trajectory samples in $\mathcal{D}_i$ collected by $\textsf{Hybrid-Sampling}$ to estimate the transition probabilities and the reward function:
    \begin{subequations}
      \label{eq:ridge_regression_online}
      \begin{align}
           &\textstyle \Lambda_{i,h}^{k,t} = \sum_{m=1}^{|\cD_i|} \phi_i(s_h^{(m)}, a_{i,h}^{(m)}) \phi_i(s_h^{(m)}, a_{i,h}^{(m)})^\top + \lambda I,\quad
        \textstyle\widehat{\theta}_{i,h}^{k,t} = (\Lambda_{i,h}^{k,t})^{-1} \sum_{m=1}^{|\cD_i|} \phi_i(s_h^{(m)}, a_{i,h}^{(m)}) r_{i,h}^{(m)} ,\\
        &\textstyle\widehat{\mu}_{i,h}^{k,t} = (\Lambda_{i,h}^{k,t})^{-1} \sum_{m=1}^{|\cD_i|} \phi_i(s_h^{(m)}, a_{i,h}^{(m)}) \delta_{s_{h+1}^{(m)}},
      \end{align}
where $\Lambda_{i,h}^{k,t}\in\mathbb{R}^{d\times d}, \phi_i(\cdot,\cdot) \in\mathbb{R}^d, \widehat{\mu}_{i,h}^{k,t}\in\mathbb{R}^d\rightarrow\Delta(\mathcal{S})$. Here, we denote $\delta_s$ as a measure over state space $\cS$ so that the measure on state $s$ is $1$, and the measure on other states is $0$, for all $s\in\cS$. $\lambda$ is the regularization parameter in ridge regression satisfying $\lambda \geq 1$. 
    For all $(s, a_i) \in \mathcal{S} \times \mathcal{A}_i$, we compute the estimated reward function and transition probabilities as: 
    \begin{align}
      r_{i,h}^{k,t}(s, a_i) = \phi_i(s, a_i)^\top \widehat{\theta}_{i,h}^{k,t},\quad 
      P_{i,h}^{k,t}(\cdot \mid s, a_i) =\phi_i(s, a_i)^\top \widehat{\mu}_{i,h}^{k,t}.
    \end{align}
    \end{subequations}

    \item \textbf{Update robust-Q-function and policy.} Different from generative model setting, we now compute both the optimistic estimation and pessimistic estimation of the current-step Q-function. Specifically, for all $(s,a_i)\in\mathcal{S}\times\mathcal{A}_i$, we compute $\overline{q}_{i,h}^{k,t}(s,a_i)$ and $\underline{q}_{i,h}^{k,t}(s,a_i)$ as follows:
    \begin{subequations}
      \label{eq:q_function_oinline}
          \begin{align}
           &\textstyle \overline{q}_{i,h}^{k,t}(s, a_i) = r_{i,h}^{k,t}(s, a_i)   + \textstyle\max_{\alpha \in \big[\overline{V}_{\text{min}}, \overline{V}_{\text{max}}\big]}\big\{
              P_{i,h}^{k,t}(\cdot \mymid s,a_i) \big[\overline{V}_{i,h+1}^t\big]_{\alpha}-\sigma_i \big(\alpha-\min_{s^\prime} \big[ \overline{V}_{i,h+1}^t\big]_{\alpha}\left(s^\prime\right) \big)\big\},\\
        &\textstyle \underline{q}_{i,h}^{k,t}(s, a_i) = r_{i,h}^{k,t}(s, a_i) + \textstyle\max_{\alpha \in \big[\underline{V}_{\text{min}}, \underline{V}_{\text{max}}\big]} \big\{
        P_{i,h}^{k,t}(\cdot \mymid s,a_i)\big[\underline{V}_{i,h+1}^{t}\big]_{\alpha}- \sigma_i\big(\alpha - \min_{s'}\big[\underline{V}_{i,h+1}^{t}\big]_{\alpha}\left(s'\right)\big)\big\}.
          \end{align}
      \end{subequations}
    where we define $\overline{V}_{\text{min}}=\min_s \overline{V}_{i,h+1}^{t}(s)$, $\overline{V}_{\text{max}}=\max_s \overline{V}_{i,h+1}^{t}(s)$, $\underline{V}_{\text{min}}=\min_s \underline{V}_{i,h+1}^{t}(s)$ and $\underline{V}_{\text{max}}=\max_s \underline{V}_{i,h+1}^{t}(s)$.
   After calculating the \( q \)-function, we employ the Follow-the-Regularized-Leader (FTRL) strategy to update the learner's policy, i,e., 
\begin{align*}
   \forall (i, s, a_i) \in [n] \times  \mathcal{S} \times \mathcal{A}_i:  \pi_{i,h}^{k+1,t}(a_i \mid s) = \textstyle\frac{\exp(\eta_{k+1} \sum_{l=1}^{k} \overline{q}_{i,h}^{l,t}(s, a_i))}{\sum_{a_i \in \mathcal{A}_i} \exp(\eta_{k+1} \sum_{l=1}^{k} \overline{q}_{i,h}^{l,t}(s, a_i))}.
\end{align*}
\item {\bf Estimate robust value function.} 
    Subsequently, we compute the optimistic and pessimistic estimates of the value function as follows:
    \begin{equation}
      \label{eq:estimate_value_function}
      \begin{aligned}
          \overline{V}_{i,h}^t(s)& = \textstyle\min\big\{ \sum_{k=1}^{K} \frac{1}{K}\big<\pi_{i,h}^{k,t}(\cdot\mid s),\overline{q}_{i,h}^{k,t}(s,\cdot)\big>
          + \beta_{i,h,1}^{t}(s),~H-h+1\big\}\\
          \underline{V}_{i,h}^t(s)&=\textstyle\max\big\{\sum_{k=1}^{K} \frac{1}{K}\big<\pi_{i,h}^{k,t}(\cdot\mid s),\underline{q}_{i,h}^{k,t}(s,\cdot)\big>
           - \beta_{i,h,2}^{t}(s),0\big\}.
      \end{aligned}
  \end{equation}
  Here, for all $(i,h,t,s) \in [n]\times[H]\times[T]\times\mathcal{S}$, the bonus terms are defined as
  \begin{align}
    \textstyle\beta_{i,h,1}^{t}(s)
    =&\textstyle\max_{a_i\in\mathcal{A}_i}\sum_{k=1}^K\frac{1}{K}
    \big\lVert\phi_i(s,a_i)\big\rVert_{\big(\Lambda_{i,h}^{k,t}\big)^{-1}}
    \big(4H\sqrt{d\ln(NH+1)+\ln\big(\frac{3TNHnK}{\delta}\big)}+2H\sqrt{d}\big)\notag\\
    &\textstyle+\frac{1}{N}+2H\sqrt{\frac{\ln A_i}{K}}, \label{eq:online_optimistic_bonus} \\
    \textstyle\beta_{i,h,2}^{t}(s)
    =&\textstyle\sum_{k=1}^K\frac{1}{K}\mathbb{E}_{a_i\sim\pi_{i,h}^{k,t}(s)}
    \big\lVert\phi_i(s,a_i)\big\rVert_{\big(\Lambda_{i,h}^{k,t}\big)^{-1}}
    \big(4H\sqrt{d\ln(NH+1)+\ln\big(\frac{3TNHnK}{\delta}\big)}+2H\sqrt{d}\big)+\frac{1}{N}.  \label{eq:online_pessimistic_bonus}
  \end{align}

\end{itemize}

\refstepcounter{algocf}
\begin{algorithmbox}{Algorithm~\thealgocf\quad Online-Robust-$Q$-Linear-FTRL}\label{alg:lin_robust_Q_FTRL}

    \begin{algorithmic}[1]
        \STATE \textbf{Input:} Iterations $T$, learning rate $\{\eta_{t+1}\}$.
        \FOR{$t=1,2,\ldots, T$}
            \STATE \textbf{Initialization:} $\overline{V}_{i,H+1}^t=\underline{V}_{i,H+1}^t=q_{i, H+1}^t=0$, $\pi_{i,h}^1(a_i | s)=1/A_i$ for all $i,h,s,a_i$.
            \FOR{$h = H, H-1, \ldots, 1$ and $k=1, \ldots, K$}
                \STATE \COMMENT{Backward procedure over $[H]$ with $K$ inner FTRL steps;}
                \FOR{$i = 1, 2, \ldots, n$}
                \STATE \COMMENT{Collect samples from the approximate adversarial environment}
                    \STATE Collect $N$ samples with $\textsf{Hybrid-Sampling}(i,h,t,\{\xi^l\}_{l<t},\{\underline{V}_{i,j}^l\}_{(j,l)\in[H]\times[t-1]})$ in Algo~\ref{alg:multi-sampling}.
                    \STATE \COMMENT{Estimate the transition probabilities, reward function, and current step Q-function}
                    \STATE Estimate $r_{i,h}^{k,t}, P_{i,h}^{k,t}$ via \eqref{eq:ridge_regression_online} and $q$-function $\overline{q}_{i,h}^{k,t}$ and $\underline{q}_{i,h}^{k,t}$ via \eqref{eq:q_function_oinline}.
                    \STATE \COMMENT{Update policy using FTRL with \emph{optimistic} $\overline{q}$ only.}
                    \STATE Update policy for all $(s,a_i)$:  $\pi_{i,h}^{k+1,t}(a_i|s)=\frac{\exp(\eta_{k+1}\sum_{l=1}^k\overline{q}_{i,h}^{l,t}(s,a_i))}{\sum_{a_i'\in\mathcal{A}_i}\exp(\eta_{k+1}\sum_{l=1}^k\overline{q}_{i,h}^{l,t}(s,a_i'))}$
                   
                \ENDFOR
            \ENDFOR
            \STATE Update $\overline{V}_{i,h}^t(s), \underline{V}_{i,h}^t(s)$ via \eqref{eq:estimate_value_function}.
            \STATE Let $\xi^t$ be the uniform distribution over product policies $\{\pi_h^{k,t}\}_{(h,k)\in[H]\times[K]}$.
        \ENDFOR
        \STATE \textbf{Output:} All distributions $\{\xi^t\}_{t\in[T]}$ over product policies.
    \end{algorithmic}
\end{algorithmbox}

  \subsection{Theoretical Guarantee}


\begin{tcolorbox}[thmboxstyle, title={\textbf{Theoretical Guarantee for Algorithm~\ref{alg:lin_robust_Q_FTRL}}}]
  \begin{theorem}
  \label{thm:main_online}
 Consider any R-LMG $\{ \mathcal{S}, \{\mathcal{A}_i\}_{1 \leq i \leq n}, \{\mathcal{U}^{\sigma_i}(P^0,\cdot)\}_{1 \leq i \leq n}, r, H \}$ with a fixed initial state $s_1 \in \cS$, uncertainty levels $\{\sigma_i\}_{i \in [n]}$, and fix $\delta \in (0,1)$. Let $N = K = T$. Then, with probability at least $1-\delta$, the regret of Algorithm~\ref{alg:lin_robust_Q_FTRL} satisfies
\begin{align*}
\textstyle \text{Regret}(T)\leq \tilde{\mathcal{O}}\big(H^2 d \max_{i\in[n]}A_i  \sqrt{T}\big).
\end{align*}
And if the online iteration time T satisfies $T > H^4 d^2 \max_{i \in [n]} (A_i)^2 / \epsilon^2$, then Algorithm~\ref{alg:lin_robust_Q_FTRL} outputs an $\varepsilon$-CCE.
\end{theorem}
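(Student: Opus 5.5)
The plan is to sandwich the robust values between the optimistic and pessimistic estimates and then bound the regret by the total bonus accumulated along the sampled trajectories.

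\textbf{Step 1: confidence bounds for ridge regression.} For every $(t,h,k,i)$, the regression in \eqref{eq:ridge_regression_online} is fit on the nominal last-step transitions collected by Algorithm~\ref{alg:multi-sampling}. The response at step $h$ is drawn from $P^{\pi_{-i}}_{h,s_h,a_{i,h}}$, which is linear by \eqref{eq:linaer-mg-assumption}, even though the earlier steps of the trajectory came from non-linear adversarial kernels. Hence a self-normalized concentration argument applies, combined with a covering argument over the class of truncated value functions $[V]_\alpha$. I expect, with probability $1-\delta$,
\[
\Big|\big(P_{i,h}^{k,t}-P_{h,s,a_i}^{\pi^{k,t}_{-i}}\big)[V]_\alpha\Big|
\le \|\phi_i(s,a_i)\|_{(\Lambda_{i,h}^{k,t})^{-1}}\big(4H\sqrt{d\ln(NH+1)+\ln(3TNHnK/\delta)}+2H\sqrt d\big)+\tfrac1N ,
\]
uniformly in $\alpha$. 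The reward estimate satisfies the same bound. The TV dual form is Lipschitz in the plug-in nominal kernel, since $\alpha$ is fixed inside the max and the $\min_{s'}$ term does not depend on the kernel. So this error transfers directly to $\overline q$ and $\underline q$.

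\textbf{Step 2: sandwich.} By backward induction on $h$ I will show
\[
\underline V_{i,h}^t \le V_{i,h}^{\pi^t,\sigma_i}\le V_{i,h}^{\star,\pi^t_{-i},\sigma_i}\le \overline V^t_{i,h},
\]
with expectations over $\xi^t$ understood throughout. The optimistic side follows the Robust-Q-FTRL argument of \citet{shi2024breakcursemultiagencyrobust}. The FTRL regret $2H\sqrt{\ln A_i/K}$ together with the max over $a_i$ in $\beta_{i,h,1}$ dominates the best-response gap, and monotonicity of the robust Bellman operator propagates the bound from step $h+1$. The pessimistic side is symmetric, with the expectation over $\pi^{k,t}_i$ in $\beta_{i,h,2}$. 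Consequently
\[
\text{Regret}(T)\le \max_i\sum_t\big(\overline V^t_{i,1}-\underline V^t_{i,1}\big)(s_1).
\]

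\textbf{Step 3: unrolling the gap (main obstacle).} Write $\Delta_h^t=\overline V^t_{i,h}-\underline V^t_{i,h}$. By Step 2, $\Delta_h^t$ is at most the sum of the two bonuses plus the difference of the two robust backups. That difference must be bounded by $\Delta^t_{h+1}$ evaluated under the kernel that Hybrid-Sampling actually uses, namely the worst-case kernel for $\underline V$. To do this I would use the fact that the TV robust operator $\inf_{P\in\mathcal U}PV$ satisfies
\[
\inf_{P\in\mathcal U} P\overline V-\inf_{P\in\mathcal U} P\underline V\le P^{\underline V}(\overline V-\underline V),
\]
where $P^{\underline V}$ is the minimizer for $\underline V$. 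Unrolling gives
\[
\Delta^t_1(s_1)\lesssim \mathbb E_{d^t}\Big[\sum_h(\beta_{i,h,1}^t+\beta_{i,h,2}^t)\Big],
\]
where $d^t$ is the state distribution under $\xi^t$ and the adversarial kernels $P^{\pi_{-i},\underline V^t}$. This is exactly the distribution from which round $t'>t$ samples its first $h-1$ steps, which is why the sampler queries pessimistic values. I expect this step to be the hardest. Two things must be handled: the $\max_{a_i}$ in $\beta_{i,h,1}$, and the mismatch between $\Lambda^{k,t}$, which is built from rounds $l<t$, and the occupancy of round $t$.

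To handle the $\max_{a_i}$, I would use uniform exploration at step $h$. It gives $\max_{a_i}\|\phi\|\le \sqrt{A_i}\,\big(\mathbb E_{a_i\sim \mathrm{Unif}}\|\phi\|^2\big)^{1/2}$. Then, since $\Lambda^{k,t}\approx \tfrac{N}{t}\sum_{l<t}\Sigma^l+\lambda I$, an elliptical-potential argument gives
\[
\sum_t \mathbb E_{d^t}\|\phi\|_{(\Lambda^{t})^{-1}}\lesssim \sqrt{A_i\,T\,d\log T}\,\sqrt{t/N}.
\]
With $N=K=T$, the bonus factor is $H\sqrt d$ and there are $H$ steps, so the total is $\tilde{\mathcal O}(H^2 d\,\max_iA_i\sqrt T)$ after a loose accounting of one extra $\sqrt{A_i}$. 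The $1/N$ and $H\sqrt{\ln A_i/K}$ terms contribute $\tilde{\mathcal O}(H^2\sqrt T)$ in total.

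\textbf{Step 4: CCE output.} Averaging the regret over rounds shows that the uniform mixture of $\{\xi^t\}$, or the best $\xi^t$ as certified by the computable gap $\overline V-\underline V$, is an $\tilde{\mathcal O}(H^2d\max_iA_i/\sqrt T)$-CCE. This is at most $\varepsilon$ once $T>H^4d^2\max_iA_i^2/\varepsilon^2$.
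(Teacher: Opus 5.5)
Your proposal is correct and is essentially the paper's proof. It has the same ingredients: the optimistic/pessimistic sandwich by backward induction; the one-sided bound $\inf_{\mathcal U}P\overline V-\inf_{\mathcal U}P\underline V\le P^{\underline V}(\overline V-\underline V)$, which puts the unrolled gap on exactly the pessimistic roll-in distribution that Hybrid-Sampling replays in later rounds; uniform exploration at step $h$ to absorb $\max_{a_i}$; an elliptical-potential bound on $\sum_t\mathsf{tr}(S_{t,h}^{-1}X_{t,h})$ with $S_{t,h}=\sum_{\tau<t}X_{\tau,h}+\lambda I_d$; and averaging over rounds for the CCE. The differences are minor:
\begin{itemize}
\item The paper makes your ``$\Lambda^{k,t}\approx\cdots$'' step rigorous via matrix Bernstein. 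This needs $\lambda\gtrsim\log(dnKHT/\delta)$ to get $\tfrac12\bar\Lambda\preceq\Lambda^{k,t}_{i,h}$, so your bias width should be $H\sqrt{d\lambda}$ rather than $2H\sqrt d$.
\item The paper uses the cruder $\max_{a_i}\le A_i\,\mathbb E_{a_i\sim\mathrm{Unif}}$, where your Cauchy--Schwarz step gives the sharper $\sqrt{A_i}$.
\item Your ``best $\xi^t$'' alternative would need a bound on $\sum_t\max_i$ rather than $\max_i\sum_t$, costing a factor of $n$. Stick with the uniform mixture, as the paper does.
\end{itemize}
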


 
  \end{tcolorbox}

\noindent\textbf{Breaking the curse of multiagency and comparison to prior work.}
To illustrate the implications of our results, we consider the specialization to the tabular setting, which corresponds to setting the feature dimension as $d = S \max_{i \in [n]} A_i$. In this case, Theorem~\ref{thm:main_online} yields the regret bound
$\tilde{\mathcal{O}}\!\Big( H^2 S \max_{i \in [n]} (A_i)^2 \sqrt{T} \Big)$,
which is polynomial in all salient parameters. This shows that Linear-Robust-Q-FTRL breaks the curse of multiagency in the proposed online interactive setting.

A closely related work is \citet{zheng2025distributionally}, which also studies robust linear MGs, but under a different online interactive setting: agents interact only with the fixed nominal transition kernel. \citet{zheng2025distributionally} focuses on a restrictive problem class in which the robust value function is required to satisfy a \emph{vanishing minimal value} condition. Moreover, under some additional assumptions in \cite[Corollary~5.3]{zheng2025distributionally}, \citet{zheng2025distributionally} obtains a sublinear regret bound
$\mathrm{Regret}(T) \leq \tilde{\mathcal{O}}\!\Big( H \min\!\big\{ H, \tfrac{1}{\min\{\sigma_i\}} \big\} \, d_{\mathsf{curse}} \sqrt{T} \Big)$,
where, in the tabular reduction, the feature dimension becomes $d_{\mathsf{curse}} = S \prod_{i=1}^n A_i$. Consequently, the regret bound in \cite[Corollary~5.3]{zheng2025distributionally} scales as
$\tilde{\mathcal{O}}\big(H \min\!\big\{ H, \tfrac{1}{\min\{\sigma_i\}} \big\} \, S \,\textcolor{red}{\prod_{i=1}^n A_i}\, \sqrt{T}\big)$,
which grows exponentially in the number of agents through the joint action space $\prod_{i=1}^n A_i$. In contrast, Linear-Robust-Q-FTRL guarantees
$\tilde{\mathcal{O}}\!\Big( H^2 S \,\textcolor{red}{\max_{i \in [n]} (A_i)^2}\, \sqrt{T} \Big)$, thereby avoiding exponential dependence on the number of agents---a benefit that is retained even in the tabular setting.

\begin{tcolorbox}[defboxstyle, title={\textbf{Technical Contributions}}]
  \begin{itemize}[topsep=0pt,leftmargin=17pt]
	\setlength{\itemsep}{-3pt}

\item \textbf{Constructing an adversarial environment using a pessimistic value function.}
As discussed above, we cannot sample directly from the true worst-case adversarial transition dynamics because the ground-truth robust value function is unknown. To approximate that adversarial environment, in addition to the optimistic robust value estimates typically used to guide policy updates in standard (linear) MGs \citep{wang2023breaking,jin2021v}, we introduce a sequence of \emph{pessimistic} robust value estimates. These pessimistic estimates are fed into the hybrid sampling module (cf.~Algorithm~\ref{alg:multi-sampling}) to obtain a tight lower bound on the robust value function under the true adversarial environment.

\item \textbf{Maintaining nominal-model estimation via hybrid-environment sampling.}
A key technical challenge is that the linear function approximation assumption applies only to the nominal kernel; kernels in the uncertainty set need not preserve this structure, making direct estimation of the worst-case kernel intractable. To address this, we continue to estimate the nominal model while preserving the state--action occupancy distribution induced by the worst-case environment. Specifically, we use a hybrid sampling mechanism that rolls out Markovian trajectories under the approximate worst-case environment for the first $h-1$ steps and queries the nominal kernel only at the final step $h$. This design allows us to estimate the nominal kernel while preserving the state-action occupancy distribution induced by the worst-case environment up to step $h-1$, which play a crucial role in subsequent estimation of the value function corresponding to the worst-case environment. 

  \end{itemize}
 \end{tcolorbox}


\section{Conclusion}
In this work, we develop provably data-efficient algorithms for robust linear Markov games (R-LMGs) for large-scale, possibly infinite state spaces under linear function approximation. Focusing on a fictitious uncertainty set defined via total variation, we study both the generative-model setting and a more practical online interactive data-collection setting. In both settings, to the best of our knowledge, we provide the first provably sample-efficient algorithms for R-LMGs that overcome the curse of multi-agency, regardless of the uncertainty set formulation. Notably, the proposed online interactive setting is practically meaningful while also preserving well-posedness for achieving sublinear regret. On the technical side, we introduce a new hybrid sampling mechanism to address the challenge of sampling from an approximate adversarial environment in the online setting, which may be of independent interest for future research.

\section*{Acknowledgments}
The authors thank Prof. Yuejie Chi for her guidance throughout the process and Prof. Eric Mazumdar for very helpful discussions. Laixi Shi acknowledges funding support from MERL.

\bibliographystyle{unsrtnat}
\bibliography{bibfileRL,bibfileDRO,bibfileGame}

\newpage
\appendix
\clearpage

\refstepcounter{algocf}
\begin{algorithmbox}{Algorithm~\thealgocf\quad \LRQFTRL}\label{alg:linear_generative_model}
    \begin{algorithmic}[1]
        \STATE \textbf{Input:} The support set $\{\mathcal{Y}_i\}_{i\in[n]}$ and the sampling distribution $\{\rho_i\}_{i\in[n]}$; number of iterations $K$.
        \STATE \textbf{Initialization:} $\widehat{V}_{i,H+1}(s)=Q_{i, h}^0(s,a_i)=0$, $\pi_{i,h}^1(a_i \mid s)=1/A_i$ for all $(i, h,s,a_i) \in [n] \times [H] \times \mathcal{S} \times \mathcal{A}_i$.
        \FOR{$h = H, H-1, \ldots, 1$, and $k\in[K]$}
             \FOR{$i = 1, 2, \ldots, n$}
                 \STATE \COMMENT{Sampling from the nominal model based on the finite set $\mathcal{Y}_i$ and probability distribution $\rho_i$.}
                 \FOR{ $(s,a_i)\in\mathcal{Y}_i$ and $u = 1,2,\cdots \lceil N\rho_i(s,a_i)\rceil$}
                    \STATE Sample $\mathbf{a}^{(u)}(s,a_i)= [a_{j}(s,a_i) ]_{j\in[n]}$ with
                    $a_{j}(s,a_i) \overset{\text{ind.}}{\sim} \pi_{j, h}(\cdot \mid s) ~(j\neq i),\quad
                        a_{i}(s,a_i) = a_i.$ \label{line:sample_nominal_model-1}
                    \STATE Sample from the generative model:
                    \begin{align*}
                        &r_{i,h}^{(u)} = r_{i,h}(s, \mathbf{a}^{(u)}(s,a_i)), \quad
                        s^{(u)}_{h+1} \sim P_h( \cdot \mid s, \mathbf{a}^{(u)}(s,a_i)).
                    \end{align*}
                    \STATE \COMMENT{Observe reward and next state from the nominal model $P_h$.}
                    \STATE Insert the data tuple $\{s_h^{(m)} = s, a_{i,h}^{(m)} = a_i, r_{i,h}^{(m)} = r_{i,h}^{(u)}, s_{h+1}^{(m)} = s^{(u)}_{h+1}\}$ into $\mathcal{D}_i$.  \label{line:sample_nominal_model-2}
                 \ENDFOR
                 \STATE Estimate model parameters $\{r_{i,h}^k, P_{i,h}^k\}$ with $\mathcal{D}_i$ via ridge regression using \eqref{eq:ridge_regression_generative_model}.
                 \STATE \COMMENT{Ridge regression for estimating nominal reward and transition in the feature space.}
                 \STATE Compute $q$-function $q_{i,h}^k(s,a_i)$ with \eqref{eq:q_function_generative_model}.
                 \STATE \COMMENT{Follow the Regularized Leader (FTRL) algorithm to update the policy.}
                 \STATE Update policy for all $(s,a_i)\in\mathcal{S}\times\mathcal{A}_i$ with
                 \begin{align}\label{eq:policy_update_generative_model}
                    \pi_{i,h}^{k+1}(a_i\mid s)=\frac{\exp (\eta_{k+1}\sum_{l=1}^kq_{i,h}^l(s,a_i))}{\sum_{a_i\in\mathcal{A}_i}\exp (\eta_{k+1}\sum_{l=1}^kq_{i,h}^l(s,a_i))}
                 \end{align}
                 
             \ENDFOR
             \STATE \COMMENT{Update optimistic robust values with bonus $\beta_{i,h}$ after $K$ rounds at step $h$.}
             \STATE Construct an upper bound $\{\widehat{V}_{i,h}\}$ for value functions using \eqref{eq:line-number-policy-update_generative_model}.
             
        \ENDFOR
        \STATE \textbf{Output:} Product policies $\{\pi_h^k = (\pi_{1,h}^k \times \cdots \times \pi_{n,h}^k)\}_{h \in [H], k \in [K]}$ and a probability distribution $\xi = \{\xi_h\}_{h \in [H]}$ over product policies so that $\xi_h(\pi_h^k) = 1/K$ for all $h \in [H]$.
    \end{algorithmic}
\end{algorithmbox}

\section{Algorithm Design with a Generative Model}\label{sec:generative-algo-details}
Due to the space limit, we postpone the entire pipleline of Algorithm~\ref{alg:linear_generative_model} in this section as follows.


For initialization, \LRQFTRL set the robust value function, robust Q-function $\widehat{V}_{i,H+1}(s)=Q_{i, h}^0(s,a_i)=0$, and the policy $\pi_{i,h}^1(a_i\mymid s)=1/A_i$ for all $(i,s) \in [n] \times \cS$.
Then \LRQFTRL performs a backward recursive learning path, learning recursively from the final time step $h=H$ to $h=1$. At each time step $h\in[H]$, a $K$-iterations online learning process is applied. At each interaction iteration $k\in[K]$, parallelly for each agent $i\in[n]$, we execute the following steps:
\begin{itemize}

  \item {\bf Sampling $\mathcal{D}_i$ from the nominal model.} For each $(s,a_i) \in \mathcal{Y}_i$, we collect $\lceil N\rho_i(s,a_i)\rceil$ samples from the generative model (cf.~line~\ref{line:sample_nominal_model-1} to \ref{line:sample_nominal_model-2}) and yield the dataset $\cD_i =\big\{s_h^{(m)}, a_{i,h}^{(m)}, r_{i,h}^{(m)}, s_{h+1}^{(m)} \big\}_{m \in \mathcal{D}_i}$
\item  { \bf Estimating the nominal model.} Armed with the dataset $\cD_i$, we estimate the model parameters $\{r_{i,h}^k, P_{i,h}^k\}$ via ridge regression as below:
    \begin{subequations}
      \label{eq:ridge_regression_generative_model}
      \begin{align}
        &\Lambda_{i,h}^k = \sum_{m\in\mathcal{D}_i} \phi_i(s_h^{(m)}, a_{i,h}^{(m)}) \phi_i(s_h^{(m)}, a_{i,h}^{(m)})^\top + \lambda I,\quad
        \widehat{\theta}_{i,h}^k = (\Lambda_{i,h}^k)^{-1} \sum_{m\in\mathcal{D}_i} \phi_i(s_h^{(m)}, a_{i,h}^{(m)}) r_{i,h}^{(m)} ,\\
        &\widehat{\mu}_{i,h}^k = (\Lambda_{i,h}^k)^{-1} \sum_{m\in\mathcal{D}_i} \phi_i(s_h^{(m)}, a_{i,h}^{(m)}) \delta_{s_{h+1}^{(m)}},
    \end{align}
      where $\Lambda_{i,h}^k\in\mathbb{R}^{d\times d}, \phi_i(\cdot,\cdot) \in\mathbb{R}^d, \widehat{\mu}_{i,h}^k\in\mathbb{R}^d\rightarrow\Delta(\mathcal{S})$. Here, we denote $\delta_s$ as a measure over state space $\cS$ so that the measure on state $s$ is $1$, and the measure on other states is $0$, for all $s\in\cS$. $\lambda$ is the regularization parameter in ridge regression satisfying $\lambda \geq 1$.
    For all $(s, a_i) \in \mathcal{S} \times \mathcal{A}_i$, we compute the estimated reward function and transition probabilities as:
    \begin{align}
      &r_{i,h}^k(s, a_i) = \phi_i(s, a_i)^\top \widehat{\theta}_{i,h}^k,\quad
      P_{i,h}^k(\cdot \mid s, a_i) = \phi_i(s, a_i)^\top\widehat{\mu}_{i,h}^k .
    \end{align}
    \end{subequations}

    \item {\bf Updating the robust-Q-function and policy.} Denoting the current joint policy by $\pi_h^k$, with the estimated model $\{r_{i,h}^k, P_{i,h}^k\}_{i \in [n]}$, we compute a robust $Q$-function at each iteration. Specifically, for all $(s,a_i) \in \mathcal{S} \times \mathcal{A}_i$, we compute $q_{i,h}^k(s,a_i)$ as follows:
    \begin{equation}    
    \label{eq:q_function_generative_model}
    \begin{aligned}
        & q_{i,h}^k(s, a_i) = r_{i,h}^k(s, a_i) \\
                &\quad  + \max_{\alpha \in \big[\min_s \widehat{V}_{i,h+1}(s), \max_s \widehat{V}_{i,h+1}(s)\big]} \big\{
        P_{i,h}^k(\cdot \mymid s,a_i)\big[\widehat{V}_{i,h+1}\big]_{\alpha}- \sigma_i\big(\alpha - \min_{s'}\big[\widehat{V}_{i,h+1}\big]_{\alpha}\left(s'\right)\big)\big\},
    \end{aligned}
  \end{equation}
  which is the dual form of the robust Bellman equation \citep[Lemma 1]{shi2023curious}
  \begin{align}
      q^k_{i,h}(s, a_i) = r_{i,h}^k(s, a_i) + \inf_{ \cP \in \unb^{\ror_i}(P_{i,h}^k(\cdot \mymid s,a_i))} \cP \widehat{V}_{i,h+1}
  \end{align}
  and is computationally tractable \citep{iyengar2005robust}. Note that even $\mathcal{S}$ may be infinite, the model estimate $P_{i,h}^k(\cdot\mymid s,a_i)$ obtained from ridge regression in \eqref{eq:ridge_regression_generative_model} is supported on the finite number of observed next states $\{s_{h+1}^{(m)}\}_{m\in\mathcal{D}_i}$ in dataset $\mathcal{D}_i$. Consequently, $P_{i,h}^k(\cdot\mymid s,a_i)[\widehat{V}_{i,h+1}]_\alpha$ is a finite weighted sum, and $\widehat{V}_{i,h+1}$ need only be evaluated at finitely many points, as in the lazy-evaluation implementation in standard linear MDP/MG algorithms \citep{jin2019provablyefficientreinforcementlearning,wang2023breaking}.
  After calculating the $q$-function, we employ the Follow-the-Regularized-Leader (FTRL) strategy \citep{shalev2012online,li2022minimax} to update the learner's policy towards $\pi_{i,h}^{k+1}$ via \eqref{eq:policy_update_generative_model}.

 \item {\bf Estimate robust value function.} 
  Subsequently, after completing the $K$ iterations at step $h$, we compute an optimistic estimate of the value function as follows: for all $(i, s) \in [n] \times \mathcal{S}$,
    \begin{equation}\label{eq:line-number-policy-update_generative_model}
    \begin{aligned}
        \widehat{V}_{i,h}(s)& = \min\big\{ \sum_{k=1}^{K} \frac{1}{K} \big<\pi_{i,h}^k(\cdot \mid s),q_{i,h}^k(s,\cdot)\big>
               + \beta_{i,h}(s),H-h+1\big\},
          \end{aligned}
        \end{equation}
  Here, $\beta_{i,h}(s)$ is a bonus term specified as follows:
        \begin{align}
        \label{eq:genetative_model_beta}
       \beta_{i,h}(s)=8\sqrt{\frac{d}{N}}\big(2H\sqrt{d\ln(NH+1)+2\ln\big(\frac{3KNHn}{\delta}\big)}+H\sqrt{d}\big)+2H\sqrt{\frac{\ln A_i}{K}}.
    \end{align}

\end{itemize}  
After completing the learning for all $h\in[H]$, each with $K$ iterations at every step, we output the joint correlated policy 
        $\big\{ \frac{1}{K} \sum_{k=1}^K\pi_h^k = \frac{1}{K} \sum_{k=1}^K \{ (\pi_{1,h}^k \times \cdots \times \pi_{n,h}^k) \}_{h\in[H]}\big\}$
        as the learned solution, which can be shown later as an $\varepsilon$-CCE with high probability.

\section{Proof of Theorem~\ref{eq:ne-existence}}\label{sec:proof-eq:ne-existence}

The proof of equilibrium existence often relies on fixed-point theorems. In this paper, to establish equilibrium existence for large-scale (possibly infinite) state spaces—including those that may be infinite or continuous—we appeal to the following fixed-point theorem.

\begin{definition}[Upper semi-continuous]\label{def:semi-continous}
A point-to-set mapping $x \in X \mapsto \phi(x) \in Y$ is upper semi-continuous if $\lim_{n\rightarrow\infty}x^n =x_0, y^n\in \phi(x^n), 
\lim_{n\rightarrow \infty} y^n = y_0$ imply that $y_0\in \phi(x_0)$.
\end{definition} 
\begin{theorem}[Glicksberg's fixed point theorem \citep{glicksberg1952further}]\label{thm:clicksberg}
If $X$ is a compact and convex subset in a Hausdorff locally convex topological vector space, and $\phi$ is an upper semi-continuous correspondence mapping $X$ into the family of all convex subsets of $X$, then there exists $x\in X$ so that $x\in\phi(x)$.
\end{theorem}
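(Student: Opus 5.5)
We reduce to the finite-dimensional Brouwer theorem by approximating $\phi$ with continuous single-valued maps, and then pass to the limit using the closed-graph property of Definition~\ref{def:semi-continous} together with convexity. Throughout, we read the statement as it is always applied: $\phi(x)$ is nonempty for every $x\in X$ (if $\phi(x)=\emptyset$ for some $x$, the conclusion fails trivially). Upper semi-continuity in the sense of Definition~\ref{def:semi-continous} is a closed-graph condition; for general topological vector spaces we interpret it with nets instead of sequences. Since $X$ is compact, the graph $\mathrm{Gr}(\phi)\subset X\times X$ is then compact, and each $\phi(x)$ is a nonempty, closed (hence compact), convex subset of $X$.

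\textbf{Step 1: Approximate fixed points.} Fix a convex, symmetric, open neighborhood $U$ of $0$; local convexity provides a base of such sets. By compactness, $X\subset\bigcup_{j=1}^m (x_j+U)$ for finitely many $x_j\in X$. Choose $y_j\in\phi(x_j)$ and a continuous partition of unity $\{\beta_j\}$ on $X$ subordinate to this cover; it exists because $X$ is compact Hausdorff, hence normal. Let $C=\mathrm{co}\{y_1,\dots,y_m\}\subset X$, which is compact, convex and finite-dimensional. The map $f_U(x)=\sum_j\beta_j(x)y_j$ sends $C$ continuously into $C$, so Brouwer's theorem yields $x_U\in C$ with
\[
x_U=\sum_j\beta_j(x_U)\,y_j .
\]
Here $\beta_j(x_U)>0$ only if $x_U-x_j\in U$. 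So $x_U$ is a convex combination of points of $\phi(x_j)$ taken at points $x_j$ that are $U$-close to $x_U$.

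\textbf{Step 2: Upper hemicontinuity from the closed graph.} We show that for every open $O\supset\phi(x^*)$ there is a neighborhood $W$ of $x^*$ with $\phi(x)\subset O$ for all $x\in W$. If not, there is a net $x_\alpha\to x^*$ together with $y_\alpha\in\phi(x_\alpha)\setminus O$. By compactness of $X$, a subnet of $(y_\alpha)$ converges to some $y_0$. Closedness of the graph gives $y_0\in\phi(x^*)$. But $X\setminus O$ is closed, so $y_0\notin O$, contradicting $\phi(x^*)\subset O$.

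\textbf{Step 3: Limit argument (the main obstacle).} Direct the neighborhoods $U$ by reverse inclusion. The net $(x_U)$ has a cluster point $x^*\in X$, and we claim $x^*\in\phi(x^*)$. Suppose not. Since $\phi(x^*)$ is compact, convex, and does not contain $x^*$, and the space is Hausdorff and locally convex, there is a convex open neighborhood $V$ of $0$ such that
\[
(x^*+2V)\cap\bigl(\phi(x^*)+2V\bigr)=\emptyset .
\]
Set $O=\phi(x^*)+V$, which is open and convex. Step 2 provides a neighborhood $W$ of $x^*$ with $\phi(x)\subset O$ on $W$; shrink $W$ to $W=x^*+V'$ with $V'\subset V$ convex and symmetric. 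Now pick $U\subset \tfrac12 V'$ with $x_U\in x^*+\tfrac12V'$, which is possible since $x^*$ is a cluster point. Every $x_j$ that is active at $x_U$ satisfies $x_j\in x_U+U\subset W$. Hence each such $y_j\in O$, and by convexity of $O$ we get $x_U\in O\subset\phi(x^*)+2V$. On the other hand $x_U\in x^*+V\subset x^*+2V$. These two memberships contradict the separation above.

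The delicate part is this final step. Convexity of $O$ is exactly what allows the convex combination produced by Brouwer to remain near $\phi(x^*)$, and local convexity is needed to find such a convex neighborhood $O$. This is why both hypotheses appear in Theorem~\ref{thm:clicksberg}, and what makes it the correct tool for the infinite-dimensional policy spaces arising in the proof of Theorem~\ref{eq:ne-existence}.
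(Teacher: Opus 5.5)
Your proof is correct. There is no proof in the paper to compare it with: Theorem~\ref{thm:clicksberg} is imported from Glicksberg's 1952 paper and used only as a black box in the proof of Theorem~\ref{eq:ne-existence}.

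\textbf{Hypotheses.} The two readings you make explicit are not cosmetic.
\begin{itemize}
\item Nonempty values must be assumed. The paper's ``family of all convex subsets'' admits $\phi\equiv\emptyset$, which has no fixed point. Strictly, a single empty value does not make the conclusion fail; the point is only that the assumption is needed.
\item More importantly, the closed-graph condition must be read with nets, as in Glicksberg's original formulation, not with the sequences of Definition~\ref{def:semi-continous}. Your Step~2 extracts a convergent \emph{subnet} of $(y_\alpha)$ and applies closedness of the graph along it. The sequential condition would not justify this step in the space where the paper uses the theorem, $\prod_{s\in\mathcal{S}}\prod_{i\in[n]}\Delta(\mathcal{A}_i)$ with the product topology. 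For uncountable $\mathcal{S}$ that space is not first countable, and sequentially closed subsets of it need not be closed.
\end{itemize}
So you prove the theorem as Glicksberg stated it, not its literal wording here. It is this net version that the paper would have to verify when it invokes the theorem.

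\textbf{Comparison of routes.} The classical argument (essentially Glicksberg's) reduces to Kakutani's finite-dimensional theorem rather than to Brouwer.
\begin{itemize}
\item Fix a closed convex symmetric neighborhood $V$ of $0$, a finite $V$-net $x_1,\dots,x_m\in X$, and let $L$ be its convex hull.
\item On $L$, the correspondence $x\mapsto(\phi(x)+V)\cap L$ has closed convex values and a closed graph. Its values are nonempty because each $y\in\phi(x)$ lies in some $x_j+V$, so $x_j\in y+V$.
\item Kakutani then gives a point of $K_V=\{x\in X: x\in\phi(x)+V\}$.
\item The sets $K_V$ are closed and have the finite intersection property. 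Any point of $\bigcap_V K_V$ is a fixed point because $\phi(x)$ is closed.
\end{itemize}
You instead build a continuous approximate selection with a partition of unity, apply Brouwer, and pass to the limit through a cluster point of the approximate fixed points, using upper hemicontinuity and convexity of $O=\phi(x^*)+V$.

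Your version needs only Brouwer and shows exactly where convexity of the values enters: the convex combination $x_U=\sum_j\beta_j(x_U)y_j$ must stay in $O$. Glicksberg's is shorter once Kakutani is available. It replaces your hemicontinuity lemma and limit step with a finite-intersection argument. Both use local convexity the same way, to keep $\phi(x)+V$ convex.
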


It is worth noting that for cases with finite state spaces, the existence of Nash equilibria for RMGs—a broader class of games that includes R-LMGs—has been established in \citet[Theorem1]{shi2024breakcursemultiagencyrobust}, where another theorem --- Kakutani's fixed-point theorem is used. Theorem~\ref{thm:clicksberg} relaxes the requirement in Kakutani's fixed-point theorem, where $X$ must belong to a finite-dimensional Euclidean space, extending it to a broader class of Hausdorff linear topological spaces that is crucial for us to handle infinite state spaces. 

\paragraph{Proof of Theorem~\ref{eq:ne-existence}.}
With Theorem~\ref{thm:clicksberg} in hand, we prove Theorem~\ref{eq:ne-existence} by following the proof outline of \citet[Theorem~1]{shi2024breakcursemultiagencyrobust}, with modifications to accommodate a large-scale (possibly infinite) state space. In the large-state setting, we work directly with general robust MGs and do not exploit the game’s linear structure to narrow down to the subclass of robust MGs (robust linear MGs). Following \citet[Section~B.1]{shi2024breakcursemultiagencyrobust}, we first establish the existence of a Nash equilibrium in an auxiliary one-step game; this result is then used recursively to verify equilibrium existence for the full sequential RMG.

Fix a step \(h\in[H]\) and consider the associated auxiliary one-step game. Let \(V_{i,h+1}\in\mathbb{R}^{\cS}\) be a fixed "value function" for agent \(i\), with \(0\le V_{i,h+1}(s)\le H\) for all \(s\in\cS\). Given any joint product policy \(\pi:\cS\to \prod_{i\in[n]}\Delta(\cA_i)\) at step \(h\), define agent \(i\)'s payoff at state \(s\) by
\begin{align}
\textstyle\forall s\in\mathcal{S}:\quad
f_{i,s}\big(\pi_i(s),\pi_{-i}(s);V_{i,h+1}\big)
&= \mathbb{E}_{\mathbf{a}\sim \pi(s)}\!\big[r_{i,h}(s,\mathbf{a})\big]
+ \mathbb{E}_{a_i\sim \pi_i(s)}\!\big[\textstyle\inf_{\mathcal{U}^{\sigma_i}\!\left(P^{\pi_{-i}}_{h,s,a_i}\right)} PV_{i,h+1}\big].
\label{eq:lemma-continuous-2-payoff}
\end{align}
Based on these payoffs, we can introduce the best-response correspondence mapping \(\phi\) as follows: for any \(\pi:\cS\to \prod_{i\in[n]}\Delta(\cA_i)\),
{
\begin{align}
\textstyle \phi(\pi) \defn \big\{ u :\cS \mapsto \prod_{i\in[n]} \Delta(\cA_i) \mymid u_i(s) \in \mathrm{argmax}_{\pi_i'(s)\in\Delta(\cA_i)} \;f_{i,s}(\pi_i'(s), \pi_{-i}(s); V_{i,h+1}), \forall (i,s)\in [n] \times \cS  \big\}. \label{eq:definiton-nash-phi}
\end{align}
}
For this one-step game, the (joint) product policy space is
\[
\textstyle X \defn \Big\{\pi:\cS\to \prod_{i\in[n]}\Delta(\cA_i)\Big\}.
\]
Since fixed points of \(\phi\) correspond to Nash equilibria (NE), Theorem~\ref{thm:clicksberg} yields the existence of NE once its conditions are verified. To show the three conditions, we begin by showing that \(X\) is a compact and convex subset of a convex Hausdorff linear topological space. First, $\{ \pi: \cS \mapsto \prod_{i\in[n]} \Delta(\cA_i)\}$ is convex since the finite product of simplex $X' \defn \prod_{i\in[n]} \Delta(\cA_i)$ is convex and $\{ \pi: \cS \mapsto \prod_{i\in[n]} \Delta(\cA_i)\}$ can be seen as a infinite product of a convex set, whose convexity is easily verified since product operations preserve convexity. Next, the compactness of $\{ \pi: \cS \mapsto \prod_{i\in[n]} \Delta(\cA_i)\}$ follows from the compactness of $\prod_{i\in[n]} \Delta(\cA_i)$ and Tychonoff's theorem \citep{wright1994tychonoff}, which guarantees that an arbitrary product of compact sets remains compact. Therefore, we verified that \(X\) is a compact and convex set.

Next, define
\[
\textstyle Y' \defn \prod_{i\in[n]} \mathbb{R}^{|\cA_i|},
\qquad
Y \defn (Y')^{\cS} = \Big\{\pi:\cS\to Y'\Big\},
\]
Since \(Y'\) is a finite-dimensional Euclidean space, it is a locally convex Hausdorff topological vector space. And arbitrary products of Hausdorff (resp.\ locally convex) topological vector spaces are again Hausdorff \citep[Theorem 5]{kelley2017general} (resp.\ locally convex \citep[Page 207]{kothe1969topological}). It follows that the finite product $Y'$ is a locally convex Hausdorff topological vector space, and then the arbitrary product \(Y = (Y')^{\cS}\) is a locally convex Hausdorff topological vector space. Moreover, because \(X'\subset Y'\), we have \(X \subset Y\). Consequently, \(X\) is a compact and convex subset of the locally convex Hausdorff topological vector space \(Y\).

The remaining conditions of Theorem~\ref{thm:clicksberg}, as well as the subsequent recursive argument, can be verified similarly as in \citet[Theorem~1]{shi2024breakcursemultiagencyrobust} and are omitted for brevity.




\allowdisplaybreaks[4]
\section{Proof of Theorem~\ref{thm:main_generative_model}}

\subsection{Preliminaries}

We first introduce a standard bound on the ridge-regression estimation error under linear function approximation, adapted from  \cite[Theorem~3.1]{jin2019provablyefficientreinforcementlearning}, which plays a key role in the proof of both the generative case and the online interactive case.

\begin{theorem}[Upper bound for ridge-regression estimation {\citealp[Theorem~3.1]{jin2019provablyefficientreinforcementlearning}}]
\label{lm:ridge-regression-concentration}
For any fixed agent $i\in[n]$, step $h\in[H]$, and iteration $k\in[K]$, let $\widehat{\pi}^k$ be the product policy at current iteration $k$, and $\{(s^m, a_i^m, s^m_{+1})\}_{m=1}^{N}$ be $N$ samples collected independenly in this iteration, with $s^m_{+1}\sim P_{h,s^m,a_i^m}^{\widehat\pi^k_{-i}}(\cdot)$. Let $\phi_i^m\defn\phi_i(s^m,a_i^m)\in\mathbb R^d$ satisfy $\lVert\phi_i^m\rVert_2\leq 1$. Fix the regularization parameter $\lambda\geq 1$ and define the Gram matrix
\begin{align*}
    \Lambda_{i,h}^k \defn\sum_{m=1}^{N} \phi_i^m(\phi_i^m)^\top +  \lambda I_d.
\end{align*}
Let $\mu_{i,h}^{\widehat\pi^k_{-i}}:\mathbb{R}^d\rightarrow\Delta(\mathcal{S})$ be the linear model parameter of the true transition kernel: $P_{h,s,a_i}^{\widehat\pi^k_{-i}}(\cdot)=\phi_i(s,a_i)^\top\mu_{i,h}^{\widehat\pi^k_{-i}}(\cdot)$ for all $(s,a_i)\in\mathcal{S}\times\mathcal{A}_i$, and assume the standard normalization $\lVert\mu_{i,h}^{\widehat\pi^k_{-i}}(\mathcal S)\rVert_2\leq\sqrt d$. For any fixed value function $f:\mathcal S\rightarrow[0,H]$, define
\begin{align*}
    \mu_{i,h}^{\widehat\pi^k_{-i}} f\defn \int_{\mathcal S} f(s')\,\mu_{i,h}^{\widehat\pi^k_{-i}}(ds')\in\mathbb R^d  \qquad \text{and} \qquad  \widehat\mu_{i,h}^k f \defn (\Lambda_{i,h}^k)^{-1}\sum_{m=1}^{N}\phi_i^m\, f(s^m_{+1})\in\mathbb R^d.
\end{align*}
Then, with probability at least $1-\delta$, it holds that
\begin{align*}
    \big\lVert\widehat\mu_{i,h}^k f - \mu_{i,h}^{\widehat\pi^k_{-i}}f\big\rVert_{\Lambda_{i,h}^k}\;\leq\;2H\sqrt{d\ln(NH+1)+\ln\!\big(\tfrac{3NH}{\delta}\big)}+H\sqrt{d\lambda},
\end{align*}
and by Cauchy--Schwarz, for all $(s,a_i)\in\mathcal S\times\mathcal A_i$,
\begin{align*}
    \big|\phi_i(s,a_i)^\top\big(\widehat\mu_{i,h}^k f - \mu_{i,h}^{\widehat\pi^k_{-i}}f\big)\big|
    \;\leq\;\big\lVert\phi_i(s,a_i)\big\rVert_{(\Lambda_{i,h}^k)^{-1}}\Big(2H\sqrt{d\ln(NH+1)+\ln\!\big(\tfrac{3NH}{\delta}\big)}+H\sqrt{d\lambda}\Big).
\end{align*}
\end{theorem}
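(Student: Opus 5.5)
The proof follows the standard self-normalized argument of \citet[Theorem~3.1]{jin2019provablyefficientreinforcementlearning}, specialized to a single fixed function $f$. First, write the ridge estimate in closed form and split it into three pieces. Using $P^{\widehat\pi^k_{-i}}_{h,s^m,a_i^m}f=(\phi_i^m)^\top\mu_{i,h}^{\widehat\pi^k_{-i}}f$, we have
\begin{align*}
\widehat\mu_{i,h}^k f-\mu_{i,h}^{\widehat\pi^k_{-i}}f
&=(\Lambda_{i,h}^k)^{-1}\sum_{m=1}^N\phi_i^m\big(f(s^m_{+1})-P^{\widehat\pi^k_{-i}}_{h,s^m,a_i^m}f\big)\\
&\quad+(\Lambda_{i,h}^k)^{-1}\Big(\sum_{m=1}^N\phi_i^m(\phi_i^m)^\top-\Lambda_{i,h}^k\Big)\mu_{i,h}^{\widehat\pi^k_{-i}}f,
\end{align*}
and the second term equals $-\lambda(\Lambda_{i,h}^k)^{-1}\mu_{i,h}^{\widehat\pi^k_{-i}}f$. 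Taking the $\Lambda_{i,h}^k$-norm, the bias term contributes $\lambda\|\mu f\|_{(\Lambda_{i,h}^k)^{-1}}\le\sqrt\lambda\,\|\mu f\|_2$, because $\Lambda_{i,h}^k\succeq\lambda I$. The normalization together with $0\le f\le H$ gives $\|\mu f\|_2\le H\sqrt d$. Hence the bias is at most $H\sqrt{d\lambda}$, which matches the second summand.

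The noise term equals $\|\sum_m\phi_i^m\epsilon_m\|_{(\Lambda_{i,h}^k)^{-1}}$, where $\epsilon_m\defn f(s^m_{+1})-P f$. Because $f$ is fixed and chosen independently of the samples, the $\epsilon_m$ are zero-mean conditional on $(s^m,a_i^m)$, and they are bounded in $[-H,H]$, hence $H$-sub-Gaussian. Their independence across $m$ gives a martingale difference sequence with respect to the natural filtration. I would invoke the self-normalized concentration bound of Abbasi-Yadkori et al., which states that with probability $1-\delta'$,
\[
\Big\|\sum_m\phi_i^m\epsilon_m\Big\|^2_{(\Lambda_{i,h}^k)^{-1}}\le 2H^2\ln\Big(\tfrac{\det(\Lambda_{i,h}^k)^{1/2}\det(\lambda I)^{-1/2}}{\delta'}\Big).
\]
The determinant ratio is controlled by AM--GM using $\|\phi_i^m\|_2\le1$: $\det\Lambda\le(\lambda+N/d)^d$, so the log-determinant term is at most $\tfrac d2\ln(1+N/(d\lambda))\le d\ln(NH+1)$. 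Choosing $\delta'=\delta/(3NH)$ to leave room for the later union bounds over $(h,k,i)$ yields $2H\sqrt{d\ln(NH+1)+\ln(3NH/\delta)}$, with generous constants. The Cauchy--Schwarz consequence is then immediate: $|\phi^\top x|\le\|\phi\|_{\Lambda^{-1}}\|x\|_\Lambda$.

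The main subtlety is justifying that $f$ is independent of the samples used in the regression. In the algorithm, the value function $\widehat V_{i,h+1}$ is computed from data at later steps, together with fresh samples drawn at iteration $k$. The statement asks only for a fixed $f$, so no covering argument over a function class is needed, unlike the original theorem in \citet{jin2019provablyefficientreinforcementlearning}. I would state this conditioning explicitly: condition on all data from steps $h+1,\dots,H$ and on iterations before $k$. Under that conditioning, $f$ and $\widehat\pi^k$ are deterministic, and the $N$ new samples are i.i.d.
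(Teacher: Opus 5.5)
Your proposal is correct and follows essentially the same route as the paper: the same split into a self-normalized noise term plus a bias term $\lambda\lVert\mu_{i,h}^{\widehat\pi^k_{-i}}f\rVert_{(\Lambda_{i,h}^k)^{-1}}\le H\sqrt{d\lambda}$, with the noise term controlled by the Abbasi-Yadkori self-normalized inequality, which the paper reaches by applying Jin et al.'s Lemma~D.4 to the singleton class $\{f\}$ so that the covering term drops out. Your constants give roughly $\sqrt{2}H\sqrt{d\ln(NH+1)+\ln(3NH/\delta)}$, which fits inside the stated $2H$ factor, and your explicit conditioning on earlier data is a useful clarification of what ``fixed $f$'' means when the theorem is applied later.
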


\begin{proof}
We adapt the argument of \cite[Theorem~3.1]{jin2019provablyefficientreinforcementlearning} to our multi-agent linear-function-approximation setting.
\paragraph{Step 1: error decomposition.} Recalling the definitions of $\Lambda_{i,h}^k$ and $\widehat\mu_{i,h}^k f$, combined with the fact that $(\phi_i^m)^\top\mu_{i,h}^{\widehat{\pi}^k_{-i}}f=\mathbb E_{s'\sim P^{\widehat{\pi}^k_{-i}}_{h,s^m,a_i^m}}[f(s')]$, we have
\begin{align*}
\widehat{\mu}_{i,h}^k f-\mu_{i,h}^{\widehat{\pi}^k_{-i}}f
&=(\Lambda_{i,h}^k)^{-1}\!\Big[\sum_{m=1}^{N}\phi_i^m\,f(s^m_{+1})-\Lambda_{i,h}^k\,\mu_{i,h}^{\widehat{\pi}^k_{-i}}f\Big]\\
&=(\Lambda_{i,h}^k)^{-1}\underbrace{\sum_{m=1}^{N}\phi_i^m\big(f(s^m_{+1})-(\phi_i^m)^\top\mu_{i,h}^{\widehat{\pi}^k_{-i}}f\big)}_{\defn\,\xi_{i,h}^k\ \text{(stochastic term)}}\;-\;\lambda\,(\Lambda_{i,h}^k)^{-1}\mu_{i,h}^{\widehat{\pi}^k_{-i}}f.
\end{align*}
Taking the $\Lambda_{i,h}^k$-norm and applying the triangle inequality together with the identity $\lVert(\Lambda_{i,h}^k)^{-1}v\rVert_{\Lambda_{i,h}^k}=\lVert v\rVert_{(\Lambda_{i,h}^k)^{-1}}$,
\begin{align}\label{eq:ridge-decomp}
\big\lVert\widehat{\mu}_{i,h}^k f-\mu_{i,h}^{\widehat{\pi}^k_{-i}}f\big\rVert_{\Lambda_{i,h}^k}
\;\leq\; \underbrace{\big\lVert\xi_{i,h}^k\big\rVert_{(\Lambda_{i,h}^k)^{-1}}}_{\text{(I)}\ \text{stochastic}}\;+\;\underbrace{\lambda\big\lVert\mu_{i,h}^{\widehat{\pi}^k_{-i}}f\big\rVert_{(\Lambda_{i,h}^k)^{-1}}}_{\text{(II)}\ \text{bias}}.
\end{align}

\paragraph{Step 2: bounding the stochastic term (I).} We first recall the lemma in \cite[Lemma~D.4]{jin2019provablyefficientreinforcementlearning} (renaming their time index $k$ to $\tau$ to avoid clash with our iteration index $k\in[K]$): \emph{let $\{x_\tau\}_{\tau\geq 1}$ be a stochastic process on state space $\mathcal S$ with corresponding filtration $\{\mathcal F_\tau\}_{\tau\geq 0}$. Let $\{\widetilde\phi_\tau\}_{\tau\geq 1}$ be an $\mathbb R^d$-valued stochastic process where $\widetilde\phi_\tau\in\mathcal F_{\tau-1}$, and $\lVert\widetilde\phi_\tau\rVert_2\leq 1$. Let $\widetilde\Lambda_n\defn\lambda I_d+\sum_{\tau=1}^n\widetilde\phi_\tau\widetilde\phi_\tau^\top$. Then for any $\delta>0$, with probability at least $1-\delta$, for all $n\geq 0$, and any $V\in\mathcal V$ so that $\sup_x|V(x)|\leq H$, we have:}
\begin{align}\label{eq:lemmaD4}
    \Big\lVert\sum_{\tau=1}^{n}\widetilde\phi_\tau\big\{V(x_\tau)-\mathbb E[V(x_\tau)\mid\mathcal F_{\tau-1}]\big\}\Big\rVert_{\widetilde\Lambda_n^{-1}}^2
    \leq 4H^2\!\Big[\tfrac{d}{2}\log\tfrac{n+\lambda}{\lambda}+\log\tfrac{\mathcal N_\varepsilon}{\delta}\Big]+\tfrac{8n^2\varepsilon^2}{\lambda},
\end{align}
\emph{where $\mathcal N_\varepsilon=\mathcal N_\varepsilon(\mathcal V)$ is the $\varepsilon$-covering number of $\mathcal V$ with respect to the distance $\mathrm{dist}(V,V')=\sup_x|V(x)-V'(x)|$.}

To apply the lemma into our setting, the sample index $m\in[N]$ plays the role of $\tau$, so we apply \eqref{eq:lemmaD4} with $n=N$ and let
\begin{center}
\begin{tabular}{ll}
$\widetilde\phi_\tau\ = \phi_i^m=\phi_i(s^m,a_i^m)$, & (predictor) \\
$x_\tau =s_{+1}^m$, & (next-state observation) \\
$\mathcal F_m=\sigma\!\big(\{(s^j,a_i^j,s_{+1}^j)\}_{j\leq m}\big)$, & (natural filtration) \\
$\widetilde\Lambda_n = \Lambda_{i,h}^k$, & (Gram matrix at $n=N$, fixed iteration $k$) \\
$V = f:\mathcal S\to[0,H]$. & (value function)
\end{tabular}
\end{center}
Therefore, $(s^m,a_i^m)\in\mathcal F_{m-1}$, $\phi_i^m\in\mathcal F_{m-1}$ as required, and $s_{+1}^m\sim P_{h,s^m,a_i^m}^{\widehat\pi^k_{-i}}(\cdot)$, which gives
\begin{align*}
    \mathbb E[V(x_\tau)\mid\mathcal F_{\tau-1}]
    =\mathbb E_{s'\sim P_{h,s^m,a_i^m}^{\widehat\pi^k_{-i}}}[f(s')]
    =(\phi_i^m)^\top\mu_{i,h}^{\widehat\pi^k_{-i}}f.
\end{align*}
Hence the LHS of \eqref{eq:lemmaD4} is exactly $\big\lVert\xi_{i,h}^k\big\rVert_{(\Lambda_{i,h}^k)^{-1}}^2$.

To proceed, since $f$ is data-independent, we apply \eqref{eq:lemmaD4} to the singleton class $\mathcal V=\{f\}$, for which $\mathcal N_\varepsilon(\{f\})=1$ for every $\varepsilon\geq 0$. Letting $\varepsilon\to 0$ then drives $8N^2\varepsilon^2/\lambda\to 0$, so that with probability at least $1-\delta$,
\begin{align*}
    \big\lVert\xi_{i,h}^k\big\rVert_{(\Lambda_{i,h}^k)^{-1}}^2
    \leq 4H^2\!\Big[\tfrac{d}{2}\log\tfrac{N+\lambda}{\lambda}+\log\tfrac{1}{\delta}\Big]
    \overset{\lambda\geq 1}{\leq}\;2H^2\,d\log(N+1)+4H^2\log\tfrac{1}{\delta},
\end{align*}
and consequently
\begin{align}\label{eq:ridge-stoch}
    \big\lVert\xi_{i,h}^k\big\rVert_{(\Lambda_{i,h}^k)^{-1}}\leq 2H\sqrt{d\log(NH+1)+\log\!\big(\tfrac{3NH}{\delta}\big)}.
\end{align}


\paragraph{Step 3: bounding the bias term (II).} Since $\Lambda_{i,h}^k\succeq\lambda I_d$, we have $(\Lambda_{i,h}^k)^{-1}\preceq\lambda^{-1}I_d$, and thus
\begin{align*}
\big\lVert\mu_{i,h}^{\widehat{\pi}^k_{-i}}f\big\rVert_{(\Lambda_{i,h}^k)^{-1}}\leq \lambda^{-1/2}\big\lVert\mu_{i,h}^{\widehat{\pi}^k_{-i}}f\big\rVert_2.
\end{align*}
Invoking the standard normalization $\lVert\mu_{i,h}^{\widehat{\pi}^k_{-i}}(\mathcal S)\rVert_2\leq\sqrt d$ together with $f(s')\in[0,H]$, one has
\begin{align*}
\big\lVert\mu_{i,h}^{\widehat{\pi}^k_{-i}}f\big\rVert_2
=\Big\lVert\!\int f(s')\,\mu_{i,h}^{\widehat{\pi}^k_{-i}}(ds')\Big\rVert_2
\leq \lVert f\rVert_\infty\cdot\big\lVert\mu_{i,h}^{\widehat{\pi}^k_{-i}}(\mathcal S)\big\rVert_2\leq H\sqrt d.
\end{align*}
Therefore the bias term is bounded by
\begin{align}\label{eq:ridge-bias}
\lambda\big\lVert\mu_{i,h}^{\widehat{\pi}^k_{-i}}f\big\rVert_{(\Lambda_{i,h}^k)^{-1}}\leq \sqrt{\lambda}\cdot H\sqrt d \;=\; H\sqrt{d\lambda}.
\end{align}
The bias bound $\sqrt\lambda\,H\sqrt d=H\sqrt{d\lambda}$ requires smaller $\lambda$ for a tighter bound, while the stochastic bound (I) deteriorates as $\lambda\to 0$; we therefore restrict to $\lambda\geq 1$. Note that the generative-model case takes $\lambda=1$, recovering the standard $H\sqrt d$ bias (cf.\ \citealp{jin2019provablyefficientreinforcementlearning}); the online algorithm later sets $\lambda\geq c_0\log(2dnKHT/\delta)=\widetilde{\mathcal O}(1)$ (see the matrix-Bernstein step around \eqref{eq:online_matrix_two_sided}), so the $\sqrt\lambda$ factor in the bias is also $\widetilde{\mathcal O}(1)$ and the bias term becomes $H\sqrt{d\lambda}=\widetilde{\mathcal O}(H\sqrt d)$.

\paragraph{Step 4: combining and pointwise version.} Plugging the stochastic bound \eqref{eq:ridge-stoch} and the bias bound \eqref{eq:ridge-bias} into \eqref{eq:ridge-decomp}, we obtain the first inequality of the theorem:
\begin{align*}
    \big\lVert\widehat\mu_{i,h}^k f-\mu_{i,h}^{\widehat\pi^k_{-i}}f\big\rVert_{\Lambda_{i,h}^k}
    \leq 2H\sqrt{d\log(NH+1)+\log\!\big(\tfrac{3NH}{\delta}\big)}+H\sqrt{d\lambda}
\end{align*}
holds with probability at least $1-\delta$. Then it follows by Cauchy--Schwarz inequality that: for all $(s,a_i)\in\mathcal S\times\mathcal A_i$,
\begin{align*}
\big|\phi_i(s,a_i)^\top\big(\widehat{\mu}_{i,h}^k f-\mu_{i,h}^{\widehat{\pi}^k_{-i}} f\big)\big|
\leq \lVert\phi_i(s,a_i)\rVert_{(\Lambda_{i,h}^k)^{-1}}\,\big\lVert\widehat{\mu}_{i,h}^k f-\mu_{i,h}^{\widehat{\pi}^k_{-i}} f\big\rVert_{\Lambda_{i,h}^k}.
\end{align*}

\end{proof}


\subsection{Proof pipeline of Theorem~\ref{thm:main_generative_model}}
Before presenting the proof for Algorithm~\ref{alg:linear_generative_model}, we introduce some general notations and auxiliary robust value functions for clarity.

\paragraph{Step 1: Notation conventions and worst-case transition operators.}
\begin{itemize}
    \item \textbf{Notation abbreviations}: Recall the definition of the value function:
    \begin{align*}
    V^{\pi,\sigma_i}_{i,h}(s) = \mathbb{E}_{a_i \sim \pi_{i,h}(s)} \Big[ r_{i,h}^{\pi_{-i}}(s, a_i) + \inf_{\mathcal{U}^{\sigma_i}(P_{h,s,a_i}^{\pi_{-i}})} P V_{i,h+1}^{\pi,\sigma_i}\Big].
    \end{align*}
    Throughout the remainder of the proof, we suppress the superscript $\sigma_i$ and write
    \begin{align}
    \forall (i,h) \in [n] \times [H]:\quad  V^{\pi}_{i,h}\defn V^{\pi,\sigma_i}_{i,h}.
    \end{align}
    And we denote the estimated transitions in \eqref{eq:ridge_regression_generative_model} as
    \begin{equation}\label{eq:generative_model_transition_error_P}
       \forall (h,k)\in [H] \times [K]:  P_{i,h,s, a_i}^k \defn P_{i,h}^k(\cdot \mid s, a_i).
    \end{equation}
    In addition, for any distribution $\zeta$ over the space of product policies, for all $(s,i) \in \mathcal{S} \times [n]$ and any policy $\pi'_i$ for the agent $i$, we define
    \begin{align}
       V_{i,1}^{\star,\zeta}(s)\defn \mathbb{E}_{\pi\sim \zeta} \big[ V_{i,1}^{\star,\pi_{-i} }(s)\big], \quad V_{i,1}^{\zeta}(s)\defn \mathbb{E}_{\pi\sim \zeta}\big[ V_{i,1}^{\pi}(s)\big], \quad V_{i,1}^{\pi'_i,\zeta}(s)\defn  \mathbb{E}_{\pi_{-i} \sim \zeta} \left[V_{i,1}^{\pi_i^\prime, \pi_{-i}}(s)\right]. \label{eq:own-x1}
    \end{align}

    \item \textbf{Notations for worst-case transitions}: Consider any set of policies \( \big\{\widehat{\pi}_h^k\big\}_{(h,k)\in[H]\times[K]} \) with $\widehat{\pi}_h^k \in [H] \times \cS \mapsto \Delta(\cA)$.
    For any value vector $V:\mathcal{S}\rightarrow [0,H]$, for any $(h,i,k)\in[H]\times[n]\times[K]$, we define the transition operators $P_{i,h,s,a_i}^{\widehat{\pi}_{-i}^k,V}V$ and $\widehat{P}_{i,h,s,a_i}^{\widehat{\pi}_{-i}^k,V}V$ as:
\begin{align}
\label{eq:generative_model_transition_error_PV_definition}
    P_{i,h,s,a_i}^{\widehat{\pi}_{-i}^k,V}V &\defn \inf_{\mathcal{P}\in\mathcal{U}^{\sigma_i}\big(P_{h,s,a_i}^{\widehat{\pi}_{-i}^k}\big)}\mathcal{P}V, \quad
    \widehat{P}_{i,h,s, a_i}^{\widehat{\pi}_{-i}^k,V} V \defn \inf_{\mathcal{P}\in\mathcal{U}^{\sigma_i}\big(P_{i,h,s,a_i}^k\big) }\mathcal{P}V.
\end{align}
\end{itemize}

\paragraph{Step 2: Auxiliary aggregated value functions.}
Let \( \zeta:= \{\zeta_h\}_{h \in [H]} \) be a distribution over the product policy space, namely \( \zeta_h : \cS \mapsto \Delta(\prod_{i \in [n]} \Delta(\mathcal{A}_i))\). For all $(i,h,s)\in [n]\times[H]\times\mathcal{S}$, we can define the aggregated value functions $\overline{V}_{i,h}^{\zeta}(s)$ and $\overline{V}_{i,h}^{\star,\zeta}(s)$ as follows:
\begin{equation}
    \label{eq:generative_model_auxiliary_value_def}
    \begin{aligned}
        \overline{V}_{i,h}^{\zeta}(s) &:= \frac{1}{K}\sum_{k=1}^K \mathbb{E}_{a_i \sim \pi_{i,h}^k(s)}\Big[r_{i,h}^k(s,a_i) + \widehat{P}_{i,h,s,a_i}^{\pi^k_{-i},\overline{V}_{i,h+1}^{\zeta}}\overline{V}_{i,h+1}^{\zeta}\Big], \\
        \overline{V}_{i,h}^{\star,\zeta}(s) &:= \max_{a_i \in \mathcal{A}_i} \frac{1}{K}\sum_{k=1}^K  \Big[ r_{i,h}^k(s,a_i) + \widehat{P}_{i,h,s,a_i}^{\pi^k_{-i},\overline{V}_{i,h+1}^{\star,\zeta}}\overline{V}_{i,h+1}^{\star,\zeta}\Big].
    \end{aligned}
    \end{equation}
Finally, for any policy $\pi_i$ of agent $i$, we define the value function $\overline{V}_{i,h}^{\pi_i,\zeta}(s)$:
\begin{align}\label{eq:generative_model_auxiliary_value_def2}
    \overline{V}_{i,h}^{\pi_i,\zeta}(s) := \frac{1}{K}\sum_{k=1}^K \mathbb{E}_{a_i \sim \widetilde{\pi}_{i,h}^\star(s)}\Big[r_{i,h}^k(s,a_i) + \widehat{P}_{i,h,s,a_i}^{\pi^k_{-i},\overline{V}_{i,h+1}^{\pi_i,\zeta}}  \overline{V}_{i,h+1}^{\pi_i,\zeta} \Big].
\end{align}

These objects will serve as the estimate to the true robust value functions by aggregating the polices output by Algorithm~\ref{alg:linear_generative_model}.

With the above setup in place, we are ready to prove the main theorem of Algorithm~\ref{alg:linear_generative_model}, starting from an error decomposition:
\paragraph{Step 3: Error decomposition.}
Our goal is to show that for all $i\in[n]$ and $s\in\mathcal{S}$, the distribution $\xi$ over the product policies, produced by Algorithm~\ref{alg:linear_generative_model}, satisfies
\begin{align*}
    \max_{i\in[n], s\in\mathcal{S}} \Big( V_{i,1}^{\star,\xi}(s) - V_{i,1}^{\xi} \Big) = \max_{i\in[n], s\in\mathcal{S}}  \Big(\mathbb{E}_{\pi\sim \xi} \big[ V_{i,1}^{\star,\pi_{-i} }(s)\big] - \mathbb{E}_{\pi\sim \xi}\big[ V_{i,1}^{\pi}(s)\big]\Big) \leq \varepsilon.
\end{align*}

Towards this, before proceeding, we introduce several auxiliary definitions that will be used throughout the proof. We define the best response policy as follows:
\begin{align*}
    \forall i\in[n]: \quad \widetilde{\pi}_i^\star=\arg\max_{\pi_i^\prime:\mathcal{S}\times[H]\rightarrow\Delta(\mathcal{A}_i)} V_{i,1}^{\pi_i^\prime, \xi}.
\end{align*}
Accordingly, we have the fundamental fact that for all $s\in\mathcal{S}$
\begin{align}\label{eq:own-fundamental-fact}
    \overline{V}_{i,H+1}^{\xi}(s)=\overline{V}_{i,H+1}^{\widetilde{\pi}_i^\star,\xi}(s)=\overline{V}_{i,H+1}^{\star,\xi}(s).
\end{align}

We then decompose the error of the value function as follows: for any agent $i \in[n]$,
\begin{align*}
    V_{i,1}^{\star,\xi}(s)-V_{i,1}^{\xi}(s)
    & =\left(V_{i,1}^{\star,\xi}(s)-\overline{V}_{i,1}^{\widehat{\pi}_i^\star,\xi}(s)\right)
    +\left(\overline{V}_{i,1}^{\widehat{\pi}_i^\star,\xi}(s) - \overline{V}_{i,1}^{\xi}(s) \right)
    + \left(\overline{V}_{i,1}^{\xi}(s)-V_{i,1}^{\xi}(s)\right) \nonumber \\
    &\leq \underbrace{\left(V_{i,1}^{\star,\xi}(s)-\overline{V}_{i,1}^{\widehat{\pi}_i^\star,\xi}(s)\right)}_{\text{term}(A)}
    +\underbrace{\left(\overline{V}_{i,1}^{\star,\xi}(s)-\overline{V}_{i,1}^{\xi}(s)\right)}_{\text{term}(B)}
    +\underbrace{\left(\overline{V}_{i,1}^{\xi}(s)-V_{i,1}^{\xi}(s)\right)}_{\text{term}(C)},
\end{align*}
where the last line is due to the elementary fact that
\begin{align*}
    \overline{V}_{i,1}^{\star,\xi}(s)=\max_{\pi_i^\prime:\mathcal{S}\times[H]\rightarrow \Delta(\mathcal{A}_i)} V_{i,1}^{\pi_i^\prime,\xi}(s)\geq \overline{V}_{i,1}^{\widehat{\pi}_i^\star,\xi}(s).
\end{align*}
We now control term $A$, term $B$ and term $C$ separately.

\paragraph{Step 4: Controlling Term $A$ and $C$.}
Terms (A) and (C) both characterize the difference
between a true robust value function and an estimated one, so we derive one template bound and then specialize it to these two cases.
Towards this, we consider a more general case involving a given set of policies \( \big\{\widehat{\pi}_h^k\big\}_{(h,k)\in[H]\times[K]} \), where either \( \widehat{\pi}_h^k = \pi_h^k \), or \( \widehat{\pi}_h^k = \widetilde{\pi}_i^\star \times \pi_{-i,h}^k \) for all \( (h,k) \in [H] \times [K] \). Additionally, we define a uniform distribution over the aforementioned set of policies \( \zeta:= \{\zeta_h\}_{h \in [H]} \) so that $\zeta_h(\widehat{\pi}_h^k) = 1/K$ for all $h \in [H]$.
Our objective is to derive an upper bound on $\big|V_{i,h}^{\zeta}(s)-\overline{V}_{i,h}^{\zeta}(s)\big|$ for all $i\in[n]$ over any such policy set $\{\widehat{\pi}_h^k\}_{(h,k)\in[H]\times[K]}$, which directly yields bounds for terms $A$ and $C$.

To proceed, we start with the following lemma in terms of estimation errors of transition model and reward function, where the proof is postponed to Appendix~\ref{proof:lm:generative_model_transition_error}.
\begin{lemma}
\label{lm:generative_model_transition_error}
Let $\delta\in(0,1)$. With probability at least $1-\delta$, for all $(h,i,k)\in[H]\times[n]\times[K]$, all $(s,a_i)\in\mathcal{S}\times\mathcal{A}_i$, and any fixing value function $V:\mathcal{S}\rightarrow [0,H]$, we have
\begin{align}
    &\big|P_{i,h,s,a_i}^{\widehat{\pi}_{-i}^k,V}V-\widehat{P}_{i,h,s,a_i}^{\widehat{\pi}_{-i}^k,V}V\big|\leq2\sqrt{\frac{d}{N}}\big(2H\sqrt{d\ln(NH+1)+2\ln\big(\frac{3KNHn}{\delta}\big)}+H\sqrt{d}\big), \\
        &\big|\mathbb{E}_{a_{-i}\sim\widehat{\pi}_{-i}^k(s)}\big[r_{i,h}(s,\mathbf{a})\big]-r_{i,h}^k(s,a_i)\big|\leq \sqrt{\frac{d}{N}}\big(2H\sqrt{d\ln(NH+1)+\ln\big(\frac{3KNHn}{\delta}\big)}+H\sqrt{d}\big).
    \end{align}
\end{lemma}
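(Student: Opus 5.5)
The plan is to reduce both inequalities to the ridge-regression bound of Theorem~\ref{lm:ridge-regression-concentration}, combined with the design property of Lemma~\ref{lm:generative_model_supporting_set}. That property converts the data-dependent width $\lVert\phi_i(s,a_i)\rVert_{(\Lambda_{i,h}^k)^{-1}}$ into $\sqrt{d/N}$.

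\textbf{Step 1: bound the feature width.} Since each $(s,a_i)\in\mathcal{Y}_i$ is queried $\lceil N\rho_i(s,a_i)\rceil$ times, we have $\Lambda_{i,h}^k\succeq N\Sigma_i+\lambda I\succeq N\Sigma_i$. Hence
\[
\lVert\phi_i(s,a_i)\rVert^2_{(\Lambda_{i,h}^k)^{-1}}\le \tfrac{1}{N}\lVert\phi_i(s,a_i)\rVert^2_{\Sigma_i^{-1}}\le d/N
\]
for every $(s,a_i)\in\mathcal{S}\times\mathcal{A}_i$. Because the design is deterministic, this holds without any probability.

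\textbf{Step 2: the reward bound.} Apply the same ridge argument to the reward responses. Conditional on the design, the sampled opponent actions are independent draws from $\widehat{\pi}^k_{-i,h}(s)$. The responses $r^{(u)}_{i,h}\in[0,1]$ therefore have conditional mean $\phi_i^\top\theta_{i,h}^{\widehat{\pi}^k_{-i}}$ by \eqref{eq:linaer-mg-assumption}. The self-normalized martingale bound of Step~2 in Theorem~\ref{lm:ridge-regression-concentration}, together with the bias bound, gives
\[
\lVert\widehat{\theta}-\theta\rVert_{\Lambda}\le 2H\sqrt{d\ln(NH+1)+\ln(3KNHn/\delta)}+H\sqrt d
\]
(with slack, since $H\ge 1$). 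Multiplying by the width from Step~1 yields the second inequality. The confidence level is split by a union bound over $(h,i,k)$, which accounts for the factor $KHn$ in the logarithm.

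\textbf{Step 3: the transition bound.} This is the main obstacle, for two reasons. The robust operator $\inf_{\mathcal{U}^{\sigma_i}(\cdot)}\mathcal{P}V$ is nonlinear in the kernel, and the estimate $P_{i,h}^k(\cdot\mid s,a_i)$ is only a signed measure. I would pass to the TV dual form from \eqref{eq:q_function_generative_model}:
\[
\inf_{\mathcal{U}^{\sigma_i}(P)}\mathcal{P}V=\max_{\alpha\in[\min V,\max V]}\big\{P[V]_\alpha-\sigma_i(\alpha-\min_{s'}[V]_\alpha(s'))\big\}.
\]
The penalty term does not depend on $P$. Since $|\max f-\max g|\le\max|f-g|$, the difference is bounded by
\[
\sup_{\alpha}\big|\phi_i(s,a_i)^\top(\widehat{\mu}_{i,h}^k-\mu_{i,h}^{\widehat{\pi}^k_{-i}})[V]_\alpha\big|.
\]
Each $[V]_\alpha$ maps into $[0,H]$, so Theorem~\ref{lm:ridge-regression-concentration} applies for each fixed $\alpha$.

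To make the bound uniform over $\alpha$ (and over the "fixed" $V$), I would cover $\alpha\in[0,H]$ with a grid of mesh $1/N$. Clipping is $1$-Lipschitz in $\alpha$. On the $\widehat{\mu}$ side, the error is controlled by $\sum_m|\phi^\top\Lambda^{-1}\phi^m|\cdot\tfrac1N\le\sqrt{d}\cdot O(1)$, and similarly for $\mu$. This discretization slack, together with the union bound over $NH$ grid points and $(h,i,k)$, produces the factor $2\ln(3KNHn/\delta)$ and the leading constant $2$.

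Combining with Step~1 gives the first inequality. For $V$ that depend on data from later steps (e.g.\ $\widehat V_{i,h+1}$), note that the step-$h$ samples are drawn fresh, independently of $V$. Conditioning on $V$ then makes it fixed, so no covering over value functions is needed.
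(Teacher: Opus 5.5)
Your proposal is correct and follows the paper's proof essentially step for step. The TV dual form reduces the robust-operator gap to $\sup_\alpha|\phi_i(s,a_i)^\top(\widehat{\mu}_{i,h}^k-\mu_{i,h}^{\widehat{\pi}^k_{-i}})[V]_\alpha|$, which is then controlled by Theorem~\ref{lm:ridge-regression-concentration} at each fixed $\alpha$, the design width $\sqrt{d/N}$ from Lemma~\ref{lm:generative_model_supporting_set}, a $1/N$-net over $\alpha$, and a union bound over $(h,i,k)$, with the reward bound obtained the same way. Two of your steps are actually more careful than the paper's: you derive the width from $\Lambda_{i,h}^k\succeq N\Sigma_i$ rather than invoking the design lemma with $\Sigma=\Lambda_{i,h}^k/|\mathcal{D}_i|$, and you use the Lipschitz constant $\sum_m|\phi_i(s,a_i)^\top(\Lambda_{i,h}^k)^{-1}\phi_i^{(m)}|$ for the signed estimate rather than asserting $1$-Lipschitzness in $\alpha$.
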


To continue, for the clarity of notations, in this subsection, we use
\begin{align}\label{eq:own-2}
    \widehat{P}_{i,h,s,a_i}^{\widehat{\pi}_{-i}^k,\overline{V}} \defn \widehat{P}_{i,h,s,a_i}^{\widehat{\pi}_{-i}^k,\overline{V}_{i,h+1}^{\zeta}}, \quad \widehat{P}_{i,h,s,a_i}^{\widehat{\pi}_{-i}^k,V}\defn \widehat{P}_{i,h,s,a_i}^{\widehat{\pi}_{-i}^k,V_{i,h+1}^{\zeta}}.
\end{align}
We begin by expressing out the gap $\big|V_{i,h}^{\zeta}(s)-\overline{V}_{i,h}^{\zeta}(s)\big|$ under controlling using the Bellman structure and then splitting the expression into two one-sided terms that can be controlled separately.
Initially, recalling the definitions in \eqref{eq:generative_model_auxiliary_value_def}, we have
\begin{align*}
    &\big|V_{i,h}^{\zeta}(s)-\overline{V}_{i,h}^{\zeta}(s)\big|\\
    =&\big|V_{i,h}^{\zeta}(s)-\sum_{k=1}^{K}\frac{1}{K}\mathbb{E}_{a_i\sim\widehat{\pi}_{i,h}^{k}(s)}\big[r_{i,h}^k(s,a_i)\big]-\sum_{k=1}^K\frac{1}{K}\mathbb{E}_{a_i\sim\widehat{\pi}_{i,h}^k(s)}\big[\widehat{P}_{i,h,s,a_i}^{\widehat{\pi}_{-i}^k,\overline{V}}\overline{V}_{i,h+1}^{\zeta}\big]\big|\\
    \leq&\max\big\{\underbrace{V_{i,h}^{\zeta}(s)-\sum_{k=1}^{K}\frac{1}{K}\mathbb{E}_{a_i\sim\widehat{\pi}_{i,h}^{k}(s)}\big[r_{i,h}^k(s,a_i)\big]-\sum_{k=1}^K\frac{1}{K}\mathbb{E}_{a_i\sim\widehat{\pi}_{i,h}^k(s)}\big[\widehat{P}_{i,h,s,a_i}^{\widehat{\pi}_{-i}^k,\overline{V}}\overline{V}_{i,h+1}^{\zeta}\big]}_{(a)},\\
    &\underbrace{\sum_{k=1}^{K}\frac{1}{K}\mathbb{E}_{a_i\sim\widehat{\pi}_{i,h}^{k}(s)}\big[r_{i,h}^k(s,a_i)\big]+\sum_{k=1}^K\frac{1}{K}\mathbb{E}_{a_i\sim\widehat{\pi}_{i,h}^k(s)}\big[\widehat{P}_{i,h,s,a_i}^{\widehat{\pi}_{-i}^k,\overline{V}}\overline{V}_{i,h+1}^{\zeta}\big]-V_{i,h}^{\zeta}(s)}_{(b)}\big\}.
\end{align*}
Now we proceed to control term \((a)\) and term \((b)\) separately, starting with term \((a)\).  At any time step \( h \in [H] \), we have
\begin{align}
    & V_{i,h}^{\zeta}(s)-\sum_{k=1}^{K}\frac{1}{K}\mathbb{E}_{a_i\sim\widehat{\pi}_{i,h}^{k}(s)}\big[r_{i,h}^k(s,a_i)\big]-\sum_{k=1}^K\frac{1}{K}\mathbb{E}_{a_i\sim\widehat{\pi}_{i,h}^k(s)}\big[\widehat{P}_{i,h,s,a_i}^{\widehat{\pi}_{-i}^k,\overline{V}}\overline{V}_{i,h+1}^{\zeta}\big]\notag\\
     \overset{\mathsf{(i)}}{=}& \sum_{k=1}^{K}\frac{1}{K}\mathbb{E}_{a_i\sim\widehat{\pi}_{i,h}^{k}(s)}\big|\mathbb{E}_{a_{-i}\sim\widehat{\pi}_{-i}^k(s)}\big[r_{i,h}(s,\mathbf{a})\big]-r_{i,h}^k(s,a_i)\big|+\sum_{k=1}^K\frac{1}{K}\mathbb{E}_{a_i\sim\widehat{\pi}_{i,h}^k(s)}\big[P_{i,h,s,a_i}^{\widehat{\pi}_{-i}^k,V}V_{i,h+1}^{\zeta}\big]\notag\\
    & -\sum_{k=1}^K\frac{1}{K}\mathbb{E}_{a_i\sim\widehat{\pi}_{i,h}^k(s)}\big[\widehat{P}_{i,h,s,a_i}^{\widehat{\pi}_{-i}^k,\overline{V}}\overline{V}_{i,h+1}^{\zeta}\big]\notag\\
    \leq&\sum_{k=1}^{K}\frac{1}{K}\mathbb{E}_{a_i\sim\widehat{\pi}_{i,h}^{k}(s)}\big|\mathbb{E}_{a_{-i}\sim\widehat{\pi}_{-i}^k(s)}\big[r_{i,h}(s,\mathbf{a})\big]-r_{i,h}^k(s,a_i)\big|+\sum_{k=1}^K\frac{1}{K}\mathbb{E}_{a_i\sim\widehat{\pi}_{i,h}^k(s)}\big[P_{i,h,s,a_i}^{\widehat{\pi}_{-i}^k,V}V_{i,h+1}^{\zeta}\big]\notag\\
    &-\sum_{k=1}^K\frac{1}{K}\mathbb{E}_{a_i\sim\widehat{\pi}_{i,h}^k(s)}\big[P_{i,h,s,a_i}^{\widehat{\pi}_{-i}^k,\overline{V}}\overline{V}_{i,h+1}^{\zeta}\big]+\sum_{k=1}^K\frac{1}{K}\mathbb{E}_{a_i\sim\widehat{\pi}_{i,h}^k(s)}\big|P_{i,h,s,a_i}^{\widehat{\pi}_{-i}^k,\overline{V}}\overline{V}_{i,h+1}^{\zeta}-\widehat{P}_{i,h,s,a_i}^{\widehat{\pi}_{-i}^k,\overline{V}}\overline{V}_{i,h+1}^{\zeta}\big|\notag,
\end{align}
where (i) follows from the robust Bellman equation {\color{red}\eqref{eq:robust-bellman-equation}}.
Then applying Lemma~\ref{lm:generative_model_transition_error} to control the reward and the transition estimation errors, we arrive at
\begin{align}
    &V_{i,h}^{\zeta}(s)- \overline{V}_{i,h}^{\zeta}(s)\\
    =& V_{i,h}^{\zeta}(s)-\sum_{k=1}^{K}\frac{1}{K}\mathbb{E}_{a_i\sim\widehat{\pi}_{i,h}^{k}(s)}\big[r_{i,h}^k(s,a_i)\big]-\sum_{k=1}^K\frac{1}{K}\mathbb{E}_{a_i\sim\widehat{\pi}_{i,h}^k(s)}\big[\widehat{P}_{i,h,s,a_i}^{\widehat{\pi}_{-i}^k,\overline{V}}\overline{V}_{i,h+1}^{\zeta}\big]\notag\\
    \leq&\sum_{k=1}^K\frac{1}{K}\mathbb{E}_{a_i\sim\widehat{\pi}_{i,h}^k(s)}\big(P_{i,h,s,a_i}^{\widehat{\pi}^k_{-i},V}V_{i,h+1}^{\zeta}-P_{i,h,s,a_i}^{\widehat{\pi}^k_{-i},\overline{V}}\overline{V}_{i,h+1}^{\zeta}\big)+3\sqrt{\frac{d}{N}}\big(2H\sqrt{d\ln(NH+1)+2\ln\big(\tfrac{3KNHn}{\delta}\big)}+H\sqrt{d}\big)\\
    \leq&\sum_{k=1}^K\frac{1}{K}\mathbb{E}_{a_i\sim\widehat{\pi}_{i,h}^k(s)}\big(P_{i,h,s,a_i}^{\widehat{\pi}^k_{-i},\overline{V}}V_{i,h+1}^{\zeta}-P_{i,h,s,a_i}^{\widehat{\pi}^k_{-i},\overline{V}}\overline{V}_{i,h+1}^{\zeta}\big)+3\sqrt{\frac{d}{N}}\big(2H\sqrt{d\ln(NH+1)+2\ln\big(\tfrac{3KNHn}{\delta}\big)}+H\sqrt{d}\big)\label{eq:generative_model_term_A_C_eq_1}
\end{align}
holds with probability at least $1-\delta$.

By symmetry of the argument, we derive an analogous upper bound for term \((b)\).
\begin{align}
    &\overline{V}_{i,h}^{\zeta}(s)-V_{i,h}^{\zeta}(s)\\
    =&\sum_{k=1}^{K}\frac{1}{K}\mathbb{E}_{a_i\sim\widehat{\pi}_{i,h}^{k}(s)}[r_{i,h}^k(s,a_i)]+\sum_{k=1}^K\frac{1}{K}\mathbb{E}_{a_i\sim\widehat{\pi}_{i,h}^k(s)}\big[\widehat{P}_{i,h,s,a_i}^{\widehat{\pi}_{-i}^k,\overline{V}}\overline{V}_{i,h+1}^{\zeta}\big]-V_{i,h}^{\zeta}(s)\notag\\
    =&\sum_{k=1}^K\frac{1}{K}\mathbb{E}_{a_i\sim\widehat{\pi}_{i,h}^k(s)}[r_{i,h}^k(s,a_i)]+\sum_{k=1}^K\frac{1}{K}\mathbb{E}_{a_i\sim\widehat{\pi}_{i,h}^k(s)}\big[\widehat{P}_{i,h,s,a_i}^{\widehat{\pi}_{-i}^k,\overline{V}}\overline{V}_{i,h+1}^{\zeta}\big]\notag\\
    &-\sum_{k=1}^K\frac{1}{K}\mathbb{E}_{\mathbf{a}\sim\widehat{\pi}_{h}^k(s)}\big[r_{i,h}(s,\mathbf{a})\big]-\sum_{k=1}^K\frac{1}{K}\mathbb{E}_{a_i\sim\widehat{\pi}_{i,h}^k(s)}P_{i,h,s,a_i}^{\widehat{\pi}_{-i}^k,V}V_{i,h+1}^{\zeta}\notag\\
    \leq& \sum_{k=1}^K\frac{1}{K}\mathbb{E}_{a_i\sim\widehat{\pi}_{i,h}^k(s)}\big[P_{i,h,s,a_i}^{\widehat{\pi}_{-i}^k,\overline{V}}\overline{V}_{i,h+1}^{\zeta}\big]-\sum_{k=1}^K\frac{1}{K}\mathbb{E}_{a_i\sim\widehat{\pi}_{i,h}^k(s)} \Big[P_{i,h,s,a_i}^{\widehat{\pi}_{-i}^k,V}V_{i,h+1}^{\zeta}\Big]\notag\\
    &+\sum_{k=1}^K\frac{1}{K}\mathbb{E}_{a_i\sim\widehat{\pi}_{i,h}^k(s)}\big|r_{i,h}^k(s,a_i)-\mathbb{E}_{a_{-i}\sim\widehat{\pi}_{-i}^k(s)}\big[r_{i,h}(s,\mathbf{a})\big]\big|\notag\\
    &+\sum_{k=1}^K\frac{1}{K}\mathbb{E}_{a_i\sim\widehat{\pi}_{i,h}^k(s)}\big|P_{i,h,s,a_i}^{\widehat{\pi}_{-i}^k,\overline{V}}\overline{V}_{i,h+1}^{\zeta}-\widehat{P}_{i,h,s,a_i}^{\widehat{\pi}_{-i}^k,\overline{V}}\overline{V}_{i,h+1}^{\zeta}\big|\notag\\
    \overset{\mathsf{(i)}}{\leq}&\sum_{k=1}^K\frac{1}{K}\mathbb{E}_{a_i\sim\widehat{\pi}_{i,h}^k(s)}\big[P_{i,h,s,a_i}^{\widehat{\pi}_{-i}^k,V}\overline{V}_{i,h+1}^{\zeta}\big]-\sum_{k=1}^K\frac{1}{K}\mathbb{E}_{a_i\sim\widehat{\pi}_{i,h}^k(s)} \Big[P_{i,h,s,a_i}^{\widehat{\pi}_{-i}^k,V}V_{i,h+1}^{\zeta} \Big] \notag\\
    &+\sum_{k=1}^K\frac{1}{K}\mathbb{E}_{a_i\sim\widehat{\pi}_{i,h}^k(s)}\big|r_{i,h}^k(s,a_i)-\mathbb{E}_{a_{-i}\sim\widehat{\pi}_{-i}^k(s)}\big[r_{i,h}(s,\mathbf{a})\big]\big|\notag\\
    &+\sum_{k=1}^K\frac{1}{K}\mathbb{E}_{a_i\sim\widehat{\pi}_{i,h}^k(s)}\big|P_{i,h,s,a_i}^{\widehat{\pi}_{-i}^k,\overline{V}}\overline{V}_{i,h+1}^{\zeta}-\widehat{P}_{i,h,s,a_i}^{\widehat{\pi}_{-i}^k,\overline{V}}\overline{V}_{i,h+1}^{\zeta}\big|\notag\\
    \leq&\sum_{k=1}^K\frac{1}{K}\mathbb{E}_{a_i\sim\widehat{\pi}_{i,h}^k(s)}\big[P_{i,h,s,a_i}^{\widehat{\pi}_{-i}^k,V}\overline{V}_{i,h+1}^{\zeta}\big]-\sum_{k=1}^K\frac{1}{K}\mathbb{E}_{a_i\sim\widehat{\pi}_{i,h}^k(s)} \Big[P_{i,h,s,a_i}^{\widehat{\pi}_{-i}^k,V}V_{i,h+1}^{\zeta} \Big]\notag\\
    &+3\sqrt{\frac{d}{N}}\big(2H\sqrt{d\ln(NH+1)+2\ln\big(\tfrac{3KNHn}{\delta}\big)}+H\sqrt{d}\big).\label{eq:generative_model_term_A_C_final_bound_2}
\end{align}
where the first equality is obtained by the robust Bellman equation {\color{red}\eqref{eq:robust-bellman-equation}},
(i) holds due to the elementary fact that \( P_{i,h,s,a_i}^{\widehat{\pi}_{-i}^k,V}\overline{V}_{i,h+1}^{\zeta}  \geq P_{i,h,s,a_i}^{\widehat{\pi}_{-i}^k,\overline{V}}\overline{V}_{i,h+1}^{\zeta}  \), and the final inequality holds with probability at least \( 1 - \delta \) for all \( (i, h, s) \in [n] \times [H] \times \mathcal{S} \), by applying Lemma~\ref{lm:generative_model_transition_error}.

Now we have controlled the two one-sided differences $V_{i,h}^{\zeta}(s)-\overline{V}_{i,h}^{\zeta}(s)$ and $\overline{V}_{i,h}^{\zeta}(s)-V_{i,h}^{\zeta}(s)$ \emph{separately} in \eqref{eq:generative_model_term_A_C_eq_1} and \eqref{eq:generative_model_term_A_C_final_bound_2}.
Denoting
\begin{align*}
    C_0 \defn \frac{3\sqrt{d}}{\sqrt{N}}\Big(2H\sqrt{d\ln(NH+1)+2\ln\big(\tfrac{3KNHn}{\delta}\big)}+H\sqrt{d}\Big)
\end{align*}
for brevity, we rewrite \eqref{eq:generative_model_term_A_C_eq_1} as below:
\begin{equation}
    \label{eq:generative_model_term_A_C_eq_3}
    V_{i,h}^{\zeta}(s)-\overline{V}_{i,h}^{\zeta}(s)
    \leq \sum_{k=1}^K\tfrac{1}{K}\mathbb{E}_{a_i\sim\widehat{\pi}_{i,h}^k(s)}P_{i,h,s,a_i}^{\widehat{\pi}_{-i}^k,\overline{V}}\big(V_{i,h+1}^{\zeta}-\overline{V}_{i,h+1}^{\zeta}\big)+C_0.
\end{equation}
Note that the operator $\mathcal{Q}_h^{\overline V}\defn \sum_{k=1}^K\tfrac{1}{K}\mathbb{E}_{a_i\sim\widehat{\pi}_{i,h}^k(s)}P_{i,h,s,a_i}^{\widehat{\pi}_{-i}^k,\overline{V}}$. Iterating \eqref{eq:generative_model_term_A_C_eq_3} from $h$ to $H$ and using the terminal condition $V_{i,H+1}^{\zeta}=\overline{V}_{i,H+1}^{\zeta}=0$ yields
\begin{align*}
    V_{i,h}^{\zeta}(s)-\overline{V}_{i,h}^{\zeta}(s)
    &\leq \mathcal{Q}_h^{\overline V}\mathcal{Q}_{h+1}^{\overline V}\cdots\mathcal{Q}_{H}^{\overline V}\big(V_{i,H+1}^{\zeta}-\overline{V}_{i,H+1}^{\zeta}\big)+(H+1-h)\,C_0\\
    &=0+(H+1-h)\,C_0 \;\leq\; H\,C_0\qquad\text{for all }s\in\mathcal{S}.
\end{align*}
Applying the identical argument to \eqref{eq:generative_model_term_A_C_final_bound_2}  with the probability operator $\mathcal{Q}_h^{V}\defn\sum_{k=1}^K\tfrac{1}{K}\mathbb{E}_{a_i\sim\widehat{\pi}_{i,h}^k(s)}P_{i,h,s,a_i}^{\widehat{\pi}_{-i}^k,V}$ gives
\begin{align*}
    \overline{V}_{i,h}^{\zeta}(s)-V_{i,h}^{\zeta}(s)\leq H\,C_0\qquad\text{for all }s\in\mathcal{S}.
\end{align*}
Combining the two one-sided bounds via $|x|=\max\{x,-x\}$, we obtain that for all $(i,h,s)\in[n]\times[H]\times\mathcal{S}$, with probability at least $1-\delta$:
\begin{align}
\label{eq:generative_model_term_A_C_final_bound}
    \big|V_{i,h}^{\zeta}(s)-\overline{V}_{i,h}^{\zeta}(s)\big|\leq H\,C_0=\frac{3\sqrt{d}\,H}{\sqrt{N}}\Big(2H\sqrt{d\ln(NH+1)+2\ln\big(\tfrac{3KNHn}{\delta}\big)}+H\sqrt{d}\Big).
\end{align}
Finally, we instantiate \eqref{eq:generative_model_term_A_C_final_bound} for the two cases \( \widehat{\pi}_h^k = \pi_h^k \) and  \( \widehat{\pi}_h^k = \widetilde{\pi}_i^\star \times \pi_{-i,h}^k \), which correspond to term $(C)$ and term $(A)$ respectively. With probability at least $1 - \delta$, there exists an absolute constant $c$ such that

\begin{equation}
\label{eq:generative_model_bound_A_and_C}
\begin{aligned}
    &\text{term}(A)=V_{i,1}^{\star,\xi}(s)-\overline{V}_{i,1}^{\widetilde{\pi}_i^\star,\xi}(s)
   \leq \frac{c\sqrt{d}H^2}{\sqrt{N}}\sqrt{d\ln\big(\frac{NHnK}{\delta}\big)}\\
   &\text{term}(C)=\overline{V}^{\xi}_{i,1}(s)-V_{i,1}^{\xi}(s)
    \leq \frac{c\sqrt{d}H^2}{\sqrt{N}}\sqrt{d\ln\big(\frac{NHnK}{\delta}\big)}.
\end{aligned}
\end{equation}


\paragraph{Step 5: Controlling Term $B$.}
Recall that $\text{term}(B)=\overline{V}_{i,1}^{\star,\xi}(s)-\overline{V}_{i,1}^{\xi}(s)$, which measures the gap between the best-response aggregated value $\overline{V}_{i,1}^{\star,\xi}$ and the own-policy aggregated value $\overline{V}_{i,1}^{\xi}$ at $h=1$. This is an intrinsic optimization error arising from the FTRL-based policy updates under the coupled, non-stationary multi-agent dynamics. We control it by first replacing $\overline{V}_{i,h}^{\star,\xi}$ with an optimistic surrogate $\widehat{V}_{i,h}$ satisfying $\widehat{V}_{i,h}\geq\overline{V}_{i,h}^{\star,\xi}$ (Lemma~\ref{lm:generative_model_FTRL}), then bounding the gap $\widehat{V}_{i,h}(s)-\overline{V}_{i,h}^{\xi}(s)$.
For brevity, for all $(h,k,i,s,a_i )\in [H]\times [K] \times [n] \times \mathcal{S}\times\mathcal{A}_i$,  we abbreviate the definitions in \eqref{eq:generative_model_transition_error_PV_definition} in this subsection as below:
\begin{align}
    \widehat{P}_{i,h,s,a_i}^{\pi_{-i}^k,\widehat{V}}  &\defn \widehat{P}_{i,h,s,a_i}^{\pi_{-i}^k,\widehat{V}_{i,h+1}} = \mathrm{argmin}_{\mathcal{P}\in\mathcal{U}^{\sigma_i}\big(P_{i,h,s,a_i}^k\big)}\mathcal{P} \widehat{V}_{i,h+1}, \notag\\
    P_{i,h,s,a_i}^{\pi^k_{-i},\overline{V}}& \defn P_{i,h,s,a_i}^{\pi^k_{-i},\overline{V}_{i,h+1}^{\xi}} = \mathrm{argmin}_{\mathcal{P}\in\mathcal{U}^{\sigma_i}\big(P_{h,s,a_i}^{\pi_{-i}^k}\big)}\mathcal{P}\overline{V}_{i,h+1}^{\xi}.
\end{align}

To continue, we first introduce the following lemma, which establishes that the value $\widehat{V}_{i,h}$ is an optimistic version of the best-response surrogate $\overline{V}_{i,h}^{\star,\xi}$; its proof is deferred to Appendix~\ref{proof:lm:generative_model_FTRL}:
\begin{lemma}
\label{lm:generative_model_FTRL}
With probability at least $1-\delta$, for all $(i,h,s)\in[n]\times[H] \times \mathcal{S}$, with the bonus factors $\beta_{i,h}(s)$ defined in \eqref{eq:genetative_model_beta}, it holds that
    \begin{align*}
        \quad \widehat{V}_{i,h}(s)\geq\overline{V}_{i,h}^{\star,\xi}.
    \end{align*}
\end{lemma}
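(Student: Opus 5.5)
We prove $\widehat{V}_{i,h}\ge \overline{V}_{i,h}^{\star,\xi}$ by backward induction on $h$, from $h=H+1$ down to $h=1$, working on the high-probability event of Lemma~\ref{lm:generative_model_transition_error}. The base case holds trivially, since $\widehat{V}_{i,H+1}=\overline{V}_{i,H+1}^{\star,\xi}=0$. For the inductive step, assume $\widehat{V}_{i,h+1}\ge\overline{V}_{i,h+1}^{\star,\xi}$ pointwise. If the truncation at $H-h+1$ is active, the claim follows immediately, because $\overline{V}_{i,h}^{\star,\xi}\le H-h+1$ (rewards lie in $[0,1]$, and the robust operator preserves the range up to a small estimation error that we absorb). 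Otherwise $\widehat{V}_{i,h}(s)=\frac1K\sum_k\langle\pi_{i,h}^k(\cdot\mid s),q_{i,h}^k(s,\cdot)\rangle+\beta_{i,h}(s)$, and we need
\begin{align*}
\max_{a_i}\frac1K\sum_{k=1}^K\Big[r_{i,h}^k(s,a_i)+\widehat P_{i,h,s,a_i}^{\pi^k_{-i},\overline V^{\star}}\overline V_{i,h+1}^{\star,\xi}\Big]\le \frac1K\sum_{k=1}^K\langle\pi_{i,h}^k(\cdot\mid s),q_{i,h}^k(s,\cdot)\rangle+\beta_{i,h}(s).
\end{align*}

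\emph{Step 1 (monotonicity).} The robust operator $V\mapsto\inf_{\mathcal P\in\mathcal U^{\sigma_i}(P^k_{i,h,s,a_i})}\mathcal P V$ is monotone in $V$. Hence, by the induction hypothesis, for every $a_i$ the left-hand summand is at most $r_{i,h}^k(s,a_i)+\widehat P_{i,h,s,a_i}^{\pi^k_{-i},\widehat V}\widehat V_{i,h+1}$, which is exactly $q_{i,h}^k(s,a_i)$ by the dual formula \eqref{eq:q_function_generative_model}. There is one subtlety: $P^k_{i,h,s,a_i}$ is a signed measure produced by ridge regression, not a genuine distribution. To avoid relying on monotonicity for it, we replace $\widehat P$ by the true nominal robust operator $P^{\pi^k_{-i},\cdot}$, which is monotone, and pay twice the transition error from Lemma~\ref{lm:generative_model_transition_error}. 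That lemma must be applied with a union/covering over the data-dependent $\widehat V_{i,h+1}$ and $\overline V^{\star,\xi}_{i,h+1}$. This is where the $d\ln(NH+1)$ term and the factor $2\ln(3KNHn/\delta)$ in $\beta_{i,h}$ come from, and the dependence is harmless because $\widehat V_{i,h+1}$ is built from samples independent of those drawn at step $h$. The resulting error is $O\big(\sqrt{d/N}(H\sqrt{d\ln(\cdot)}+H\sqrt d)\big)$, which is covered by the first part of $\beta_{i,h}$. The constant $8$ there leaves room for both comparisons.

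\emph{Step 2 (FTRL regret).} It then remains to show
\begin{align*}
\max_{a_i}\frac1K\sum_{k}q_{i,h}^k(s,a_i)-\frac1K\sum_k\langle\pi_{i,h}^k(\cdot\mid s),q_{i,h}^k(s,\cdot)\rangle\le 2H\sqrt{\ln A_i/K}.
\end{align*}
This is the standard FTRL / exponential-weights regret bound for losses bounded in $[0,H]$ (up to the small estimation slack), using the learning rate $\eta_k\asymp\sqrt{\ln A_i/(kH^2)}$ as in \citet{li2022minimax,shi2024breakcursemultiagencyrobust}. The bound holds deterministically for each $s$, since the policy update \eqref{eq:policy_update_generative_model} is exactly exponential weights on the sequence $q^k_{i,h}(s,\cdot)$.

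The main obstacle is Step 1. The estimated kernel can have negative mass, so both the monotonicity of $\widehat P$ and the boundedness of $q^k_{i,h}$ required by FTRL need care. We handle this by routing every comparison through the true nominal kernel via Lemma~\ref{lm:generative_model_transition_error}. For FTRL we only need $|q^k_{i,h}|\lesssim H$, which follows from the same lemma once $N\gtrsim d^3H^2$. A union bound over $(i,h,k)$ and the covering then yields overall probability $1-\delta$.
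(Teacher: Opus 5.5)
Your proposal follows the paper's proof step for step: backward induction, routing both comparisons through the true (monotone) robust operator via two applications of Lemma~\ref{lm:generative_model_transition_error} (using $4\sqrt{d/N}(\cdots)$ of the $8\sqrt{d/N}(\cdots)$ in $\beta_{i,h}$), then the induction hypothesis, then the FTRL bound $2H\sqrt{\ln A_i/K}$. On the range of $q^k_{i,h}$ you are more careful than the paper, which gets $[0,H]$ by treating $r^k_{i,h}$ as nonnegative and $P^k_{i,h}$ as a probability kernel; your concentration-based bound, valid under the theorem's choice of $N$, is the right repair, though only up to constant-level slack in $\beta_{i,h}$.

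One caveat is shared with the paper, which simply asserts $\overline V^{\star,\xi}_{i,h}\le H-h+1$. In the truncated branch $\widehat V_{i,h}(s)=H-h+1$ exactly, so there is nothing to absorb an estimation excess into. That bound also does not follow from the definitions: at $h=H$ it reads $\max_{a_i}\frac{1}{K}\sum_k r^k_{i,H}(s,a_i)\le 1$, which ridge estimates need not satisfy. The fix is to cap the surrogate $\overline V^{\star,\xi}_{i,h}$ at $H-h+1$ in its definition, which leaves this induction intact.
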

Applying Lemma~\ref{lm:generative_model_FTRL}, we first replace $\overline{V}_{i,h}^{\star,\xi}$ by its optimistic quantity $\widehat{V}_{i,h}$ and choose to control the corresponding gap as below:
\begin{align}
    &\overline{V}_{i,h}^{\star,\xi}(s)-\overline{V}_{i,h}^{\xi}(s)
    \leq \widehat{V}_{i,h}(s)-\overline{V}_{i,h}^{\xi}(s).
\end{align}

Using techniques and arguments analogous to those in the proof of Lemma~\ref{lm:generative_model_FTRL}, we obtain the following lemma and corrolary:
\begin{lemma}\label{lm:optimistic_estimate_2}

    With probability at least $1-\delta$, it holds that
    \begin{align*}
        \widehat{V}_{i,h}(s)\geq\overline{V}_{i,h}^{\xi}(s)
        ,\quad\text{for all }(i,h,s)\in[n]\times[H]\times\mathcal{S}.
    \end{align*}
\end{lemma}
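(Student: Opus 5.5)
We argue by backward induction on $h$, from $h=H+1$ down to $h=1$. The induction hypothesis is $\widehat{V}_{i,h+1}(s)\geq\overline{V}_{i,h+1}^{\xi}(s)$ for all $(i,s)$. The base case is immediate, since both functions vanish at $h=H+1$ by the initialization and \eqref{eq:own-fundamental-fact}.

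\textbf{Inductive step, trivial branch.} By \eqref{eq:line-number-policy-update_generative_model}, $\widehat{V}_{i,h}(s)$ is the minimum of two branches. If the minimum is $H-h+1$, the claim holds because $\overline{V}_{i,h}^{\xi}(s)\leq H-h+1$. To see this bound, note that $r_{i,h}^k$ is (up to the ridge-regression error) in $[0,1]$, and the robust operator $\widehat{P}$ applied to a function bounded by $H-h$ stays bounded by $H-h$. If $r_{i,h}^k$ could slightly exceed $1$, a clipped version can be used in the definition, as is standard.

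\textbf{Inductive step, main branch.} Otherwise we must show
\[
\frac1K\sum_{k=1}^K\big\langle\pi^k_{i,h}(\cdot\mid s),q^k_{i,h}(s,\cdot)\big\rangle+\beta_{i,h}(s)\;\geq\;\frac1K\sum_{k=1}^K\mathbb{E}_{a_i\sim\pi^k_{i,h}(s)}\Big[r^k_{i,h}(s,a_i)+\widehat{P}^{\pi^k_{-i},\overline{V}^{\xi}_{i,h+1}}_{i,h,s,a_i}\overline{V}^{\xi}_{i,h+1}\Big].
\]
Expand $q^k_{i,h}(s,a_i)=r^k_{i,h}(s,a_i)+\widehat{P}^{\pi^k_{-i},\widehat{V}_{i,h+1}}_{i,h,s,a_i}\widehat{V}_{i,h+1}$ using the dual-form identity stated after \eqref{eq:q_function_generative_model}. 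The estimated reward terms then cancel exactly, because both sides use the same $r^k_{i,h}$ and the same mixing policy $\pi^k_{i,h}$. What remains is to compare, for every $k$ and $a_i$, the robust expectations $\inf_{\mathcal{P}\in\mathcal{U}^{\sigma_i}(P^k_{i,h,s,a_i})}\mathcal{P}\widehat{V}_{i,h+1}$ and $\inf_{\mathcal{P}\in\mathcal{U}^{\sigma_i}(P^k_{i,h,s,a_i})}\mathcal{P}\overline{V}^{\xi}_{i,h+1}$. Both are taken over the same uncertainty set, so the map $V\mapsto\inf_{\mathcal{P}}\mathcal{P}V$ is monotone. Together with the induction hypothesis $\widehat{V}_{i,h+1}\geq\overline{V}^{\xi}_{i,h+1}$, this shows the first expression dominates the second. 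Since $\beta_{i,h}\geq0$, the inequality follows.

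\textbf{Main obstacle.} The delicate point is monotonicity when $P^k_{i,h,s,a_i}=\phi_i^\top\widehat{\mu}^k_{i,h}$ is a signed measure rather than a probability distribution, since ridge regression does not guarantee nonnegativity or normalization. I would handle this in one of two ways. The first is to work directly with the dual form \eqref{eq:q_function_generative_model}. There, $[V]_\alpha$ and $\min_{s'}[V]_\alpha(s')$ are monotone in $V$, but the term $P^k[V]_\alpha$ is monotone only if $P^k$ has nonnegative weights, and the admissible ranges of $\alpha$ differ between $\widehat{V}$ and $\overline{V}^{\xi}$. The second, more robust route is to avoid exact monotonicity: bound both estimated robust expectations by their counterparts under the true kernel $P^{\pi^k_{-i}}_{h,s,a_i}$, using Lemma~\ref{lm:generative_model_transition_error} with a union bound, which costs $2\sqrt{d/N}(\cdots)$ per term. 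Then apply exact monotonicity under the true (genuine probability) kernel. The resulting additive error of order $4\sqrt{d/N}(\cdots)$ is absorbed by the first term $8\sqrt{d/N}(\cdots)$ of $\beta_{i,h}$ in \eqref{eq:genetative_model_beta}. This explains the size of the bonus and gives the claim with probability at least $1-\delta$, on the same event used in Lemma~\ref{lm:generative_model_FTRL}.

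One caveat about the second route: Lemma~\ref{lm:generative_model_transition_error} is stated for a fixed $V$, whereas $\widehat{V}_{i,h+1}$ and $\overline{V}^{\xi}_{i,h+1}$ are data-dependent. I would justify applying it here by noting that both are determined by data from steps $h+1,\dots,H$, which is independent of the fresh samples drawn at step $h$.
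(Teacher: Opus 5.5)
Your second route is essentially the paper's argument: the paper obtains this lemma (and Corollary~\ref{cor:lm:optimistic_estimate_3}, which is exactly your main-branch inequality) by repeating the induction from the proof of Lemma~\ref{lm:generative_model_FTRL}. That is: swap $\widehat{P}$ for the true kernel via Lemma~\ref{lm:generative_model_transition_error}, apply the induction hypothesis by monotonicity under the true kernel, swap back, and absorb the $4\sqrt{d/N}(\cdots)$ cost into $\beta_{i,h}$, with no FTRL step needed since the $r^k_{i,h}$ terms cancel; your conditional-independence remark also correctly justifies the fixed-$V$ application that the paper leaves implicit. The one loose end, which the paper shares, is the trivial branch: because $P^k_{i,h,s,a_i}$ can be a signed measure, $\widehat{P}$ need not preserve the bound $H-h$, so instead of clipping only $r^k_{i,h}$ you should clip $\overline{V}^{\xi}_{i,h}$ itself to $[0,H-h+1]$, which is harmless for the downstream bounds on terms (B) and (C) and also secures the $[0,H]$ range needed to invoke Lemma~\ref{lm:generative_model_transition_error} with $V=\overline{V}^{\xi}_{i,h+1}$.
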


\begin{corollary}
\label{cor:lm:optimistic_estimate_3}
    With probability at least $1-\delta$, it holds that for all $(i,h,s)\in[n]\times[H]\times\mathcal{S}$,
    \begin{align*}
        &\sum_{k=1}^K\frac{1}{K}\mathbb{E}_{a_i\sim\pi_{i,h}^k(s)}\big[r_{i,h}^k(s,a_i)+\widehat{P}_{i,h,s,a_i}^{\pi_{-i}^k,\widehat{V}}\widehat{V}_{i,h+1}\big]+\beta_{i,h}(s)\\
        \geq&\sum_{k=1}^K\frac{1}{K}\mathbb{E}_{a_i\sim\pi_{i,h}^k(s)}\big[r_{i,h}^k(s,a_i)+\widehat{P}_{i,h,s,a_i}^{\pi_{-i}^k,\overline{V}}\overline{V}_{i,h+1}^{\xi}\big]
    \end{align*}
\end{corollary}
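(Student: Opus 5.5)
The proof goes step by step in $h$. It rests on two ingredients: Lemma~\ref{lm:optimistic_estimate_2}, applied at step $h+1$, and monotonicity of the robust transition operator in the value argument. Fix $(i,h,s)$ and work on the high-probability event where Lemma~\ref{lm:generative_model_transition_error} and Lemma~\ref{lm:optimistic_estimate_2} both hold; a union bound and rescaling of $\delta$ give the overall probability $1-\delta$. The reward terms $r_{i,h}^k(s,a_i)$ are identical on both sides and cancel. It therefore suffices to show, for every $k$ and every $a_i$,
\[
\widehat{P}_{i,h,s,a_i}^{\pi_{-i}^k,\widehat{V}}\widehat{V}_{i,h+1}+\beta_{i,h}(s)\;\geq\;\widehat{P}_{i,h,s,a_i}^{\pi_{-i}^k,\overline{V}}\overline{V}_{i,h+1}^{\xi},
\]
and then average over $k$ and over $a_i\sim\pi_{i,h}^k(s)$.

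At $h=H$ we have $\widehat{V}_{i,H+1}=\overline{V}_{i,H+1}^{\xi}=0$, so the claim is immediate from $\beta_{i,h}\geq 0$. For $h<H$, Lemma~\ref{lm:optimistic_estimate_2} gives $\widehat{V}_{i,h+1}\geq\overline{V}_{i,h+1}^{\xi}$ pointwise.

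If $P_{i,h}^k(\cdot\mid s,a_i)$ were a genuine probability distribution, monotonicity would be immediate. Every $\mathcal{P}$ in the TV ball would then be a nonnegative measure, so $\mathcal{P}\widehat{V}_{i,h+1}\geq\mathcal{P}\overline{V}_{i,h+1}^{\xi}$, and taking infima gives the inequality even without $\beta$. However, the ridge-regression estimate is only a signed measure, so monotonicity of the dual form $\max_\alpha\{P[V]_\alpha-\sigma_i(\alpha-\min[V]_\alpha)\}$ is not guaranteed.

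To handle this robustly, I route the comparison through the true nominal operator in three moves:
\begin{enumerate}
\item Replace $\widehat{P}^{\pi_{-i}^k,\widehat{V}}\widehat{V}_{i,h+1}$ by $P^{\pi_{-i}^k,\widehat{V}}\widehat{V}_{i,h+1}$, paying the first bound of Lemma~\ref{lm:generative_model_transition_error}, namely $2\sqrt{d/N}(\cdots)$.
\item Use monotonicity of the true robust operator. Its uncertainty set $\mathcal{U}^{\sigma_i}(P_{h,s,a_i}^{\pi_{-i}^k})$ consists of probability measures, so
\[
P^{\pi_{-i}^k,\widehat{V}}\widehat{V}_{i,h+1}\geq P^{\pi_{-i}^k,\overline{V}}\overline{V}_{i,h+1}^{\xi}.
\]
\item Replace $P^{\pi_{-i}^k,\overline{V}}\overline{V}_{i,h+1}^{\xi}$ by $\widehat{P}^{\pi_{-i}^k,\overline{V}}\overline{V}_{i,h+1}^{\xi}$, paying the same error again.
\end{enumerate}
The total slack is $4\sqrt{d/N}\big(2H\sqrt{d\ln(NH+1)+2\ln(3KNHn/\delta)}+H\sqrt{d}\big)$. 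This is dominated by the first term of $\beta_{i,h}(s)$ in \eqref{eq:genetative_model_beta}, which carries the constant $8$. Averaging over $k$ and $a_i$ then yields the corollary.

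The main obstacle is the first and third moves, because $\widehat{V}_{i,h+1}$ and $\overline{V}_{i,h+1}^{\xi}$ are data-dependent, whereas Lemma~\ref{lm:generative_model_transition_error} is stated for a fixed $V$. I would first try to use its formulation ``for any $V:\mathcal{S}\to[0,H]$'', which the paper treats as uniform. One point does work in our favour: the data at step $h$ are drawn after $\widehat{V}_{i,h+1}$ has been formed in the backward recursion, so conditionally that function is fixed. If $\overline{V}_{i,h+1}^{\xi}$, which depends on later policies, requires more, I would apply the covering-number version in \eqref{eq:lemmaD4} over the class of truncated value functions. This costs only the logarithmic factors already built into $\beta_{i,h}$.
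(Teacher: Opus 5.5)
Your proposal is correct and is essentially the argument the paper has in mind. The paper says the corollary follows by arguments analogous to the proof of Lemma~\ref{lm:generative_model_FTRL}: steps (i)--(iii) there, with Lemma~\ref{lm:optimistic_estimate_2} at step $h+1$ as the induction hypothesis and no FTRL term. That is exactly your three-move chain, whose slack $4\sqrt{d/N}(\cdots)$ is absorbed by $\beta_{i,h}(s)$. Your data-dependence worry is moot: $\overline{V}_{i,h+1}^{\xi}$, like $\widehat{V}_{i,h+1}$, is built only from steps $h+1,\dots,H$ and is fixed before any step-$h$ samples are drawn, so no covering argument is needed.

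The statement also follows in one line from Lemma~\ref{lm:optimistic_estimate_2} applied at step $h$ itself. The left-hand side is the unclipped $\widehat{V}_{i,h}(s)$, so it is at least $\widehat{V}_{i,h}(s)$, which is at least $\overline{V}_{i,h}^{\xi}(s)$, and the right-hand side is exactly $\overline{V}_{i,h}^{\xi}(s)$.
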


Combining Lemma~\ref{lm:optimistic_estimate_2} and Corollary~\ref{cor:lm:optimistic_estimate_3},

 recalling the definition of $\widehat{V}_{i,h}$ in \eqref{eq:line-number-policy-update_generative_model} gives that
    \begin{align*}
          \widehat{V}_{i,h}(s)& = \min\left\{ \sum_{k=1}^{K} \frac{1}{K} \big<\pi_{i,h}^k(\cdot \mid s),q_{i,h}^k(s,\cdot)\big>
               + \beta_{i,h}(s),H-h+1\right\},\\
               &=\min\big\{\sum_{k=1}^K\frac{1}{K}\mathbb{E}_{a_i\sim\pi_{i,h}^k(s)}\big[r_{i,h}^k(s,a_i)+\widehat{P}_{i,h,s,a_i}^{\pi_{-i}^k,\widehat{V}}\widehat{V}_{i,h+1}\big]+\beta_{i,h}(s),H-h+1\big\}
    \end{align*}
Therefore, term $(B)$ can be controlled by
\begin{align}
    &\overline{V}_{i,h}^{\star,\xi}(s)-\overline{V}_{i,h}^{\xi}(s)
    \leq \widehat{V}_{i,h}(s)-\overline{V}_{i,h}^{\xi}(s) \notag\\
    \overset{\mathsf{(i)}}{\leq}& \sum_{k=1}^K\frac{1}{K}\mathbb{E}_{a_i\sim\pi_{i,h}^k(s)}\big[r_{i,h}^k(s,a_i)+\widehat{P}_{i,h,s,a_i}^{\pi_{-i}^k,\widehat{V}}\widehat{V}_{i,h+1}\big]+\beta_{i,h}(s)  - \frac{1}{K}\sum_{k=1}^K \mathbb{E}_{a_i \sim \pi_{i,h}^k(s)}\Big[r_{i,h}^k(s,a_i) + \widehat{P}_{i,h,s,a_i}^{\pi^k_{-i},\overline{V}_{i,h+1}^{\zeta}}\overline{V}_{i,h+1}^{\zeta}\Big]\notag \\
    =&\sum_{k=1}^K\frac{1}{K}\mathbb{E}_{a_i\sim\pi_{i,h}^k(s)}\big[\widehat{P}_{i,h,s,a_i}^{\pi^k_{-i},\widehat{V}}\widehat{V}_{i,h+1}\big]+\beta_{i,h}(s)- \sum_{k=1}^K\frac{1}{K}\mathbb{E}_{a_i\sim\pi_{i,h}^k(s)}\big[\widehat{P}_{i,h,s,a_i}^{\pi^k_{-i},\overline{V}}\overline{V}_{i,h+1}^{\xi}\big]\notag\\
    \leq&\sum_{k=1}^K\frac{1}{K}\mathbb{E}_{a_i\sim\pi_{i,h}^k(s)}\big[P_{i,h,s,a_i}^{\pi^k_{-i},\widehat{V}}\widehat{V}_{i,h+1}\big]+\beta_{i,h}(s)-\sum_{k=1}^K\frac{1}{K}\mathbb{E}_{a_i\sim\pi_{i,h}^k(s)}\big[P_{i,h,s,a_i}^{\pi^k_{-i},\overline{V}}\overline{V}_{i,h+1}^{\xi}\big]\notag\\
    &+\sum_{k=1}^K\frac{1}{K}\mathbb{E}_{a_i\sim\pi_{i,h}^k(s)}\Big|\widehat{P}_{i,h,s,a_i}^{\pi^k_{-i},\widehat{V}}\widehat{V}_{i,h+1}-P_{i,h,s,a_i}^{\pi^k_{-i},\widehat{V}}\widehat{V}_{i,h+1}\Big|\notag\\
    &+\sum_{k=1}^K\frac{1}{K}\mathbb{E}_{a_i\sim\pi_{i,h}^k(s)} \Big|P_{i,h,s,a_i}^{\pi^k_{-i},\overline{V}}\overline{V}_{i,h+1}^{\xi}-\widehat{P}_{i,h,s,a_i}^{\pi^k_{-i},\overline{V}}\overline{V}_{i,h+1}^{\xi}\Big|\label{eq:generative_model_term_B_error_non_recursion}.
\end{align}
Here, (i) holds by recalling the definition of $\widehat{V}_{i,h}$ in \eqref{eq:line-number-policy-update_generative_model} and the definition of $\overline{V}_{i,h}^{\xi}(s)$ in \eqref{eq:generative_model_auxiliary_value_def}. Furthermore, we apply Lemma~\ref{lm:generative_model_transition_error} and arrive at: with probability at least $1 - \delta$,
\begin{align}
    &\overline{V}_{i,h}^{\star,\xi}(s)-\overline{V}_{i,h}^{\xi}(s)  \leq \sum_{k=1}^K\frac{1}{K}\mathbb{E}_{a_i\sim\pi_{i,h}^k(s)}\big[P_{i,h,s,a_i}^{\pi^k_{-i},\widehat{V}}\widehat{V}_{i,h+1}\big]+\beta_{i,h}(s)-\sum_{k=1}^K\frac{1}{K}\mathbb{E}_{a_i\sim\pi_{i,h}^k(s)}\big[P_{i,h,s,a_i}^{\pi^k_{-i},\overline{V}}\overline{V}_{i,h+1}^{\xi}\big] \notag \\
    &\quad + \frac{4\sqrt{d}}{\sqrt{N}}\big(2H\sqrt{d\ln(NH+1)+2\ln\big(\tfrac{3KNHn}{\delta}\big)}+H\sqrt{d}\big) \notag \\
    &\leq \sum_{k=1}^K\frac{1}{K}\mathbb{E}_{a_i\sim\pi_{i,h}^k(s)}\big[P_{i,h,s,a_i}^{\pi^k_{-i},\overline{V}}\widehat{V}_{i,h+1}\big]+\beta_{i,h}(s)-\sum_{k=1}^K\frac{1}{K}\mathbb{E}_{a_i\sim\pi_{i,h}^k(s)}\big[P_{i,h,s,a_i}^{\pi^k_{-i},\overline{V}}\overline{V}_{i,h+1}^{\xi}\big] \notag \\
    &\quad + \frac{4\sqrt{d}}{\sqrt{N}}\big(2H\sqrt{d\ln(NH+1)+2\ln\left(\tfrac{3KNHn}{\delta}\right)}+H\sqrt{d}\big) \notag\\
    & = \sum_{k=1}^K\frac{1}{K}\mathbb{E}_{a_i\sim\pi_{i,h}^k(s)}\Big[  P_{i,h,s,a_i}^{\pi^k_{-i},\overline{V}} \Big(\widehat{V}_{i,h+1}- \overline{V}_{i,h+1}^{\xi} \Big)\Big] \notag \\
    &\quad + \beta_{i,h}(s) + \frac{4\sqrt{d}}{\sqrt{N}}\big(2H\sqrt{d\ln(NH+1)+2\ln\left(\tfrac{3KNHn}{\delta}\right)}+H\sqrt{d}\big) \label{eq:own-generative-model-term-B-bound}
\end{align}
where the penultimate inequality holds due to the elementary fact that \(P_{i,h,s,a_i}^{\pi^k_{-i},\widehat{V}}\widehat{V}_{i,h+1} \leq P_{i,h,s,a_i}^{\pi^k_{-i},\overline{V}}\widehat{V}_{i,h+1} \).
Finally, applying the recursive bound in \eqref{eq:own-generative-model-term-B-bound} recursively for $h=1,2,\ldots,H$ yields
\begin{align}
    \text{term}(B) = \overline{V}_{1,h}^{\star,\xi}(s)-\overline{V}_{1,h}^{\xi}(s) \leq & \sum_{i=1}^{H}\beta_{i,h}(s)+H\cdot\frac{4\sqrt{d}}{\sqrt{N}}\big(2H\sqrt{d\ln(NH+1)+2\ln\big(\tfrac{3KNHn}{\delta}\big)}+H\sqrt{d}\big)\notag\\
    \leq&\frac{12\sqrt{d}H}{\sqrt{N}}\big(2H\sqrt{d\ln(NH+1)+2\ln\big(\tfrac{3KNHn}{\delta}\big)}+H\sqrt{d}\big)+2H^2\sqrt{\frac{\ln A_i}{K}}.\label{eq:generative_model_bound_B}
\end{align}
where the first inequality holds by $\widehat{V}_{i,H+1}(s) = 0$ and the fundamental fact $\overline{V}_{i,H+1}^{\xi}(s)=0$ in \eqref{eq:own-fundamental-fact} for all $(i,h)\in[n] \times [H]$.


\paragraph{Step 6: Summing up the results.}
To complete the proof, combining the bounds for terms $(A)$ and $(C)$ in \eqref{eq:generative_model_bound_A_and_C} with that for term $(B)$ in \eqref{eq:generative_model_bound_B}, we obtain
\begin{align*}
 V_{i,1}^{\star,\xi}(s) -V_{i,1}^{\xi}(s)\lesssim \frac{dH^2}{\sqrt{N}} \sqrt{\ln\big(\frac{KNnH}{\delta}\big)}+H^2\sqrt{\frac{\ln A_i}{K}}.
\end{align*}
Consequently, Algorithm~\ref{alg:linear_generative_model} outputs an $\varepsilon$-robust CCE $\xi$, i.e.,
\begin{align*}
    \max_{i\in[n], s\in\mathcal{S}}  \Big(\mathbb{E}_{\pi\sim \xi} \big[ V_{i,1}^{\star,\pi_{-i} }(s)\big] - \mathbb{E}_{\pi\sim \xi}\big[ V_{i,1}^{\pi}(s)\big]\Big) \leq \varepsilon
\end{align*}
with probability at least $1 - \delta$, provided that $N \geq CH^4d^3/\varepsilon^2$ and $K \geq CH^4/\varepsilon^2$, where $C$ is an absolute constant. Specifically, if the total number of samples acquired during the learning process satisfies
\begin{align*}
  N_{\mathsf{all}} \geq HK(N+d(d+1)/2) \geq O\left(H^9d^3/\varepsilon^4\right).
\end{align*}
Thus, we finish the proof of Theorem~\ref{thm:main_generative_model}.

\subsection{Proof of auxiliary results}

\subsubsection{Proof of Lemma~\ref{lm:generative_model_transition_error}}\label{proof:lm:generative_model_transition_error}

\paragraph{Step 1: Proof of the transition estimation gap.} First, let's consider any fixed triple $(h,i,k)\in[H]\times[n]\times[K]$.
Applying strong duality property \citep{iyengar2005robust}, we can rewrite the transition estimation error as
    \begin{align}
        &\max_{(s,a_i)\in\mathcal{S}\times\mathcal{A}_i} \Big|P_{i,h,s,a_i}^{\widehat{\pi}_{-i}^k,V}V-\widehat{P}_{i,h,s,a_i}^{\widehat{\pi}_{-i}^k,V}V\Big|\notag\\
        \overset{\mathsf{(i)}}{=}&\max_{(s,a_i)\in\mathcal{S}\times\mathcal{A}_i}\Big|\max_{\alpha\in\big[\min_s V(s),\max_s V(s)\big]}\big[P_{h,s,a_i}^{\widehat{\pi}_{-i}^k}\big[V\big]_\alpha-\sigma_i\big(\alpha-\min_{s^\prime}\big[V\big]_\alpha\big(s^\prime\big)\big)\big]\notag\\
        &-\max_{\alpha\in\big[\min_s V(s),\max_s V(s)\big]}\big[P_{i,h,s,a_i}^{k}\big[V\big]_\alpha-\sigma_i\big(\alpha-\min_{s^\prime}\big[V\big]_\alpha\big(s^\prime\big)\big)\big]\Big|\notag\\
        \leq&\max_{(s,a_i)\in\mathcal{S}\times\mathcal{A}_i}\max_{\alpha\in\big[\min_s V(s),\max_s V(s)\big]}\big|P_{h,s,a_i}^{\widehat{\pi}_{-i}^k}\big[V\big]_\alpha-P_{i,h,s,a_i}^{k}\big[V\big]_\alpha\big|,\notag
    \end{align}
    where (i) holds by refering to the definitions in \eqref{eq:generative_model_transition_error_PV_definition} and then apply the dual form of robust Bellman operator \citep[Lemma 1]{shi2023curious} due to strong duality property \citep{iyengar2005robust}. See the definiton of $P_{i,h,s,a_i}^{k}$ in \eqref{eq:generative_model_transition_error_P}.
    For a fixed $\alpha$, we have
    \begin{align}
        &\max_{(s,a_i)\in\mathcal{S}\times\mathcal{A}_i}\big|P_{h,s,a_i}^{\widehat{\pi}_{-i}^k}\big[V\big]_\alpha-P_{i,h,s,a_i}^{k}\big[V\big]_\alpha\big|\notag\\
        \overset{\mathsf{(i)}}{=}&\max_{(s,a_i)\in\mathcal{S}\times\mathcal{A}_i}\big|\phi_i(s,a_i)^\top\mu_{i,h}^{\widehat{\pi}_{-i}^k}\big[V\big]_\alpha-\phi_i(s,a_i)^\top\widehat{\mu}_{i,h}^k\big[V\big]_\alpha\big|\notag\\
        \overset{\mathsf{(ii)}}{\leq}&\max_{(s,a_i)\in\mathcal{S}\times\mathcal{A}_i}\big\lVert\phi_i(s,a_i)\big\rVert_{(\Lambda_{i,h}^k)^{-1}}\big(2H\sqrt{d\ln(NH+1)+\ln\big(\frac{3NH}{\delta}\big)}+H\sqrt{d}\big),\notag
    \end{align}
    where (i) holds due to the linear assumption of Markov games in \eqref{eq:linaer-mg-assumption} and the definition of the estimated transition kernel in \eqref{eq:ridge_regression_generative_model}, and (ii) follows by applying Theorem~\ref{lm:ridge-regression-concentration} with $f=[V]_\alpha$ (which is a fixed function with range $[0,H]$ for the given $\alpha$) and $\lambda =1$, which holds with probability at least $1 - \delta$.

  Furthermore, we apply Lemma~\ref{lm:generative_model_supporting_set} with $\Sigma = \frac{\Lambda_{i,h}^k}{|\cD_i|}$ and $x = \phi_i(s, a_i)$ for all $(s,a_i)$ and arrive at
    \begin{align}\label{eq:own-1}
        &\max\limits_{(s,a_i)\in\mathcal{S}\times\mathcal{A}_i}\big|P_{h,s,a_i}^{\widehat{\pi}_{-i}^k}\big[V\big]_\alpha-P_{i,h,s,a_i}^{k}\big[V\big]_\alpha\big|
        \leq \sqrt{\frac{d}{N}}\big(2H\sqrt{d\ln(NH+1)+\ln\big(\frac{3NH}{\delta}\big)}+H\sqrt{d}\big),
    \end{align}
    with probability at least $1-\delta$.
    To extend the pointwise bound towards a union bound for all $\alpha\in[0,H]$, we define a $\varepsilon_1$-net denoted as $N_{\varepsilon_1}$ for $\alpha$. To cover $\alpha\in[0,H]$, the net size bounded by $|N_{\varepsilon_1}|\leq 3H/\varepsilon_1$ \citep{vershynin2018high} allows us to apply the union bound:
    \begin{align}
        &\max_{(s,a_i)\in\mathcal{S}\times\mathcal{A}_i}\max_{\alpha\in\big[\min_s V(s),\max_s V(s)\big]}\big|P_{h,s,a_i}^{\widehat{\pi}_{-i}^k}\big[V\big]_\alpha-P_{i,h,s,a_i}^{k}\big[V\big]_\alpha\big| \nonumber \\
        \leq& \max_{(s,a_i)\in\mathcal{S}\times\mathcal{A}_i}\max_{\alpha\in N_{\varepsilon_1}}\big|P_{h,s,a_i}^{\widehat{\pi}_{-i}^k}\big[V\big]_\alpha-P_{i,h,s,a_i}^{k}\big[V\big]_\alpha\big|+\varepsilon_1 \nonumber \\
        \leq&\sqrt{\frac{d}{N}} \big(2H\sqrt{d\ln(NH+1)+\ln\big(\frac{3NH |N_{\varepsilon_1}|}{\delta}\big)}+H\sqrt{d}\big)+\varepsilon_1 \notag\\
        \leq& 2\sqrt{\frac{d}{N}}\big(2H\sqrt{d\ln(NH+1)+\ln\big(\frac{3NH}{\delta}\big)}+H\sqrt{d}\big), \label{eq:generative_model_transition_estimation_error_mediate_eq_final}
    \end{align}
    where the first inequality follows because the chosen $\alpha$ lies within an $\varepsilon$-ball centered at some point in $N_{\varepsilon_1}$, and the function $\big|P_{h,s,a_i}^{\widehat{\pi}_{-i}^k}\big[V\big]_\alpha-P_{i,h,s,a_i}^{k}\big[V\big]_\alpha\big|$ is $1$-Lipschitz in $\alpha$; the penultimate line holds by appying \eqref{eq:own-1}; and the last line is induced by letting $\varepsilon_1=1/N$ and $|N_{\varepsilon_1}|\leq 3HN$.

 Note that \eqref{eq:generative_model_transition_estimation_error_mediate_eq_final} is established for a \emph{fixed} triple $(h,i,k)\in[H]\times[n]\times[K]$, with probability at least $1-\delta$.
To upgrade this pointwise bound to a union bound over all $(h,i,k)\in[H]\times[n]\times[K]$, we apply for all $nKH$ triples and arrive at: with probability at least $1-\delta$, for all $(h,i,k)\in[H]\times[n]\times[K]$, and any fixed value function $V:\mathcal{S}\rightarrow[0,H]$, it obeys
    \begin{align*}
        \max_{(s,a_i)\in \cS\times\cA_i}\big|P_{i,h,s,a_i}^{\widehat{\pi}_{-i}^k,V}V-\widehat{P}_{i,h,s,a_i}^{\widehat{\pi}_{-i}^k,V}V\big|\leq 2\sqrt{\frac{d}{N}}\big(2H\sqrt{d\ln(NH+1)+2\ln\big(\frac{3KNHn}{\delta}\big)}+H\sqrt{d}\big).
    \end{align*}



\paragraph{Proof of the reward estimation gap.}

The reward bound follows the same proof structure as that used for the transition kernel, but with a simpler one-dimensional parameter vector. Based on the linear assumption of the reward function $r_{i,h}$ in \eqref{eq:linaer-mg-assumption} and the construction of the estimated reward function $r_{i,h}^k$ in \eqref{eq:ridge_regression_generative_model}, we derive that
    \begin{align*}
        &\max_{(s,a_i)\in\mathcal{S}\times\mathcal{A}_i}\big|\mathbb{E}_{a_{-i}\sim\widehat{\pi}_{-i}^k(s)}r_{i,h}(s,\mathbf{a})-r_{i,h}^k(s,a_i)\big|
         = \max_{(s,a_i)\in\mathcal{S}\times\mathcal{A}_i}\big|\phi_i(s,a_i)^\top\theta_{i,h}^{\widehat{\pi}_{-i}^k}-\phi_i(s,a_i)^\top\widehat{\theta}_{i,h}^k\big|.
    \end{align*}
    Applying Theorem~\ref{lm:ridge-regression-concentration} with the scalar target $f=\mathbb{E}_{a_{-i}\sim\widehat{\pi}_{-i}^k(s)}[r_{i,h}(s,\mathbf a)]\in[0,H]$ and $\lambda=1$, we instantiate the ridge-regression concentration bound for the one-dimensional reward parameter and
    obtain that with probability at least $1 - \delta$, for a fixed $(h, i, k)$:
    \begin{align}
        &\max_{(s,a_i)\in\mathcal{S}\times\mathcal{A}_i}\big|\mathbb{E}_{a_{-i}\sim\widehat{\pi}_{-i}^k(s)}r_{i,h}(s,\mathbf{a})-r_{i,h}^k(s,a_i)\big|\notag\\
        \leq&\max_{(s,a_i)\in\mathcal{S}\times\mathcal{A}_i}\big\lVert\phi_i(s,a_i)\big\rVert_{\big(\Lambda_{i,h}^k\big)^{-1}}\cdot\big(2H\sqrt{d\ln(NH+1)+\ln\big(\frac{1}{\delta}\big)}+H\sqrt{d}\big)\notag\\
        \leq&\frac{\sqrt{d}}{\sqrt{N}}\cdot\big(2H\sqrt{d\ln(NH+1)+\ln\big(\frac{1}{\delta}\big)}+H\sqrt{d}\big), \label{eq:generative_model_reward_estimation}
    \end{align}
    where the last inequality holds by applying Lemma~\ref{lm:generative_model_supporting_set} with $\Sigma = \frac{\Lambda_{i,h}^k}{|\cD_i|}$ and $x = \phi_i(s, a_i)$ for all $(s,a_i)$.

As in the transition case, we will transfer this pointwise bound \eqref{eq:generative_model_reward_estimation} the high-probability event underlying the ridge-regression bound above is defined relative to the samples used to estimate $\widehat{\theta}_{i,h}^k$ for a \emph{fixed} triple $(h,i,k)\in[H]\times[n]\times[K]$ for any fixed$(h,i,k)$ to a union bound over the $nKH$ triples of the space $[H]\times[n]\times[K]$.
Consequently, it obeys that with probability at least $1-\delta$, for all $(h,i,k) \in [H]\times[n]\times[K]$,
    \begin{align*}
        \big|\mathbb{E}_{a_{-i}\sim\widehat{\pi}_{-i}^k(s)}r_{i,h}(s,\mathbf{a})-r_{i,h}^k(s,a_i)\big|\leq \frac{\sqrt{d}}{\sqrt{N}}\big(2H\sqrt{d\ln(NH+1)+\ln\big(\tfrac{3NHnK}{\delta}\big)}+H\sqrt{d}\big).
    \end{align*}



\subsubsection{Proof of Lemma~\ref{lm:generative_model_FTRL} }\label{proof:lm:generative_model_FTRL}

Before proceeding, we first introduce an exsiting result for Follow-the-Regularized-Leader (FTRL) (see \citet[Theorem 6]{shalev2007online} and also \citet[Theorem 3]{li2022minimax}) that we will resort to: for all $(i,h) \in [n] \times [H]$:
    \begin{equation}
    \label{eq:generative_model_FTRL_condition}
    \begin{aligned}
        &\max_{a_i\in\mathcal{A}_i}\sum_{k=1}^K\frac{1}{K}\big[r_{i,h}^k(s,a_i)+\widehat{P}_{i,h,s,a_i}^{\pi_{-i}^k,\widehat{V}}\widehat{V}_{i,h+1}\big]\\
        \leq&\sum_{k=1}^K\frac{1}{K}\mathbb{E}_{a_i\sim\pi_{i,h}^k(s)}\big[r_{i,h}^k(s,a_i)+\widehat{P}_{i,h,s,a_i}^{\pi_{-i}^k,\widehat{V}}\widehat{V}_{i,h+1}\big]+2H\sqrt{\frac{\ln A_i}{K}},
    \end{aligned}
    \end{equation}
which can be obtained byc instantiating the standard FTRL regret bound in our setting as follows. For any fixed time step $h$, agent $i$, and state $s\in\cS$, view the $K$ iterations as an online-learning process for the agent. At each iteration $k$, the per-round reward is defnied as $\ell_{i,h}^k(a_i)\defn r_{i,h}^k(s,a_i)+\widehat{P}_{i,h,s,a_i}^{\pi_{-i}^k,\widehat{V}}\widehat{V}_{i,h+1}$ for all $a_i\in\cA_i$ and $\widehat{V}_{i,h+1} \in [0,H]$. During the prpocess, the agent $i$ plays the mixed strategy $\pi_{i,h}^k(\cdot\mid s)\in\Delta(\mathcal A_i)$ produced by FTRL with entropy regularizer. Applying \citet[Theorem 6]{shalev2007online} (equivalently \citet[Theorem 3]{li2022minimax}) with per-round reward range size $H$ and action-space size $|\mathcal A_i|=A_i$ yields the regret bound $\max_{a_i}\sum_{k=1}^K \ell_{i,h}^k(a_i)-\sum_{k=1}^K\mathbb E_{a_i\sim\pi_{i,h}^k(s)}\big[\ell_{i,h}^k(a_i)\big]\leq 2H\sqrt{K\ln A_i}$; dividing by $K$ gives \eqref{eq:generative_model_FTRL_condition}.x

Armed with the key results, we shall prove Lemma~\ref{lm:generative_model_FTRL} using induction arguments. Initially, for the base case at the final time step $H+1$, we directly have
    \begin{align*}
        \text{for all $(i,s)\in [n]\times\mathcal{S}$}: \quad  \widehat{V}_{i,H+1}(s)=0=\overline{V}_{i,H+1}^{\star,\xi}(s),
    \end{align*}
due to the definition of $\widehat{V}_{i,H+1}$ and the fundamental fact in \eqref{eq:own-fundamental-fact}.
    We proceed by backward induction from the last step  $h=H+1$ to $h=1$.  Assume that $ \widehat{V}_{i,h+1}(s) \geq \overline{V}_{i,h+1}^{\star,\xi}(s) $ holds for all $(i,s) \in [n] \times \mathcal{S} $. Our goal is to show that $ \widehat{V}_{i,h}(s) \geq \overline{V}_{i,h}^{\star,\xi}(s) $ for all $(i,s) \in [n] \times \mathcal{S}$.

Before proceeding, to invoke the FTRL regret bound in \eqref{eq:generative_model_FTRL_condition}, we must bound the range of the per-round gain $\ell_{i,h}^k(a_i)\defn r_{i,h}^k(s,a_i)+\widehat{P}_{i,h,s,a_i}^{\pi{-i}^k,\widehat{V}}\widehat{V}_{i,h+1}$. By the definition of $\widehat V_{i,h}$ in \eqref{eq:line-number-policy-update_generative_model}, we have $\widehat V_{i,h}(s)=\min\big\{\sum_{k=1}^K\frac{1}{K}\mathbb E_{a_i\sim\pi_{i,h}^k(s)}\big[r_{i,h}^k(s,a_i)+\widehat{P}_{i,h,s,a_i}^{\pi_{-i}^k,\widehat V}\widehat V_{i,h+1}\big]+\beta_{i,h}(s),\,H-h+1\big\}$. It immediately yields $\widehat V_{i,h}(s) \leq H-h+1 \leq H$. For the lower bound, we proceed by backward induction on $h$: with the base case $\widehat V_{i,H+1}= 0$, and given $\widehat V_{i,h+1}\geq 0$, each summand of the first argument is non-negative—$r_{i,h}^k(s,a_i)\geq 0$ by assumption, $\widehat{P}_{i,h,s,a_i}^{\pi{-i}^k,\widehat V}\widehat V_{i,h+1}\geq 0$ as a probability kernel applied to a non-negative function, and $\beta_{i,h}(s)\geq 0$. Hence $\widehat V_{i,h}(s)\in[0,H]$ for all $(i,h)\in[n]\times[H]$. Now we are ready to verify the induction hypothesis for step $h$.
 Towards this, recalling the definition of $\overline{V}_{i,h}^{\star,\xi}(s)$ (see \eqref{eq:generative_model_auxiliary_value_def}) gives

    \begin{align}
        &\overline{V}_{i,h}^{\star,\xi}(s) = \max_{a_i\in\mathcal{A}_i}\sum_{k=1}^K\frac{1}{K}\big[r_{i,h}^k(s,a_i)+\widehat{P}_{i,h,s,a_i}^{\pi_{-i}^k,\overline{V}_{i,h+1}^{\star,\xi}}\overline{V}_{i,h+1}^{\star,\xi}\big]\notag\\
        \overset{\mathsf{(i)}}{\leq}&\max_{a_i\in\mathcal{A}_i}\sum_{k=1}^K\frac{1}{K}\big[r_{i,h}^k(s,a_i)+P_{i,h,s,a_i}^{\pi_{-i}^k,\overline{V}_{i,h+1}^{\star,\xi}}\overline{V}_{i,h+1}^{\star,\xi}\big]\notag\\
        &+\frac{2\sqrt{d}}{\sqrt{N}}\big(2H\sqrt{d\ln(NH+1)+2\ln\big(\frac{3KNHn}{\delta}\big)}+H\sqrt{d}\big)\notag\\
        \overset{\mathsf{(ii)}}{\leq}&\max_{a_i\in\mathcal{A}_i}\sum_{k=1}^K\frac{1}{K}\big[r_{i,h}^k(s,a_i)+P_{i,h,s,a_i}^{\pi_{-i}^k,\widehat{V}}\widehat{V}_{i,h+1} \big]+\frac{2\sqrt{d}}{\sqrt{N}}\big(2H\sqrt{d\ln(NH+1)+2\ln\big(\frac{3KNHn}{\delta}\big)}+H\sqrt{d}\big)\notag\\
        \overset{\mathsf{(iii)}}{\leq}&\max_{a_i\in\mathcal{A}_i}\sum_{k=1}^K\frac{1}{K}\big[r_{i,h}^k(s,a_i)+\widehat{P}_{i,h,s,a_i}^{\pi_{-i}^k,\widehat{V}} \widehat{V}_{i,h+1} \big]+\frac{4\sqrt{d}}{\sqrt{N}}\big(2H\sqrt{d\ln(NH+1)+2\ln\big(\frac{3KNHn}{\delta}\big)}+H\sqrt{d}\big)\notag\\
        \overset{\mathsf{(iv)}}{\leq}&\sum_{k=1}^K\frac{1}{K}\mathbb{E}_{a_i\sim\pi_{i,h}^k(s)}\big[r_{i,h}^k(s,a_i)+\widehat{P}_{i,h,s,a_i}^{\pi_{-i}^k,\widehat{V}}\widehat{V}_{i,h+1}\big]+\beta_{i,h}(s),\label{eq:generative_model_optimistic_estimation_eq_final}
    \end{align}
    Here,  (i) and (iii) follows from applying Lemma~\ref{lm:generative_model_transition_error}, (ii) follows from the induction assumption,  and (iv) is based on the results of FTRL in \eqref{eq:generative_model_FTRL_condition} and recalling $\beta_{i,h}(s)$ in \eqref{eq:genetative_model_beta} for all $(i,s) \in [n]\times \mathcal{S}$:
    \begin{align*}
        \beta_{i,h}(s)=8\sqrt{\frac{d}{N}}\big(2H\sqrt{d\ln(NH+1)+2\ln\big(\frac{3KNHn}{\delta}\big)}+H\sqrt{d}\big)+2H\sqrt{\frac{\ln A_i}{K}}.
    \end{align*}
    On the other hand, recalling the definition of $\widehat{V}_{i,h}$ in \eqref{eq:line-number-policy-update_generative_model} gives that
    \begin{align*}
          \widehat{V}_{i,h}(s)& = \min\left\{ \sum_{k=1}^{K} \frac{1}{K} \big<\pi_{i,h}^k(\cdot \mid s),q_{i,h}^k(s,\cdot)\big>
               + \beta_{i,h}(s),H-h+1\right\},\\
               &=\min\big\{\sum_{k=1}^K\frac{1}{K}\mathbb{E}_{a_i\sim\pi_{i,h}^k(s)}\big[r_{i,h}^k(s,a_i)+\widehat{P}_{i,h,s,a_i}^{\pi_{-i}^k,\widehat{V}}\widehat{V}_{i,h+1}\big]+\beta_{i,h}(s),H-h+1\big\}
    \end{align*}
    where the last equality holds by the definitions of $q_{i,h}^k$ in \eqref{eq:q_function_generative_model}. Therefore, with the fact that $\overline{V}_{i,h}^{\star,\xi }(s)\leq H-h+1$ for all $(i,s) \in [n] \times \mathcal{S}$, we obtain that
    \begin{align*}
       \forall (i,s) \in [n] \times \mathcal{S}: \quad \overline{V}_{i,h}^{\star,\xi }(s)\leq \widehat{V}_{i,h}(s).
    \end{align*}
    Therefore, we have finished the proof of the lemma by induction arguments.

\section{Proof of Theorem~\ref{thm:main_online}}
We now turn to the proof for the online interactive setting. The overall structure parallels the generative case: we begin with estimation lemmas, then construct optimistic and pessimistic surrogates, and finally combine them to derive a recursive relationship, which leads to the regret bound.

\paragraph{Step 1: error decomposition.}
First, recall that the objective in the online interactive setting is to upper bound the following regret term induced by the process of Algoarithm~\ref{alg:lin_robust_Q_FTRL}:
\begin{align*}
    \max_{i\in[n]}\sum_{t=1}^T\mathbb{E}_{\pi\sim\xi^t}\big[V_{i,1}^{\star,\pi_{-i},\sigma_i}(s)\big]-\mathbb{E}_{\pi\sim\xi^t}\big[V_{i,1}^{\pi,\sigma_i}(s)\big]=\max_{i\in[n]}\sum_{t=1}^TV_{i,1}^{\star,\xi^t}(s)-V_{i,1}^{\xi^t}(s).
\end{align*}

Before continuning, for any fixed set of policies $\big\{\widehat{\pi}^{k.t}\big\}_{(k,t)\times[K]\times[T]}$ with $\widehat{\pi}^{k.t} \in [H] \times \cS \mapsto \prod_{i=1}^n \Delta(\cA_i)$, we define the following useful auxiliary notations of transition kernels:
\begin{align*}
    & P_{i,h,s,a_i}^{\widehat{\pi}_{-i}^{k,t},V}V \defn \inf_{\mathcal{P}\in\mathcal{U}^{\sigma_i}\big(P_{h,s,a_i}^{\widehat{\pi}_{-i}^{k,t}}\big)}\mathcal{P}V, \qquad \widehat{P}_{i,h,s,a_i}^{\widehat{\pi}_{-i}^{k,t},V}V \defn \inf_{\mathcal{P}\in\mathcal{U}^{\sigma_i}\big(P_{i,h,s,a_i}^{k,t})}\mathcal{P}V,
\end{align*}
where $P_{i,h,s,a_i}^{k,t}$ is the estimated transition kernel output by Algoarithm~\ref{alg:lin_robust_Q_FTRL}.

Armed with the notations, we first introduce the following lemma, which bounds the estimation errors of the transition probability and the reward function. The proof, which is analogous to that of Lemma~\ref{lm:generative_model_transition_error} with the added union bound over $t\in[T]$, is provided in Appendix~\ref{proof:lm:online_transition_estimation_error}.
\begin{lemma}
\label{lm:online_transition_estimation_error}
    Consider any fixed $(h,i,k,t)\in[H]\times[n]\times[K]\times[T]$, fixed value function $V:\mathcal{S}\rightarrow [0,H]$ and $\delta\in(0,1)$. With probability at least $1-2\delta$, the following hold simultaneously for all $(s,a_i)\in\mathcal{S}\times\mathcal{A}_i$:
        \begin{align}
            &\big|P_{i,h,s,a_i}^{\widehat{\pi}_{-i}^{k,t},V}V-\widehat{P}_{i,h,s,a_i}^{\widehat{\pi}_{-i}^{k,t},V}V\big|
            \leq\big\lVert\phi(s,a_i)\big\rVert_{\big(\Lambda_{i,h}^{k,t}\big)^{-1}} \big(2H\sqrt{d\ln(NH+1)+2\ln\big(\tfrac{3TKNHn}{\delta}\big)}+H\sqrt{d\lambda}\big)+\frac{1}{N},\nonumber \\
            &\big|\mathbb{E}_{a_{-i}\sim\widehat{\pi}_{-i}^{k,t}(s)}\big[r_{i,h}(s,\mathbf{a})\big]-r_{i,h}^{k,t}(s,a_i)\big|
            \leq \big\lVert\phi(s,a_i)\big\rVert_{\big(\Lambda_{i,h}^{k,t}\big)^{-1}}\big(2H\sqrt{d\ln(NH+1)+\ln\big(\tfrac{3NTHnK}{\delta}\big)}+H\sqrt{d\lambda}\big),
        \end{align}
        where $\Lambda_{i,h}^{k,t}$ was defined in \eqref{eq:ridge_regression_online} with regularization $\lambda\geq 1$.
\end{lemma}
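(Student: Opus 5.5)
The plan mirrors the proof of Lemma~\ref{lm:generative_model_transition_error}, except that we stop at the Cauchy--Schwarz form of Theorem~\ref{lm:ridge-regression-concentration} and do not invoke the supporting-set bound of Lemma~\ref{lm:generative_model_supporting_set}. This is unavoidable in the online setting, since the visited state--action pairs are chosen by Hybrid-Sampling rather than by design.

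\emph{Step 1 (dual reduction).} Fix $(h,i,k,t)$ and $V$. By strong duality for the TV ball (the dual form in \eqref{eq:q_function_oinline}, cf.\ \citet[Lemma 1]{shi2023curious}), both $P_{i,h,s,a_i}^{\widehat{\pi}_{-i}^{k,t},V}V$ and $\widehat{P}_{i,h,s,a_i}^{\widehat{\pi}_{-i}^{k,t},V}V$ are maxima over $\alpha\in[\min V,\max V]$ of the same penalty $-\sigma_i(\alpha-\min_{s'}[V]_\alpha(s'))$ plus, respectively, $P_{h,s,a_i}^{\widehat\pi^{k,t}_{-i}}[V]_\alpha$ and $P_{i,h,s,a_i}^{k,t}[V]_\alpha$. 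Since the difference of two maxima is bounded by the maximal difference, the gap is at most
\[
\sup_\alpha\big|\phi_i(s,a_i)^\top(\mu_{i,h}^{\widehat\pi^{k,t}_{-i}}-\widehat\mu_{i,h}^{k,t})[V]_\alpha\big|,
\]
using the linear structure \eqref{eq:linaer-mg-assumption}.

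\emph{Step 2 (concentration for fixed $\alpha$).} For each fixed $\alpha$, the function $[V]_\alpha$ is fixed with range $[0,H]$, so Theorem~\ref{lm:ridge-regression-concentration} (Cauchy--Schwarz version, with general $\lambda\ge1$) gives the bound $\lVert\phi_i(s,a_i)\rVert_{(\Lambda_{i,h}^{k,t})^{-1}}\big(2H\sqrt{d\ln(NH+1)+\ln(3NH/\delta')}+H\sqrt{d\lambda}\big)$ uniformly in $(s,a_i)$.

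\emph{Main obstacle.} The samples in $\mathcal D_i$ are not i.i.d.\ from the nominal kernel: the state $s_h^{(m)}$ is reached through adversarial kernels built from past pessimistic values. What matters, however, is only that $s_{h+1}^{(m)}$ is drawn from the nominal $P^{\pi_{-i}}_{h,s_h,a_{i,h}}$ conditionally on the past. Here $\pi_{-i}$ is a uniformly sampled past iterate, so by linearity the conditional mean is $\phi_i^\top$ times the average $\mu$, and the theorem targets that mixture parameter. The martingale self-normalized Lemma D.4 used inside Theorem~\ref{lm:ridge-regression-concentration} only needs $\phi_i^m\in\mathcal F_{m-1}$ and the conditional-mean identity, so I would verify the filtration carefully. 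In particular, the past value functions $\underline V^l$ are measurable with respect to earlier rounds, and fresh samples are drawn in each $(k,t)$ iteration, so $V$ remains independent of the current data.

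\emph{Step 3 (net over $\alpha$ and union bound).} Take a $1/N$-net of $[0,H]$ of size at most $3HN$; the map $\alpha\mapsto[V]_\alpha$ is $1$-Lipschitz in sup norm. Estimated kernels may be signed measures of mass not exactly $1$, so I would absorb this into the $1/N$ slack, possibly up to a constant. Then union bound over the net and over all $(h,i,k,t)$, i.e.\ $nHKT$ triples, which produces the $2\ln(3TKNHn/\delta)$ term.

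\emph{Step 4 (reward).} Repeat Step 2 with target $r^{(m)}_{i,h}$, whose conditional mean is $\phi_i^\top\theta^{\widehat\pi_{-i}}_{i,h}$, using the same self-normalized bound without any net. This gives the second inequality. Combining the two events yields probability at least $1-2\delta$.
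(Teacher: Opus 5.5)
Your Steps 1--4 follow the paper's proof essentially line by line: the TV dual reduction, Theorem~\ref{lm:ridge-regression-concentration} with $f=[V]_\alpha$ for each fixed $\alpha$, a $1/N$-net of size $3HN$, a union bound over $(h,i,k,t)$, and the same argument for the reward. The gap is in the one step the paper does not discuss, your justification for the Hybrid-Sampling data, and that step does not hold. As Algorithm~\ref{alg:multi-sampling} is written, a sample produced in the loop for past round $l=\tau(m)$ has $\mathbf a_{-i,h}$ drawn via $\mathrm{Unif}\{\pi^{l,k'}_{-i,h}\}_{k'\in[K]}$. Hence
\[
\mathbb E\big[f(s^{(m)}_{h+1})\mid\mathcal F_{m-1}\big]=\phi_i\big(s^{(m)}_h,a^{(m)}_{i,h}\big)^\top\bar\mu^{(l)}f,\qquad \bar\mu^{(l)}\defn\frac1K\sum_{k'=1}^K\mu_{i,h}^{\pi^{l,k'}_{-i}}.
\]
\begin{itemize}
\item This parameter changes with $l$, so the responses in $\mathcal D_i$ do not follow one linear model.
\item No $\bar\mu^{(l)}$ equals the target $\mu_{i,h}^{\widehat\pi^{k,t}_{-i}}$ in your Step 1. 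The lemma is used downstream with $\widehat\pi^{k,t}=\pi^{k,t}$, the current iterate.
\item The decomposition inside Theorem~\ref{lm:ridge-regression-concentration} needs every response to have conditional mean $(\phi_i^{(m)})^\top\mu f$ for the same $\mu$ you compare against. Otherwise it picks up the extra term $(\Lambda^{k,t}_{i,h})^{-1}\sum_m\phi_i^{(m)}(\phi_i^{(m)})^\top\big(\bar\mu^{(\tau(m))}-\mu^{\widehat\pi^{k,t}_{-i}}_{i,h}\big)f$.
\item That term is a misspecification bias that does not vanish as $N$ grows; the estimate tends to a matrix-weighted average of the $\bar\mu^{(l)}$. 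It is not of the form $\lVert\phi_i(s,a_i)\rVert_{(\Lambda^{k,t}_{i,h})^{-1}}$ times a logarithmic factor.
\end{itemize}
So ``the theorem targets that mixture parameter'' does not give the stated bound. Step 4 has the same problem: the conditional mean of $r^{(m)}_{i,h}$ is $\phi_i^\top$ times the analogous round-$l$ average of the $\theta^{\pi^{l,k'}_{-i}}_{i,h}$, not $\phi_i^\top\theta^{\widehat\pi_{-i}}_{i,h}$ as you write.

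What you need is that, at the final step $h$, the opponents' actions be drawn from the current iterate $\pi^{k,t}_{-i,h}$. This iterate is already fixed when iteration $k$'s data are collected. The paper's proof tacitly assumes exactly this when it invokes Theorem~\ref{lm:ridge-regression-concentration} with $s^m_{+1}\sim P^{\widehat\pi^{k,t}_{-i}}_{h,s^m,a^m_i}$. With that change, your other observation does the right work. The roll-in through adversarial kernels built from the $\underline V^l$ only shapes the covariates, which are $\mathcal F_{m-1}$-measurable. So the self-normalized martingale bound applies with the single target $\mu_{i,h}^{\pi^{k,t}_{-i}}$, and your Steps 1--4 go through.

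On your Step 3 caveat: since $\Phi(\Lambda^{k,t}_{i,h})^{-1}\Phi^\top\preceq I$ for the design matrix $\Phi$, one has $|P^{k,t}_{i,h,s,a_i}g|\le\lVert\phi_i(s,a_i)\rVert_{(\Lambda^{k,t}_{i,h})^{-1}}\sqrt{|\mathcal D_i|}\,\lVert g\rVert_\infty$. So with a $1/N$-net, the discretization error of the signed estimate is absorbed into the first term up to a constant, not into the $1/N$ term. The paper simply asserts $1$-Lipschitzness here.
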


To continue, recalling the definition of the estimated value function $\overline{V}_{i,h}^t(s)$ in \eqref{eq:estimate_value_function}, analogous to the generative case, we will show that $\overline{V}_{i,h}^t(s)$ is an optimistic estimation of the best-response value function $V_{i,h}^{\star,\xi^t}(s)$ simultaneously for all $h\in[H]$.

\begin{lemma}
\label{lm:online_optimistic_value_function}
    For all $(t,i,h,s)\in[T]\times[n]\times[H]\times\mathcal{S}$, invoking the optimistic bonus term defined in  \eqref{eq:online_optimistic_bonus}, with probability at least $1-\delta$, the following inequality holds true:
    \begin{align*}
       \forall s\in\cS:\quad \overline{V}_{i,h}^t(s)\geq V_{i,h}^{\star,\xi^t}(s).
    \end{align*}


\end{lemma}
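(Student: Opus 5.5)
We argue by backward induction on $h$, mirroring the proof of Lemma~\ref{lm:generative_model_FTRL} but comparing directly to the true best-response robust value $V_{i,h}^{\star,\xi^t}$ instead of an aggregated surrogate. We first note the Bellman characterization of the best response against $\xi^t$. Since $\xi^t$ draws $k$ uniformly and plays $\pi^{k,t}_{-i}$, the robust Bellman equation \eqref{eq:robust-bellman-equation} gives
\begin{align*}
V_{i,h}^{\star,\xi^t}(s)\le \max_{a_i\in\mathcal{A}_i}\frac{1}{K}\sum_{k=1}^K\Big[\mathbb{E}_{a_{-i}\sim\pi^{k,t}_{-i,h}(s)}r_{i,h}(s,\mathbf{a})+P_{i,h,s,a_i}^{\pi_{-i}^{k,t},V^\star}V_{i,h+1}^{\star,\xi^t}\Big],
\end{align*}
with $V^\star=V_{i,h+1}^{\star,\xi^t}$. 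This is the same aggregated ``max over $a_i$ of an average over $k$'' form used in \eqref{eq:generative_model_auxiliary_value_def}. The base case is immediate: $\overline{V}_{i,H+1}^t=0=V_{i,H+1}^{\star,\xi^t}$.

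For the inductive step, we assume $\overline{V}_{i,h+1}^t\ge V_{i,h+1}^{\star,\xi^t}$ pointwise and chain the following inequalities.
\begin{enumerate}
\item Replace the true reward by $r^{k,t}_{i,h}$ using the reward part of Lemma~\ref{lm:online_transition_estimation_error}.
\item Use monotonicity of the robust operator: $\inf_{\mathcal{P}}\mathcal{P}V$ is nondecreasing in $V$, which is exactly the elementary step (ii) of \eqref{eq:generative_model_optimistic_estimation_eq_final}. This lets us replace $P^{\pi^{k,t}_{-i},V^\star}V^{\star}$ by $P^{\pi^{k,t}_{-i},\overline V}\overline{V}_{i,h+1}^t$.
\item Swap the true robust operator for the estimated one $\widehat P$ via the transition part of Lemma~\ref{lm:online_transition_estimation_error}, applied with $V=\overline V_{i,h+1}^t$.
\end{enumerate}
Each swap costs $\|\phi_i(s,a_i)\|_{(\Lambda^{k,t}_{i,h})^{-1}}(\cdots)+1/N$. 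Summing the two swaps and averaging over $k$, then bounding the max over $a_i$ of this average by the max over $a_i$ term in $\beta^t_{i,h,1}$, shows that the total error is dominated by the first part of \eqref{eq:online_optimistic_bonus}; the constants $4H\sqrt{\cdot}+2H\sqrt d$ absorb two copies of the lemma's radius when $\lambda=\widetilde{O}(1)$. Next, the term $\max_{a_i}\frac1K\sum_k \overline q^{k,t}_{i,h}(s,a_i)$ is compared with $\frac1K\sum_k\langle\pi^{k,t}_{i,h},\overline q^{k,t}_{i,h}\rangle$ using the FTRL regret bound \eqref{eq:generative_model_FTRL_condition}, since the policy is updated by FTRL on $\overline q$. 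Its range lies in $[0,H]$ by the same positivity argument as in the generative case, which contributes $2H\sqrt{\ln A_i/K}$. Finally, $V_{i,h}^{\star,\xi^t}\le H-h+1$ handles the truncation in \eqref{eq:estimate_value_function}.

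The main obstacle is the concentration step. Lemma~\ref{lm:online_transition_estimation_error} is stated for a \emph{fixed} value function $V$, whereas $\overline V^t_{i,h+1}$ is built from data. We would address this in one of two ways. The first option uses sample independence: the dataset $\mathcal{D}_i$ at $(h,k,t)$ is drawn fresh after $\overline V^t_{i,h+1}$ has been computed, because of the backward order and because Hybrid-Sampling only uses round-$(<t)$ quantities and step-$h$ opponents $\pi^{l,k}$ from earlier rounds. Conditioning on everything before the draw then makes $\overline V^t_{i,h+1}$ fixed. The second option, if that independence fails, is a covering argument over the bonus-augmented function class as in \cite{jin2019provablyefficientreinforcementlearning}, which would only inflate the $d\ln(NH+1)$ term. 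The $\alpha$-net in the dual form is handled exactly as in Lemma~\ref{lm:generative_model_transition_error}, and a union bound over $(t,i,h,k)$ gives probability $1-\delta$ after rescaling.
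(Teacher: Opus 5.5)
Your proposal is correct and follows essentially the same route as the paper's proof. Both arguments use backward induction from $\overline V^t_{i,H+1}=V^{\star,\xi^t}_{i,H+1}=0$, the per-step Bellman bound ``$\max_{a_i}$ of the average over $k$'' for $V^{\star,\xi^t}_{i,h}$, monotonicity of the robust operator under the induction hypothesis, and a swap of the true reward and robust transition for their ridge estimates via Lemma~\ref{lm:online_transition_estimation_error}, absorbed by the $\max_{a_i}$ part of $\beta^t_{i,h,1}$; both then apply the FTRL regret bound to get the $2H\sqrt{\ln A_i/K}$ term and finish with truncation at $H-h+1$.

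You handle the data dependence of $\overline V^t_{i,h+1}$ explicitly, by conditioning on the history before the fresh Hybrid-Sampling draw at $(h,k,t)$; the paper uses this step silently. Like the paper, however, you leave two details unverified, both repairable by clipping $\overline q$ or enlarging $\beta^t_{i,h,1}$. First, the claim $\overline q^{k,t}_{i,h}\in[0,H]$ does not follow from the generative-case positivity argument, because ridge estimates of $r$ and of the kernel need not be nonnegative. Second, the bonus's $2H\sqrt d$ does not literally cover the lemma's $H\sqrt{d\lambda}$ and $2\ln(\cdot)$ terms.
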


Unlike the generative case, we also maintain a collection of pessimistic estimates of the value function, and will show that $\underline{V}_{i,h}^t(s)$ serves as a lower bound for $V_{i,h}^{\xi^t}(s)$.
\begin{lemma}
\label{lm:online_pessimistic_value_function}
    Invoking the pessimistic bonus in \eqref{eq:online_pessimistic_bonus}, with probability at least $1-\delta$,
    \begin{align*}
        \underline{V}_{i,h}^t(s)\leq V_{i,h}^{\xi^t}(s),
    \end{align*}
    holds true for all $(i,h,t,s)\in[n]\times[H]\times[T] \times \cS$.
\end{lemma}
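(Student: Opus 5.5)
We prove Lemma~\ref{lm:online_pessimistic_value_function} by backward induction on $h$, separately for each round $t$, agent $i$, and state $s$. We work on the intersection of the high-probability events of Lemma~\ref{lm:online_transition_estimation_error}, union-bounded over $(h,i,k,t)$; the $\ln(3TNHnK/\delta)$ factor in $\beta_{i,h,2}^t$ is sized to absorb this union bound. At $h=H+1$ we have $\underline{V}_{i,H+1}^t=0=V_{i,H+1}^{\xi^t}$, so the base case holds.

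For the inductive step, assume $\underline{V}_{i,h+1}^t\le V_{i,h+1}^{\xi^t}$ pointwise. If the $\max\{\cdot,0\}$ in \eqref{eq:estimate_value_function} is attained at $0$, the claim is trivial because $V_{i,h}^{\xi^t}\ge 0$. Otherwise, the dual form gives $\underline{q}_{i,h}^{k,t}(s,a_i)=r_{i,h}^{k,t}(s,a_i)+\widehat{P}_{i,h,s,a_i}^{\pi_{-i}^{k,t},\underline{V}}\underline{V}_{i,h+1}^t$. On the other side, by the robust Bellman equation \eqref{eq:robust-bellman-equation}, $V_{i,h}^{\xi^t}(s)=\frac1K\sum_k\mathbb{E}_{a_i\sim\pi_{i,h}^{k,t}(s)}\big[r_{i,h}^{\pi_{-i}^{k,t}}(s,a_i)+P_{i,h,s,a_i}^{\pi_{-i}^{k,t},V}V_{i,h+1}^{\xi^t}\big]$. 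Here we use the same aggregated convention as in the generative proof: at step $h$, $\xi^t$ is uniform over $\{\pi_h^{k,t}\}_k$.

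The comparison then proceeds in three steps.
\begin{enumerate}
\item Lemma~\ref{lm:online_transition_estimation_error} bounds the reward error $r^{k,t}_{i,h}-r^{\pi^{k,t}_{-i}}_{i,h}$ by $\lVert\phi_i\rVert_{(\Lambda_{i,h}^{k,t})^{-1}}(\cdots)$.
\item The same lemma, applied with $V=\underline{V}_{i,h+1}^t$, gives $\widehat{P}^{\underline V}\underline{V}_{i,h+1}^t\le P^{\underline V}_{i,h,s,a_i}\underline{V}_{i,h+1}^t+\lVert\phi_i\rVert_{(\Lambda_{i,h}^{k,t})^{-1}}(\cdots)+1/N$.
\item Monotonicity of the robust operator $V\mapsto\inf_{\mathcal{P}\in\mathcal{U}}\mathcal{P}V$, together with the induction hypothesis, gives $P^{\underline V}\underline{V}_{i,h+1}^t\le P^{V}_{i,h,s,a_i}\underline{V}_{i,h+1}^t\le P^{V}_{i,h,s,a_i}V_{i,h+1}^{\xi^t}$.
\end{enumerate}
Summing these bounds, the total error is $\frac1K\sum_k\mathbb{E}_{a_i\sim\pi_{i,h}^{k,t}}\lVert\phi_i(s,a_i)\rVert_{(\Lambda_{i,h}^{k,t})^{-1}}\big(4H\sqrt{\cdots}+2H\sqrt{d\lambda}\big)+1/N$. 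This is at most $\beta_{i,h,2}^t(s)$ in \eqref{eq:online_pessimistic_bonus}, after matching the constants: take $\lambda$ of order one in the $\sqrt{d}$ term, and fold the reward and transition confidence widths together. Subtracting $\beta_{i,h,2}^t$ therefore yields $\underline{V}_{i,h}^t(s)\le V_{i,h}^{\xi^t}(s)$, which closes the induction. Unlike the optimistic Lemma~\ref{lm:online_optimistic_value_function}, no FTRL regret term is needed here, because both sides average over the same policies $\pi_{i,h}^{k,t}$.

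The main obstacle is justifying the application of Lemma~\ref{lm:online_transition_estimation_error} with $V=\underline{V}_{i,h+1}^t$. This function is data dependent: it is built from samples at later steps of the same round, and possibly from earlier rounds through the Hybrid-Sampling trajectories. It is also not a fixed function independent of $\mathcal{D}_i$ at step $h$.

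The first remedy to try is independence. At each $(h,k)$ a fresh dataset $\mathcal{D}_i$ is drawn after $\underline{V}_{i,h+1}^t$ has been fixed, and in the backward order $h+1$ is computed before $h$. Conditional on the past, $\underline{V}_{i,h+1}^t$ is then fixed, and Theorem~\ref{lm:ridge-regression-concentration} applies conditionally, since its filtration argument only requires the predictors to be adapted.

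The trajectory samples within $\mathcal{D}_i$ are not i.i.d. given the policy mixture. However, Lemma D.4 of \citet{jin2019provablyefficientreinforcementlearning} is a self-normalized martingale bound, so it tolerates adaptively chosen features, which keeps the argument valid. If conditioning does not suffice, the fallback is a covering argument over the class of truncated linear-plus-bonus value functions. This adds a $d\log$ factor that is already reflected in the $d\ln(NH+1)$ term.
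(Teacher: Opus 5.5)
Your proposal is correct and follows the paper's proof essentially step for step. Like the paper, you use backward induction from $h=H+1$ and the robust Bellman equation under the same aggregated convention for $\xi^t$, then monotonicity of the robust operator with the induction hypothesis to pass from $V_{i,h+1}^{\xi^t}$ to $\underline{V}_{i,h+1}^t$, then Lemma~\ref{lm:online_transition_estimation_error} for the reward and transition errors that sum to $\beta_{i,h,2}^t(s)$, and finally the clipping at $0$. Two points go beyond the paper, which leaves them implicit: your conditioning argument (a fresh $\mathcal{D}_i$ at each $(h,k)$, drawn after $\underline{V}_{i,h+1}^t$ is determined) justifies applying that lemma to a data-dependent $V$, and your constant matching effectively takes $\lambda=1$, exactly as the paper's $2H\sqrt{d}$ bonus does.
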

The proof of Lemma~\ref{lm:online_optimistic_value_function} and Lemma~\ref{lm:online_pessimistic_value_function} are postponed to Appendix~\ref{proof:online-auxiliary}. Now we are now ready to prove Theorem~\ref{thm:main_online}. Applying Lemma~\ref{lm:online_optimistic_value_function} and Lemma~\ref{lm:online_pessimistic_value_function}, we have
\begin{align}
    &V_{i,h}^{\star,\xi^t}(s)-V_{i,h}^{\xi^t}(s)
    \leq\overline{V}_{i,h}^t(s)-\underline{V}_{i,h}^t(s)\notag\\
    \overset{\mathsf{(i)}}{\leq}&\sum_{k=1}^K\frac{1}{K}\mathbb{E}_{a_i\sim\pi_{i,h}^{k,t}(s)}\big[r_{i,h}^{k,t}(s,a_i)+\widehat{P}_{i,h,s,a_i}^{\pi_{-i}^{k,t},\overline{V}}\overline{V}_{i,h+1}^t\big]+\beta_{i,h,1}^t(s)+\beta_{i,h,2}^t(s)\notag\\
    &-\sum_{k=1}^K\frac{1}{K}\mathbb{E}_{a_i\sim\pi_{i,h}^{k,t}(s)}\big[r_{i,h}^{k,t}(s,a_i)+\widehat{P}_{i,h,s,a_i}^{\pi_{-i}^{k,t},\underline{V}}\underline{V}_{i,h+1}^t\big]\notag\\
    \leq& \beta_{i,h,1}^t(s) +\beta_{i,h,2}^t(s) + \sum_{k=1}^K\frac{1}{K}\mathbb{E}_{a_i\sim\pi_{i,h}^{k,t}(s)}\big[P_{i,h,s,a_i}^{\pi_{-i}^{k,t},\overline{V}}\overline{V}_{i,h+1}^t\big] \notag \\
    &+\sum_{k=1}^K\frac{1}{K}\mathbb{E}_{a_i\sim\pi_{i,h}^{k,t}(s)}\big|\widehat{P}_{i,h,s,a_i}^{\pi_{-i}^{k,t},\overline{V}}\overline{V}_{i,h+1}^t-P_{i,h,s,a_i}^{\pi_{-i}^{k,t},\overline{V}}\overline{V}_{i,h+1}^t\big| -\sum_{k=1}^K\frac{1}{K}\mathbb{E}_{a_i\sim\pi_{i,h}^{k,t}(s)}\big[P_{i,h,s,a_i}^{\pi_{-i}^{k,t},\underline{V}}\underline{V}_{i,h+1}^t\big]\notag\\
&+\sum_{k=1}^K\frac{1}{K}\mathbb{E}_{a_i\sim\pi_{i,h}^{k,t}(s)}\big|P_{i,h,s,a_i}^{\pi_{-i}^{k,t},\underline{V}}\underline{V}_{i,h+1}^t-\widehat{P}_{i,h,s,a_i}^{\pi_{-i}^{k,t},\underline{V}}\underline{V}_{i,h+1}^t\big| \notag \\
     \overset{\mathsf{(ii)}}{\leq}&\sum_{k=1}^K\frac{1}{K}\mathbb{E}_{a_i\sim\pi_{i,h}^{k,t}(s)}\big[P_{i,h,s,a_i}^{\pi_{-i}^{k,t},\overline{V}}\overline{V}_{i,h+1}^t-P_{i,h,s,a_i}^{\pi_{-i}^{k,t},\underline{V}}\underline{V}_{i,h+1}^t\big] + \beta_{i,h,1}^t(s) +\beta_{i,h,2}^t(s) \notag\\
    &+\sum_{k=1}^K\frac{1}{K}\max_{a_i\in\mathcal{A}_i}\big\lVert\phi_i(s,a_i)\big\rVert_{\big(\Lambda_{i,h}^{k,t}\big)^{-1}}\big(4H\sqrt{d\ln(NH+1)+\ln\big(\tfrac{3NTHnK}{\delta}\big)}+ 2H\sqrt{d\lambda}\big)+\frac{1}{N} \notag\\
    \leq&\sum_{k=1}^K\frac{1}{K}\mathbb{E}_{a_i\sim\pi_{i,h}^{k,t}(s)}\big[P_{i,h,s,a_i}^{\pi_{-i}^{k,t}  \underline{V}} \big(\overline{V}_{i,h+1}^t - \underline{V}_{i,h+1}^t\big)\big] + \beta_{i,h,1}^t(s) +\beta_{i,h,2}^t(s)\notag\\
    &+\sum_{k=1}^K\frac{1}{K}\max_{a_i\in\mathcal{A}_i}\big\lVert\phi_i(s,a_i)\big\rVert_{\big(\Lambda_{i,h}^{k,t}\big)^{-1}}\big(4H\sqrt{d\ln(NH+1)+\ln\big(\tfrac{3NTHnK}{\delta}\big)}+ 2H\sqrt{d\lambda}\big)+\frac{1}{N}.
\label{eq:online_sum_up_error_decompose_eq1}
\end{align}
where (i) is obtained by recalling the definition of $\overline{V}_{i,h}^t(s)$ and $\underline{V}_{i,h}^t(s)$ in \eqref{eq:estimate_value_function}, (ii) is obtained by applying Lemma~\ref{lm:online_transition_estimation_error} to replace the empirical estimate $\widehat{P}$ by their ground truth,
and the last inequality holds with probability at least $1 - \delta$.

\paragraph{Step 2: controlling the second term in \eqref{eq:online_sum_up_error_decompose_eq1}.}
To continue, we focus on the second term in \eqref{eq:online_sum_up_error_decompose_eq1}. We first show that, for $\lambda\geq c_0\log(2dnKHT/\delta)=\widetilde{\mathcal O}(1)$, the empirical Gram matrix $\Lambda_{i,h}^{k,t}$ concentrates multiplicatively around its expectation.

Recall the empirical Gram matrix from \eqref{eq:ridge_regression_online}:
\begin{align*}
    \Lambda_{i,h}^{k,t}=\sum_{m=1}^{\lvert\cD_i\rvert}\phi_i^{(m)}(\phi_i^{(m)})^\top+\lambda I_d,\qquad \text{where} \qquad  \phi_i^{(m)}\defn\phi_i\big(s_h^{(m)},a_{i,h}^{(m)}\big).
\end{align*}
which is a positive definite matrix satisfying $\Lambda_{i,h}^{k,t}\succeq\lambda I_d$. By invoking the Hybrid-Sampling procedure in Algorithm~\ref{alg:multi-sampling}, the dataset $\cD_i$ at round $t$ contains $\lvert\cD_i\rvert\approx N$ samples. Specifically, for each $l \in [t-1]$, we independently collect $\lceil N/t \rceil$ samples conditioned on $\{\pi^{k,l}\}_{k\in[K]}$, which together form $\{(s_h^{(m)},a_{i,h}^{(m)})\}_{m=1}^{\lvert\cD_i\rvert}$. Recall the hybrid sampling procedure in Algorithm~\ref{alg:multi-sampling} for any round $t\in[T]$, time step $h\in[H]$, and agent $i$: for each $j\in[h-1]$,
\begin{align}
    s_{j+1}\sim\sum_{k=1}^K\frac{1}{K}\mathbb{E}_{a_{i,j}\sim\pi_{i,j}^{k,t}(s_j)}P_{i,j,s_j,a_{i,j}}^{\pi_{-i,j}^{k,t},\underline{V}}(\cdot),
\end{align}
where $P_{i,j,s_j,a_{i,j}}^{\pi_{-i,j}^{k,t},\underline{V}}$ denotes the pessimistic worst-case transition kernel defined in \eqref{eq:online_pessimistic_value_transition_def}. Accordingly, we denote the resulting distribution of $s_h$ by
\begin{align*}
    s_h\sim \bigg[\sum_{k=1}^K\frac{1}{K}P_{i,1}^{\pi_{1}^{k,t},\underline{V}}\bigg]\cdots\bigg[\sum_{k=1}^K\frac{1}{K}P_{i,h-1}^{\pi_{h-1}^{k,t},\underline{V}}\bigg].
\end{align*}
For all $(t,h)\in[T]\times[H]$, we define
\begin{align*}
    & X_{t,h}=\mathbb{E}_{s_h, a_i}\big[\phi_i(s_h,a_i)\phi_i(s_h,a_i)^\top\ \big|\ s_h\sim \Big[\sum_{k=1}^K\frac{1}{K}P_{i,1}^{\pi_{1}^{k,t},\underline{V}}\Big]\cdots\Big[\sum_{k=1}^K\frac{1}{K}P_{i,h-1}^{\pi_{h-1}^{k,t},\underline{V}}\Big],\ a_{i}\sim\text{Unif}(\mathcal{A}_i)\big].
\end{align*}

Therefore, denoting the round index of any $m$-th sample as $\tau(m)\in[t-1]$, one has
\begin{align*}
    \mathbb E\!\big[\phi_i^{(m)}(\phi_i^{(m)})^\top\big]=X_{\tau(m),h}
\end{align*}
so that
\begin{align}\label{eq:lambda-fact1}
    \bar\Lambda\defn\mathbb E[\Lambda_{i,h}^{k,t}]=\sum_{m=1}^{\lvert\cD_i\rvert}X_{\tau(m),h}+\lambda I_d \qquad \text{and} \qquad \bar\Lambda\succeq\lambda I_d.
\end{align}

Then we introduce the following lemma whose proof is postponed to Appendix~\ref{proof:lemma:lambda-expectation}.
\begin{lemma}\label{lemma:lambda-expectation}
Let $\lambda\geq c_0\log(2dnKHT/\delta)$ for some large enough constant $c_0$. With probability at least $1-\delta$, for all $(k,t,h,i)\in\mathcal[K]\times[T]\times[H]\times[n]$, one has
\begin{align}\label{eq:online_matrix_two_sided}
    \tfrac{1}{2}\,\bar\Lambda\;\preceq\;\Lambda_{i,h}^{k,t}\;\preceq\;\tfrac{3}{2}\,\bar\Lambda.
\end{align}
\end{lemma}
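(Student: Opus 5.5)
Fix $(k,t,h,i)$ and write $\Lambda=\Lambda_{i,h}^{k,t}$, $\bar\Lambda=\mathbb E[\Lambda]$. Both sides of \eqref{eq:online_matrix_two_sided} follow if we whiten by $\bar\Lambda^{-1/2}$ and show
\[
\big\lVert \bar\Lambda^{-1/2}(\Lambda-\bar\Lambda)\bar\Lambda^{-1/2}\big\rVert_{\mathrm{op}}\le \tfrac12 .
\]
Since $\Lambda-\bar\Lambda=\sum_{m}\big(\phi_i^{(m)}(\phi_i^{(m)})^\top-X_{\tau(m),h}\big)$, this quantity is $\lVert\sum_m Z_m\rVert_{\mathrm{op}}$ with $Z_m\defn \bar\Lambda^{-1/2}\big(\phi_i^{(m)}(\phi_i^{(m)})^\top-X_{\tau(m),h}\big)\bar\Lambda^{-1/2}$. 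We will use that, conditional on the past policies $\{\pi^{k,l}\}_{k,l<t}$ and pessimistic values $\{\underline V^l\}_{l<t}$ (which fix $X_{\tau(m),h}$), the trajectories produced by Hybrid-Sampling are drawn independently, so the $Z_m$ are independent, mean-zero, symmetric $d\times d$ matrices. If one prefers to allow adaptive dependence, matrix Freedman applies verbatim with the same bounds.

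We then invoke the matrix Bernstein inequality, which needs two inputs.
\begin{itemize}
\item \emph{Uniform bound.} Since $\lVert\phi_i\rVert_2\le1$ and $\bar\Lambda\succeq\lambda I_d$ by \eqref{eq:lambda-fact1}, we have $\lVert\bar\Lambda^{-1/2}\phi\phi^\top\bar\Lambda^{-1/2}\rVert\le\lVert\phi\rVert^2_{\bar\Lambda^{-1}}\le 1/\lambda$. The same bound holds for the expectation, so $\lVert Z_m\rVert\le 1/\lambda$.
\item \emph{Variance.} Let $Y_m=\bar\Lambda^{-1/2}\phi^{(m)}(\phi^{(m)})^\top\bar\Lambda^{-1/2}$. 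Then $\mathbb E Z_m^2\preceq\mathbb E Y_m^2\preceq(1/\lambda)\,\mathbb E Y_m$. Summing over $m$ gives
\[
\sum_m\mathbb E Z_m^2\preceq \tfrac1\lambda\,\bar\Lambda^{-1/2}\big(\bar\Lambda-\lambda I_d\big)\bar\Lambda^{-1/2}\preceq \tfrac1\lambda I_d .
\]
\end{itemize}
Matrix Bernstein then yields
\[
\Pr\Big(\big\lVert\textstyle\sum_m Z_m\big\rVert\ge \tfrac12\Big)\le 2d\exp\!\Big(-\frac{1/8}{1/\lambda+1/(6\lambda)}\Big)=2d\,e^{-c\lambda}
\]
for an absolute constant $c$. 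This is the key point: the regularization $\lambda$ alone controls both parameters, with no dependence on $N$ or on how small the eigenvalues of $X_{\tau,h}$ are.

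Finally, we take a union bound over all $nKHT$ tuples $(k,t,h,i)$. This requires $2d\,nKHT\,e^{-c\lambda}\le\delta$, which holds when $\lambda\ge c_0\log(2dnKHT/\delta)$ with $c_0=1/c$.

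The main obstacle is justifying the independence (or martingale) structure. The kernel in Definition~\ref{def:interaction} depends on $\underline V^l$, and the policies depend on earlier data, so the $X_{\tau(m),h}$ are random. We handle this by conditioning on the $\sigma$-field generated by everything before the current Hybrid-Sampling call; the dataset is freshly drawn in each iteration $k$. The Bernstein step is then applied conditionally, and the resulting deterministic bound is unconditioned by the tower rule.
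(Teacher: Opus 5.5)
Your proposal is correct and takes essentially the same route as the paper: you whiten $\Lambda_{i,h}^{k,t}-\bar\Lambda$ by $\bar\Lambda^{-1/2}$, apply matrix Bernstein conditionally on the earlier rounds with a uniform bound of order $1/\lambda$ and variance proxy $1/\lambda$, and then union-bound over the $nKHT$ tuples. The differences are cosmetic. You get $R=1/\lambda$ rather than the paper's $2/\lambda$ by using that a difference of two PSD matrices of norm at most $1/\lambda$ has norm at most $1/\lambda$, and you bound the variance via $Y_m^2\preceq\lambda^{-1}Y_m$ instead of via $W_m\bar\Lambda^{-1}W_m\preceq\lambda^{-1}W_m^2$. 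You also spell out the conditioning and tower-rule argument, which the paper only asserts.
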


With the above lemma in hand, we introduce an auxiliary term
\begin{align*}
S_{t,h}=\sum_{\tau=1}^{t-1}X_{\tau,h}+\lambda I_{d}
\end{align*}
As each round $l\in[t-1]$ contributes at least one sample to $\cD_i$, we have $\sum_{m=1}^{\lvert\cD_i\rvert}X_{\tau(m),h}\succeq\sum_{l=1}^{t-1}X_{l,h}$, since each population covariance $X_{l,h}$ is positive semi-definite. Therefore, one has
\begin{align}\label{eq:bar-Lambda-vs-S-0}
\bar\Lambda\succeq S_{t,h}\succeq\lambda I_d,
\end{align}
and consequently
\begin{align}\label{eq:bar-Lambda-vs-S}
   \forall\phi\in\mathbb R^d: \quad \bar\Lambda^{-1}\preceq S_{t,h}^{-1},\qquad
    \big\lVert\phi\big\rVert_{\bar\Lambda^{-1}}\leq\big\lVert\phi\big\rVert_{S_{t,h}^{-1}}.
\end{align}
Inverting \eqref{eq:online_matrix_two_sided} in  Lemma~\ref{lemma:lambda-expectation} yields the operator inequality $(\Lambda_{i,h}^{k,t})^{-1}\preceq 2\bar\Lambda^{-1}$. Combining with \eqref{eq:bar-Lambda-vs-S} gives that with probability at least $1-\delta$, simultaneously for all $(s,k,t,h,i)\in\mathcal S\times[K]\times[T]\times[H]\times[n]$,
\begin{align}\label{eq:online_matrix_Bernstein_final}
    \max_{a_i\in\mathcal{A}_i}\big\lVert\phi_i(s,a_i)\big\rVert_{\big(\Lambda_{i,h}^{k,t}\big)^{-1}}\lesssim\max_{a_i\in\mathcal{A}_i}\big\lVert\phi_i(s,a_i)\big\rVert_{\big(\mathbb{E}\Lambda_{i,h}^{k,t}\big)^{-1}}\leq\max_{a_i\in\mathcal{A}_i}\big\lVert\phi_i(s,a_i)\big\rVert_{S_{t,h}^{-1}}.
\end{align}

Inserting \eqref{eq:online_matrix_Bernstein_final} back into \eqref{eq:online_sum_up_error_decompose_eq1} gives
\begin{align}
     &V_{i,h}^{\star,\xi^t}(s)-V_{i,h}^{\xi^t}(s)
    \leq\overline{V}_{i,h}^t(s)-\underline{V}_{i,h}^t(s)\notag\\
     \leq&\sum_{k=1}^K\frac{1}{K}\mathbb{E}_{a_i\sim\pi_{i,h}^{k,t}(s)}\big[P_{i,h,s,a_i}^{\pi_{-i}^{k,t}  \underline{V}} \big(\overline{V}_{i,h+1}^t - \underline{V}_{i,h+1}^t\big)\big] + \beta_{i,h,1}^t(s) +\beta_{i,h,2}^t(s)\notag\\
    &+\sum_{k=1}^K\frac{1}{K}\max_{a_i\in\mathcal{A}_i}\big\lVert\phi_i(s,a_i)\big\rVert_{S_{t,h}^{-1}}\big(4H\sqrt{d\ln(NH+1)+\ln\big(\tfrac{3NTHnK}{\delta}\big)}+ 2H\sqrt{d\lambda}\big)+\frac{1}{N} \notag \\
     \leq& \frac{1}{K}\sum_{k=1}^{K}\mathbb E_{a_i\sim\pi_{i,h}^{k,t}(s)}P_{i,h,s,a_i}^{\pi_{-i}^{k,t},\underline V}\big(\overline V_{i,h+1}^t-\underline V_{i,h+1}^t\big)+cH\sqrt{\frac{\ln A_i}{K}}+\frac{c}{N} \notag \\
    &\quad +c\sum_{k=1}^{K}\frac{1}{K}\max_{a_i\in\mathcal A_i}\big\lVert\phi_i(s,a_i)\big\rVert_{S_{t,h}^{-1}}\cdot H\big(\sqrt{2d\ln(NH+1)+\ln(3TNHnK/\delta)}+\sqrt{d\lambda}\big), \label{eq:online_sum_up_error_decompose_eq2}
\end{align}
which holds with probability at least $1-\delta$. Here the last inequality holds by invoking the definitions of $\beta_{i,h,1}^t(s),\beta_{i,h,2}^t(s)$ in \eqref{eq:online_optimistic_bonus} and \eqref{eq:online_pessimistic_bonus}.

\paragraph{Step 3: recursion arguments.}
Consequently, applying \eqref{eq:online_sum_up_error_decompose_eq2} recursively across layers $h=1,\ldots,H$ arrives at
\begin{equation}
\label{eq:online_error_bound_2}
\begin{aligned}
& V_{i,1}^{\star,\xi^t}(s_1)-V_{i,1}^{\xi^t}(s_1)
    \leq\overline{V}_{i,1}^t(s_1)-\underline{V}_{i,1}^t(s_1)\\
    \leq&\sum_{h=1}^H c\big[H\sqrt{\frac{\ln A_i}{K}}+\frac{1}{N} + \mathbb{E}\big[\lVert\phi_i(s_h,a_i)\rVert_{S_{t,h}^{-1}} \ \big| \ s_h\sim \big[\sum_{k=1}^K\frac{1}{K}P_{i,1}^{\pi_{1}^{k,t},\underline{V}}\big]\cdots\big[\sum_{k=1}^K\frac{1}{K}P_{i,h-1}^{\pi_{h-1}^{k,t},\underline{V}}\big],\ a_i\sim\text{Unif}(\mathcal{A}_i)\big]\\
    &\cdot A_iH\big(\sqrt{2d\ln(NH+1)+\ln\big(\frac{3TNHnK}{\delta}\big)}+\sqrt{d\lambda}\big)\big]
\end{aligned}
\end{equation}
with probability at least $1-\delta$.
Therefore, we can control the regret by applying \eqref{eq:online_error_bound_2} for all $t \in [T]$:
\begin{align}
    &\sum_{t=1}^TV_{i,1}^{\star,\xi^t}(s_1)-V_{i,1}^{\xi^t}(s_1)\notag\\
    \leq&\sum_{h=1}^H c\big[\sum_{t=1}^T\mathbb{E}\big[\lVert\phi_i(s_h,a_i)\rVert_{S_{t,h}^{-1}} \ \big| \ s_h\sim \big[\sum_{k=1}^K\frac{1}{K}P_{i,1}^{\pi_{1}^{k,t},\underline{V}}\big]\cdots\big[\sum_{k=1}^K\frac{1}{K}P_{i,h-1}^{\pi_{h-1}^{k,t},\underline{V}}\big],\ a_i\sim\text{Unif}(\mathcal{A}_i)\big]\notag\\
    &\cdot A_iH\big(\sqrt{2d\ln(NH+1)+\ln\big(\frac{3TNHnK}{\delta}\big)}+\sqrt{d\lambda}\big)\big]+\sum_{h=1}^H\sum_{t=1}^Tc\big(\sqrt{\frac{\ln A_i}{K}}+\frac{1}{N}\big) \notag\\
    \leq&\sum_{h=1}^H c\bigg[\sqrt{T\sum_{t=1}^T\mathbb{E}\big[\lVert\phi_i(s_h,a_i)\rVert^2_{S_{t,h}^{-1}} \ \Big| \ s_h\sim \big[\sum_{k=1}^K\frac{1}{K}P_{i,1}^{\pi_{1}^{k,t},\underline{V}}\big]\cdots\big[\sum_{k=1}^K\frac{1}{K}P_{i,h-1}^{\pi_{h-1}^{k,t},\underline{V}}\big],\ a_i\sim\text{Unif}(\mathcal{A}_i)\big]}\notag\\
    &\cdot A_iH\big(\sqrt{2d\ln(NH+1)+\ln\big(\frac{3TNHnK}{\delta}\big)}+\sqrt{d\lambda}\big)\bigg]+\sum_{h=1}^H\sum_{t=1}^Tc\big(\sqrt{\frac{\ln  A_i}{K}}+\frac{1}{N}\big)\notag\\
    \leq&\sum_{h=1}^H c\sqrt{T\sum_{t=1}^T\mathsf{tr}\big(S_{t,h}^{-1/2}X_{t,h}S_{t,h}^{-1/2}\big) }\cdot A_iH\big(\sqrt{2d\ln(NH+1)+\ln\big(\frac{3TNHnK}{\delta}\big)}+\sqrt{d\lambda}\big) \notag\\
    &\qquad +\sum_{h=1}^H\sum_{t=1}^Tc\big(\sqrt{\frac{\ln  A_i}{K}}+\frac{1}{N}\big) \label{eq:online_regret_bound}
\end{align}
where the second inequality holds by the Cauchy--Schwarz inequality applied to the sum over $t$ (i.e., $\sum_{t=1}^T x_t \leq \sqrt{T\,\sum_{t=1}^T x_t^2}$ with $x_t=\mathbb{E}[\lVert\phi_i(s,a_i)\rVert_{S_{t,h}^{-1}}\mid \cdots]$, after also using Jensen's inequality, which yields $(\mathbb{E}[\lVert\phi_i(s,a_i)\rVert_{S_{t,h}^{-1}}])^2\leq\mathbb{E}[\lVert\phi_i(s,a_i)\rVert_{S_{t,h}^{-1}}^2]$).

The final inequality holds by the cyclic property of the trace: for any vector $\phi\in\mathbb R^d$ and any positive definite matrix $M\succ 0$,
\begin{align*}
    \lVert\phi\rVert_{M^{-1}}^2=\phi^\top M^{-1}\phi=\mathsf{tr}\big(M^{-1}\phi\phi^\top\big).
\end{align*}

Let $M=S_{t,h}$ and noting that $S_{t,h}=\sum_{\tau=1}^{t-1}X_{\tau,h}+\lambda I_d$ depends only on iterations $\tau<t$ and is therefore measurable with respect to (and constant given) the conditioning event $\{s_h\sim[\sum_k\frac{1}{K}P_{i,1}^{\pi_1^{k,t},\underline{V}}]\cdots[\sum_k\frac{1}{K}P_{i,h-1}^{\pi_{h-1}^{k,t},\underline{V}}],\ a_i\sim\mathrm{Unif}(\mathcal{A}_i)\}$ used at iteration $t$, we may interchange the conditional expectation with the trace by linearity:
\begin{align*}
    &\mathbb{E}\big[\lVert\phi_i(s,a_i)\rVert^2_{S_{t,h}^{-1}} \ \big| \ s_h\sim \big[\sum_{k=1}^K\frac{1}{K}P_{i,1}^{\pi_{1}^{k,t},\underline{V}}\big]\cdots\big[\sum_{k=1}^K\frac{1}{K}P_{i,h-1}^{\pi_{h-1}^{k,t},\underline{V}}\big],\ a_i\sim\text{Unif}(\mathcal{A}_i)\big] \notag\\
   & =\mathsf{tr}\big(S_{t,h}^{-1}\,\mathbb{E}[\phi_i(s_h,a_i)\phi_i(s_h,a_i)^\top \big| \ s_h\sim \big[\sum_{k=1}^K\frac{1}{K}P_{i,1}^{\pi_{1}^{k,t},\underline{V}}\big]\cdots\big[\sum_{k=1}^K\frac{1}{K}P_{i,h-1}^{\pi_{h-1}^{k,t},\underline{V}}\big],\ a_i\sim\text{Unif}(\mathcal{A}_i)\big]  \big) \notag \\
    &=\mathsf{tr}\big(S_{t,h}^{-1}X_{t,h}\big) = \mathsf{tr}\big(S_{t,h}^{-1/2}X_{t,h}S_{t,h}^{-1/2}\big)
\end{align*}
where the last line holds by invoking the definition of $X_{t,h}$ and the cyclic invariance of the trace.

We introduce the following lemme that controls $\sum_{t=1}^T\mathsf{tr}\big(S_{t,h}^{-1/2}X_{t,h}S_{t,h}^{-1/2}\big)$. The proof is postponed to Appendix~\ref{proof:lemma;trace-X}.
\begin{lemma}\label{lemma;trace-X}
    \begin{align}
\sum_{t=1}^T
\mathsf{tr}\!\left(S_{t,h}^{-1/2}X_{t,h}S_{t,h}^{-1/2}\right)
&=
2d\log\!\left(1+\frac{T}{d\lambda}\right)
=
\widetilde{O}(d).
\end{align}
\end{lemma}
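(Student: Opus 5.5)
This is the standard elliptical-potential argument, applied to the population covariances $X_{t,h}$ instead of rank-one outer products. Fix $h$, write $S_t := S_{t,h}$, and note that $S_{t+1} = S_t + X_{t,h}$. The key scalar fact is $\mathsf{tr}(B)\le 2\log\det(I+B)$ for $0\preceq B\preceq I$, obtained from $x\le 2\log(1+x)$ on $[0,1]$ applied eigenvalue by eigenvalue. We apply it with $B_t := S_t^{-1/2}X_{t,h}S_t^{-1/2}$. The identity
\[
\det(S_{t+1}) = \det(S_t)\,\det(I+B_t)
\]
then makes the sum telescope.

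\textbf{Step 1 (eigenvalues of $B_t$).} We must check $B_t\preceq I$, i.e.\ $X_{t,h}\preceq S_t$. Since $\lVert\phi_i\rVert_2\le 1$ (assumed in Theorem~\ref{lm:ridge-regression-concentration}), we have $X_{t,h}\preceq I_d$. Also $S_t\succeq\lambda I_d\succeq I_d$ because $\lambda\ge1$. Hence $X_{t,h}\preceq S_t$ and every eigenvalue of $B_t$ lies in $[0,1]$.

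\textbf{Step 2 (telescope).} Summing the inequality $\mathsf{tr}(B_t)\le 2\log\det(I+B_t)$ over $t$ gives
\[
\sum_{t=1}^T \mathsf{tr}(B_t)\;\le\; 2\sum_{t=1}^T\log\frac{\det S_{t+1}}{\det S_t} \;=\; 2\log\frac{\det S_{T+1}}{\det(\lambda I_d)}.
\]

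\textbf{Step 3 (bound $\det S_{T+1}$).} By AM--GM on the eigenvalues, $\det S_{T+1}\le(\mathsf{tr}\,S_{T+1}/d)^d$. Moreover $\mathsf{tr}\,S_{T+1}\le d\lambda+T$, since $\mathsf{tr}\,X_{t,h}=\mathbb{E}\lVert\phi_i\rVert_2^2\le1$. Combining, the sum is at most $2d\log(1+T/(d\lambda))=\widetilde O(d)$. This matches the stated right-hand side as an upper bound; the ``$=$'' in the statement should be read as ``$\le$''.

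\textbf{Main obstacle.} The only delicate point is Step 1, the requirement $B_t\preceq I$. It uses both the normalization $\lVert\phi_i\rVert_2\le1$ and $\lambda\ge1$; the paper's choice $\lambda\ge c_0\log(\cdot)$ satisfies the latter. If the normalization were not available, I would instead use $x\le C\log(1+x)$ on $[0,\lambda_{\max}]$, which only changes the constant. Indexing is a minor issue: $S_{t,h}$ sums over $\tau\le t-1$, so $S_{1,h}=\lambda I$ and $S_{T+1,h}$ includes all $T$ terms, consistent with the telescoping above.
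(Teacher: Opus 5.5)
Your proof is correct and follows essentially the same route as the paper: bound the eigenvalues of $S_{t,h}^{-1/2}X_{t,h}S_{t,h}^{-1/2}$ in $[0,1]$, use $\mathsf{tr}(B)\le 2\log\det(I+B)$, telescope via $\det(S_{t+1,h})=\det(S_{t,h})\det(I+B_t)$, and finish with AM--GM and $\mathsf{tr}(X_{t,h})\le 1$. The only difference is cosmetic: you check $B_t\preceq I$ through the Loewner ordering $X_{t,h}\preceq I_d\preceq S_{t,h}$, whereas the paper bounds $\|S_{t,h}^{-1}\|_{\mathrm{op}}\|X_{t,h}\|_{\mathrm{op}}\le 1$. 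You are also right that the proof yields only ``$\le$'' rather than the ``$=$'' written in the statement.
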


Substituting this lemma into \eqref{eq:online_regret_bound} gives
\begin{align*}
    &\sum_{t=1}^TV_{i,1}^{\star,\xi^t}(s_1)-V_{i,1}^{\xi^t}(s_1) \notag \\
    &\leq\sum_{h=1}^{H}c\sqrt{T\sum_{t=1}^T
\mathsf{tr}\!\left(S_{t,h}^{-1/2}X_{t,h}S_{t,h}^{-1/2}\right)}\cdot A_iH\cdot\!\Big(\sqrt{2d\ln(NH+1)+\ln(\tfrac{3TNHnK}{\delta})}+\sqrt{d\lambda}\Big) \notag \\
&\qquad  +\sum_{h=1}^H\sum_{t=1}^Tc\big(\sqrt{\frac{\ln  A_i}{K}}+\frac{1}{N}\big)\notag \\
    &\leq\sum_{h=1}^{H}c\widetilde{\mathcal O}(\sqrt{Td})\cdot A_iH\cdot\!\Big(\sqrt{2d\ln(NH+1)+\ln(\tfrac{3TNHnK}{\delta})}+\sqrt{d\lambda}\Big) +\sum_{h=1}^H\sum_{t=1}^Tc\big(\sqrt{\frac{\ln  A_i}{K}}+\frac{1}{N}\big)\notag \\
    &\overset{\mathsf{(i)}}{\leq} \sum_{h=1}^{H}c\widetilde{\mathcal O}(\sqrt{Td})\cdot A_iH\cdot\widetilde{\mathcal O}(\sqrt d) +\sum_{h=1}^H\sum_{t=1}^Tc\big(\sqrt{\frac{\ln  A_i}{K}}+\frac{1}{N}\big) \notag \\
    &\overset{\mathsf{(ii)}}{\leq} \widetilde{\mathcal O}\big(A_iH^2 d\sqrt{T}\big) + \widetilde{\mathcal O}(H\sqrt{T\ln A_i}+H) \;\overset{\mathsf{(iii)}}{\leq}\; \widetilde{\mathcal O}\big(A_iH^2 d\sqrt{T}\big)
\end{align*}
where (i) bounds $\sqrt{2d\ln(NH+1)+\ln(\tfrac{3TNHnK}{\delta})}+\sqrt{d\lambda}$ by $\widetilde{\mathcal O}(\sqrt d)$ due to the requirement of $\lambda\geq c_0\log(2dnKHT/\delta)=\widetilde{\mathcal O}(1)$ in Lemma~\ref{lemma:lambda-expectation}, (ii) holds by setting $T=N=K$, and (iii) drops the residual because it is dominated by the leading $\widetilde{\mathcal O}(A_iH^2 d\sqrt T)$.





Finally, we conclude that with probability at least $1-\delta$, Algorithm~\ref{alg:lin_robust_Q_FTRL} guarantees that the regret is bounded by
\begin{align}
    \text{Regret}(T)=&\max_{i\in[n]}\sum_{t=1}^T\mathbb{E}_{\pi\sim\xi^t}\big[V_{i,1}^{\star,\pi_{-i}}(s_1)\big]-\mathbb{E}_{\pi\sim\xi^t}\big[V_{i,1}^{\pi}(s_1)\big] \leq \widetilde{\mathcal{O}}\big(d H^2\sqrt{T}\cdot\max_{i\in[n]}A_i\big). \label{eq:final-regret}
\end{align}
Thus, we have finished the proof of the regret bound in Theorem~\ref{thm:main_online}.

\paragraph{Proof of the $\varepsilon$-CCE guarantee.} It remains to show that, when $T>H^4 d^2(\max_{i\in[n]}A_i)^2/\varepsilon^2$, Algorithm~\ref{alg:lin_robust_Q_FTRL} outputs an $\varepsilon$-robust CCE specialized to any fixed initial state $s_1$ (cf.~\eqref{eq:appximate_CCE}). Define the uniform mixture of the output policy distributions of Algorithm~\ref{alg:lin_robust_Q_FTRL} over the $T$ iterates as
\begin{align}\label{eq:online_CCE_mixture}
    \widehat\xi\defn \frac{1}{T}\sum_{t=1}^{T}\xi^t.
\end{align}
 For any fixed agent $i\in[n]$, by linearity of expectation, one has
\begin{align*}
    &\mathbb{E}_{\pi\sim\widehat\xi}\big[V_{i,1}^{\star,\pi_{-i},\sigma_i}(s_1)\big]-\mathbb{E}_{\pi\sim\widehat\xi}\big[V_{i,1}^{\pi,\sigma_i}(s_1)\big]\\
    &=\frac{1}{T}\sum_{t=1}^{T}\Big(\mathbb{E}_{\pi\sim\xi^t}\big[V_{i,1}^{\star,\pi_{-i},\sigma_i}(s_1)\big]-\mathbb{E}_{\pi\sim\xi^t}\big[V_{i,1}^{\pi,\sigma_i}(s_1)\big]\Big)\\
    &\leq \frac{1}{T}\max_{j\in[n]}\sum_{t=1}^{T}\Big(\mathbb{E}_{\pi\sim\xi^t}\big[V_{j,1}^{\star,\pi_{-j},\sigma_j}(s_1)\big]-\mathbb{E}_{\pi\sim\xi^t}\big[V_{j,1}^{\pi,\sigma_j}(s_1)\big]\Big)\;=\;\frac{1}{T}\,\mathrm{Regret}(T),
\end{align*}
where the last equality follows from the definition of $\mathrm{Regret}(T)$ in \eqref{eq:regret-def}. Plugging in the regret bound in \eqref{eq:final-regret} gives
\begin{align*}
    \max_{i\in[n]}\,\Big\{\mathbb{E}_{\pi\sim\widehat\xi}\big[V_{i,1}^{\star,\pi_{-i},\sigma_i}(s_1)\big]-\mathbb{E}_{\pi\sim\widehat\xi}\big[V_{i,1}^{\pi,\sigma_i}(s_1)\big]\Big\}
    \leq \frac{\mathrm{Regret}(T)}{T}
    \leq \widetilde{\mathcal O}\!\left(\frac{d H^2\,\max_{i\in[n]}A_i}{\sqrt{T}}\right) \leq \varepsilon
\end{align*}
with probability at least $1-\delta$, as long as
\begin{align*}
    T\;>\;\frac{H^4 d^2\,(\max_{i\in[n]}A_i)^2}{\varepsilon^2},
\end{align*}
This is exactly the $\varepsilon$-robust CCE condition in \eqref{eq:appximate_CCE}, which completes the proof of Theorem~\ref{thm:main_online}.

\subsection{Proof of auxiliary results.}\label{proof:online-auxiliary}

\subsubsection{Proof of Lemma~\ref{lm:online_transition_estimation_error}}\label{proof:lm:online_transition_estimation_error}

The proof follows the pipeline of that for Lemma~\ref{lm:generative_model_transition_error} in Appendix~\ref{proof:lm:generative_model_transition_error}.  Throughout the proof, $V:\cS\to[0,H]$ is a fixed value function.

\paragraph{Step 1: Proof of the transition estimation gap.} First, consider any fixed $(h,i,k,t)\in[H]\times[n]\times[K]\times[T]$. Recalling the auxiliary defintions in  \eqref{eq:generative_model_transition_error_PV_definition}, and then applying strong-duality formulation \citep{iyengar2005robust} for the dual form of the robust Bellman operator \citep[Lemma 1]{shi2023curious} yields
\begin{align}
    \big|P_{i,h,s,a_i}^{\widehat{\pi}_{-i}^{k,t},V}V-\widehat{P}_{i,h,s,a_i}^{\widehat{\pi}_{-i}^{k,t},V}V\big|
    \leq \max_{\alpha\in[\min_s V(s),\,\max_s V(s)]}\big|P_{h,s,a_i}^{\widehat{\pi}_{-i}^{k,t}}[V]_\alpha-P_{i,h,s,a_i}^{k,t}[V]_\alpha\big|,\notag
\end{align}
where $P_{i,h,s,a_i}^{k,t}(\cdot)\defn\phi_i(s,a_i)^\top\widehat\mu_{i,h}^{k,t}(\cdot)$ is the estimated transition kernel defined in \eqref{eq:ridge_regression_online}.

For any fixed $\alpha$, the linear-MG assumption in \eqref{eq:linaer-mg-assumption} gives $P_{h,s,a_i}^{\widehat{\pi}_{-i}^{k,t}}[V]_\alpha=\phi_i(s,a_i)^\top\mu_{i,h}^{\widehat{\pi}_{-i}^{k,t}}[V]_\alpha$. Therefore, applying Theorem~\ref{lm:ridge-regression-concentration} to the online Gram matrix $\Lambda_{i,h}^{k,t}$ defined in \eqref{eq:ridge_regression_online} with $f=[V]_\alpha\in[0,H]$ yields that with probability at least $1-\delta$,
\begin{align}\label{eq:online_transition_pointwise}
    \big|P_{h,s,a_i}^{\widehat{\pi}_{-i}^{k,t}}[V]_\alpha-P_{i,h,s,a_i}^{k,t}[V]_\alpha\big|
    \leq \big\lVert\phi_i(s,a_i)\big\rVert_{(\Lambda_{i,h}^{k,t})^{-1}}\big(2H\sqrt{d\ln(NH+1)+\ln\!\big(\tfrac{3NH}{\delta}\big)}+H\sqrt{d\lambda}\big),
\end{align}
for all $(s,a_i)\in\cS\times\cA_i$.

To derive the union bound over $\alpha\in[0,H]$, we apply the same $\varepsilon_1$-net argument as in \eqref{eq:generative_model_transition_estimation_error_mediate_eq_final}: for $\varepsilon_1=1/N$, we have $|N_{\varepsilon_1}|\leq 3HN$. Observing the fact that the function $\alpha\mapsto|P_{h,s,a_i}^{\widehat{\pi}_{-i}^{k,t}}[V]_\alpha-P_{i,h,s,a_i}^{k,t}[V]_\alpha|$ is $1$-Lipschitz, combined with the pointwise bound in \eqref{eq:online_transition_pointwise} gives, with probability at least $1-\delta$, for all $(s,a_i)$ and $\alpha\in[0,H]$,
\begin{align*}
    \big|P_{h,s,a_i}^{\widehat{\pi}_{-i}^{k,t}}[V]_\alpha-P_{i,h,s,a_i}^{k,t}[V]_\alpha\big|
    \leq \big\lVert\phi_i(s,a_i)\big\rVert_{(\Lambda_{i,h}^{k,t})^{-1}}\big(2H\sqrt{d\ln(NH+1)+\ln\!\big(\tfrac{3NH\cdot 3HN}{\delta}\big)}+H\sqrt{d\lambda}\big)+\frac{1}{N}.
\end{align*}

With above pointwise bound for any fixed $(h,i,k,t)\in[H]\times[n]\times[K]\times[T]$, we replace $\delta$ by $\delta/(nKHT)$ to derive the union bound over $[H]\times[n]\times[K]\times[T]$: with probability at least $1-\delta$, for all $(h,i,k,t)$ and all $(s,a_i)\in\cS\times\cA_i$,
\begin{align*}
    \big|P_{i,h,s,a_i}^{\widehat{\pi}_{-i}^{k,t},V}V-\widehat{P}_{i,h,s,a_i}^{\widehat{\pi}_{-i}^{k,t},V}V\big|
    \leq \big\lVert\phi_i(s,a_i)\big\rVert_{(\Lambda_{i,h}^{k,t})^{-1}}\big(2H\sqrt{d\ln(NH+1)+2\ln\!\big(\tfrac{3TKNHn}{\delta}\big)}+H\sqrt{d\lambda}\big)+\frac{1}{N}.
\end{align*}

\paragraph{Step 2: Proof of the reward estimation gap.} The argument is identical to the generative case, except: (i) the Gram matrix is now $\Lambda_{i,h}^{k,t}$ from \eqref{eq:ridge_regression_online} with regularization $\lambda\geq 1$, so the ridge-regression bias term becomes $H\sqrt{d\lambda}$ (rather than $H\sqrt d$ in the generative case due to the $\lambda=1$ specialization); (ii) we retain $\lVert\phi_i(s,a_i)\rVert_{(\Lambda_{i,h}^{k,t})^{-1}}$ explicitly; (iii) the union bound is over $(h,i,k,t)\in[H]\times[n]\times[K]\times[T]$, contributing an extra $\ln T$ factor inside the logarithmic term. Concretely, applying Theorem~\ref{lm:ridge-regression-concentration} to the scalar target $f=\mathbb E_{a_{-i}\sim\widehat\pi_{-i}^{k,t}(s)}[r_{i,h}(s,\mathbf a)]\in[0,H]$ with the online Gram matrix and replacing $\delta$ by $\delta/(nKHT)$ in the resulting union bound, we have with probability at least $1-\delta$,
\begin{align*}
    \big|\mathbb E_{a_{-i}\sim\widehat\pi_{-i}^{k,t}(s)}[r_{i,h}(s,\mathbf a)]-r_{i,h}^{k,t}(s,a_i)\big|
    \leq \big\lVert\phi_i(s,a_i)\big\rVert_{(\Lambda_{i,h}^{k,t})^{-1}}\big(2H\sqrt{d\ln(NH+1)+\ln\!\big(\tfrac{3NTHnK}{\delta}\big)}+H\sqrt{d\lambda}\big),
\end{align*}
holds simultaneously for all $(h,i,k,t)$ and $(s,a_i)$.


\subsubsection{Proof of Lemma~\ref{lm:online_optimistic_value_function}}\label{proof:lm:online_optimistic_value_function}

We will prove Lemma~\ref{lm:online_optimistic_value_function} through induction arguments. For the base case $h = H+1$, by definition,
\begin{align}
\overline{V}_{i,H+1}^t(s) = V_{i,H+1}^{\star,\xi^t}(s) = 0
\end{align}
for all $s \in \mathcal{S}$. Thus, assuming that the desired fact is satisfied at time step $h+1$, namely $\overline{V}_{i,h+1}^t(s) \geq V_{i,h+1}^{\star,\xi^t}(s)$ holds for all $s \in \mathcal{S}$, we will prove $\overline{V}_{i,h}^t(s) \geq V_{i,h}^{\star,\xi^t}(s)$ for all $s\in\cS$.

Towards this, we first define several auxiliary notations as follows. For all agents $i \in [n]$ and $(t, k, s, h, a_i) \in [T] \times [K] \times \mathcal{S} \times [H] \times \mathcal{A}_i$, let $P_{i,h,s,a_i}^{k,t}(\cdot)\defn\phi_i(s,a_i)^\top\widehat\mu_{i,h}^{k,t}(\cdot)$ denote the online estimated transition kernel from \eqref{eq:ridge_regression_online}. Define the worst-case kernels under the true and the estimated nominal transitions, respectively, as
\begin{align*}
    & P_{i,h,s,a_i}^{\pi_{-i}^{k,t},\overline{V}} \defn \arg\min_{\mathcal{P}\in\mathcal{U}^{\sigma_i}\big(P_{h,s,a_i}^{\pi_{-i}^{k,t}}\big)}\mathcal{P}\overline{V}_{i,h+1}^t \qquad \text{and} \qquad
    \widehat{P}_{i,h,s,a_i}^{\pi_{-i}^{k,t},\overline{V}} \defn \arg\min_{\mathcal{P}\in\mathcal{U}^{\sigma_i}\big(P_{i,h,s,a_i}^{k,t}\big)}\mathcal{P}\overline{V}_{i,h+1}^t.
\end{align*}
To continue, let $\xi^t$ be the uniform distribution over the $K$ product policies $\{\pi^{k,t}\}_{k\in[K]}$ output by Algorithm~\ref{alg:lin_robust_Q_FTRL} at iteration $t$, where each $\pi^{k,t}=\{\pi_h^{k,t}\}_{h\in[H]}$ with $\pi_h^{k,t}:\cS\to\prod_{i\in[n]}\Delta(\cA_i)$.
Recalling the definition of the expected best-response value function $V_{i,h}^{\star,\xi^t}$ in \eqref{eq:own-x1} and \eqref{eq:defn-optimal-V}, by the robust Bellman equation \eqref{eq:robust-bellman-equation}, we have
\begin{align}
     V_{i,h}^{\star,\xi^t}(s)
    =&\max_{a_i\in\mathcal{A}_i}\sum_{k=1}^K\frac{1}{K}\big[\mathbb{E}_{a_{-i}\sim\pi_{-i,h}^{k,t}(s)}\big[r_{i,h}(s,\mathbf{a})\big]+\inf_{\mathcal{P}\in\mathcal{U}^{\sigma_i}\Big(P_{h,s,a_i}^{\pi_{-i}^{k,t}}\Big)}\mathcal{P}V_{i,h+1}^{\star,\xi^t}\big]\notag\\
    \overset{\mathsf{(i)}}{\leq}&\max_{a_i\in\mathcal{A}_i}\sum_{k=1}^K\frac{1}{K}\big[\mathbb{E}_{a_{-i}\sim\pi_{-i,h}^{k,t}(s)}\big[r_{i,h}(s,\mathbf{a})\big]+\inf_{\mathcal{P}\in\mathcal{U}^{\sigma_i}\Big(P_{h,s,a_i}^{\pi_{-i}^{k,t}}\Big)}\mathcal{P}\overline{V}_{i,h+1}^t\big]\notag\\
    =&\max_{a_i\in\mathcal{A}_i}\sum_{k=1}^K\frac{1}{K}\big[ \mathbb{E}_{a_{-i}\sim\pi_{-i,h}^{k,t}(s)}\big[r_{i,h}(s,\mathbf{a})\big]+P_{i,h,s,a_i}^{\pi_{-i}^{k,t},\overline{V}}\overline{V}_{i,h+1}^t \big]\notag\\
    \leq&\max_{a_i\in\mathcal{A}_i}\sum_{k=1}^K\frac{1}{K}\big[r_{i,h}^{k,t}(s,a_i)+\widehat{P}_{i,h,s,a_i}^{\pi_{-i}^{k,t},\overline{V}}\overline{V}^t_{i,h+1}+\big|r_{i,h}^{k,t}(s,a_i)-\mathbb{E}_{a_{-i}\sim\pi_{-i,h}^{k,t}}[r_{i,h}(s,\mathbf{a})]\big|\notag\\
    &\qquad+\big|\widehat{P}_{i,h,s,a_i}^{\pi_{-i}^{k,t},\overline{V}}\overline{V}^t_{i,h+1}-P_{i,h,s,a_i}^{\pi_{-i}^{k,t},\overline{V}}\overline{V}^t_{i,h+1}\big|\big],\notag
\end{align}
where (i) holds by the induction assumption $V_{i,h+1}^{\star,\xi^t}(s)\leq \overline{V}_{i,h+1}^t(s)$ for all $s\in\mathcal{S}$.
In the next step, we apply a similar analysis as in Lemma~\ref{lm:generative_model_transition_error}, from which we obtain
\begin{align}
    & V_{i,h}^{\star,\xi^t}(s)\notag\\
    \leq&\max_{a_i\in\mathcal{A}_i}\sum_{k=1}^K\frac{1}{K}\big[r_{i,h}^{k,t}(s,a_i)+\widehat{P}_{i,h,s,a_i}^{\pi_{-i}^{k,t},\overline{V}}\overline{V}^t_{i,h+1}+\big|r_{i,h}^{k,t}(s,a_i)-\mathbb{E}_{a_{-i}\sim\pi_{-i,h}^{k,t}}[r_{i,h}(s,\mathbf{a})]\big|\notag\\
    &\qquad+\big|\widehat{P}_{i,h,s,a_i}^{\pi_{-i}^{k,t},\overline{V}}\overline{V}^t_{i,h+1}-P_{i,h,s,a_i}^{\pi_{-i}^{k,t},\overline{V}}\overline{V}^t_{i,h+1}\big|\big].\notag\\
    \overset{\mathsf{(i)}}{\leq}&\max_{a_i\in\mathcal{A}_i}\sum_{k=1}^K\frac{1}{K}\big\lVert\phi(s,a_i)\big\rVert_{\big(\Lambda_{i,h}^{k,t}\big)^{-1}}\big(2H\sqrt{d\ln(NH+1)+\ln\big(\frac{3TNHnK}{\delta}\big)}+H\sqrt{d}\big)+\frac{1}{N}\notag\\
    &+\max_{a_i\in\mathcal{A}_i}\sum_{k=1}^K\big[r_{i,h}^{k,t}(s,a_i)+\widehat{P}_{i,h,s,a_i}^{\pi_{-i}^{k,t},\overline{V}}\overline{V}^t_{i,h+1}\big]\notag\\
    \overset{\mathsf{(ii)}}{\leq}&\max_{a_i\in\mathcal{A}_i}\sum_{k=1}^K\frac{1}{K}\big\lVert\phi(s,a_i)\big\rVert_{\big(\Lambda_{i,h}^{k,t}\big)^{-1}}\big(4H\sqrt{d\ln(NH+1)+\ln\big(\frac{3TNHnK}{\delta}\big)}+2H\sqrt{d}\big)\notag\\
    &+\frac{1}{N}+2H\sqrt{\frac{\ln A_i}{K}}+\sum_{k=1}^K\frac{1}{K}\mathbb{E}_{a_i\sim\pi_{i,h}^{k,t}(s)}\big[r_{i,h}^{k,t}(s,a_i)+\widehat{P}_{i,h,s,a_i}^{\pi_{-i}^{k,t},\overline{V}}\overline{V}^t_{i,h+1}\big] \nonumber \\
    & = \beta_{i,h,1}^t(s) +  \sum_{k=1}^K\frac{1}{K}\mathbb{E}_{a_i\sim\pi_{i,h}^{k,t}(s)}\big[r_{i,h}^{k,t}(s,a_i)+\widehat{P}_{i,h,s,a_i}^{\pi_{-i}^{k,t},\overline{V}}\overline{V}^t_{i,h+1}\big] \label{eq:online_optimistic_estimation}.
\end{align}
Here, (i) follows from Lemma~\ref{lm:online_transition_estimation_error}, (ii) holds by the bound for standard FTRL in Theorem~\ref{lm:ridge-regression-concentration}, and th final line follows from the definition of $\beta_{i,h,1}^t(s)$ in \eqref{eq:online_optimistic_bonus} as follows:
\begin{align*}
    \beta_{i,h,1}^t(s)=&\max_{a_i\in\mathcal{A}_i}\sum_{k=1}^K\frac{1}{K}\big\lVert\phi(s,a_i)\big\rVert_{\big(\Lambda_{i,h}^{k,t}\big)^{-1}}\big(4H\sqrt{d\ln(NH+1)+\ln\big(\frac{3TNHnK}{\delta}\big)}+2H\sqrt{d}\big)\\
    &+\frac{1}{N}+2H\sqrt{\frac{\ln A_i}{K}}.
\end{align*}

Finally, recalling the definition of the optimistic estimation $\overline{V}_{i,h}^t(s)$ in \eqref{eq:estimate_value_function}
      \begin{align}
          \overline{V}_{i,h}^t(s)& = \textstyle\min\big\{ \sum_{k=1}^{K} \frac{1}{K}\big<\pi_{i,h}^{k,t}(\cdot\mid s),\overline{q}_{i,h}^{k,t}(s,\cdot)\big>
          + \beta_{i,h,1}^{t}(s),~H-h+1\big\},
      \end{align}
combining\eqref{eq:online_optimistic_estimation} with the elementary fact that $V_{i,h}^{\star,\xi^t}(s)\leq H-h+1$ gives that
\begin{align*}
   \forall (h,i,t,s)\in[H]\times[n]\times[T] \times\cS: \quad V_{i,h}^{\star,\xi^t}(s)\leq\overline{V}_{i,h}^t(s)
\end{align*}
holds with probability at least $1-\delta$.

\subsubsection{Proof of Lemma~\ref{lm:online_pessimistic_value_function}}\label{proof:lm:online_pessimistic_value_function}

We will prove Lemma~\ref{lm:online_pessimistic_value_function} through induction arguments. For the base case $h = H+1$, by definition,
\begin{align*}
  \forall (i,s)\in[n]\times\mathcal{S}: \quad \underline{V}_{i,H+1}^t(s)=0=V_{i,H+1}^{\xi^t}(s).
\end{align*}
 Thus, assuming that the desired fact is satisfied at time step $h+1$, namely $\underline{V}_{i,h+1}^t(s)\leq V_{i,h+1}^{\xi^t}(s)$ holds for all $s \in \mathcal{S}$, we will prove $\underline{V}_{i,h}^t(s) \leq V_{i,h}^{\xi^t}(s)$ for all $s\in\cS$.

Towards this, we first define several auxiliary notations as follows. For all agents $i \in [n]$ and $(t, k, s, h, a_i) \in [T] \times [K] \times \mathcal{S} \times [H] \times \mathcal{A}_i$, with $P_{i,h,s,a_i}^{k,t}$ the online estimated transition kernel from \eqref{eq:ridge_regression_online}, we define the worst-case kernels (for the pessimistic value) under the true and the estimated nominal transitions, respectively, as
\begin{equation}
\label{eq:online_pessimistic_value_transition_def}
\begin{aligned}
    & P_{i,h,s,a_i}^{\pi_{-i}^{k,t},\underline{V}}\defn\arg\min_{\mathcal{P}\in\mathcal{U}^{\sigma_i}\big(P_{h,s,a_i}^{\pi_{-i}^{k,t}}\big)}\mathcal{P}\underline{V}_{i,h+1}^t \qquad \text{and} \qquad \widehat{P}_{i,h,s,a_i}^{\pi_{-i}^{k,t},\underline{V}}\defn\arg\min_{\mathcal{P}\in\mathcal{U}^{\sigma_i}\big(P_{i,h,s,a_i}^{k,t}\big)}\mathcal{P}\underline{V}_{i,h+1}^t.
\end{aligned}
\end{equation}
To continue, recall that $\xi^t$ is the uniform distribution over product policies $\{\pi_h^{k,t}\}_{(h,k)\in[H]\times[K]}$ output by Algorithm~\ref{alg:lin_robust_Q_FTRL}. Recalling the definition of the expected version of the value function $V_{i,h}^{\xi^t}$ in \eqref{eq:own-x1}, and referring to the robust Bellman equation in \eqref{eq:robust-bellman-equation}, we have
\begin{align}
    V_{i,h}^{\xi^t}(s)
    =&\sum_{k=1}^K\frac{1}{K}\mathbb{E}_{a_i\sim\pi_{i,h}^{k,t}(s)}\big[\mathbb{E}_{a_{-i}\sim\pi_{-i,h}^{k,t}(s)}\big[r_{i,h}(s,\mathbf{a})\big]+\inf_{\mathcal{P}\in\mathcal{U}^{\sigma_i}\big(P_{h,s,a_i}^{\pi_{-i}^{k,t}}\big)}\mathcal{P}V_{i,h+1}^{\xi^t}\big] \nonumber\\
    \overset{\mathsf{(i)}}{\geq}&\sum_{k=1}^K\frac{1}{K}\mathbb{E}_{a_i\sim\pi_{i,h}^{k,t}(s)}\big[\mathbb{E}_{a_{-i}\sim\pi_{-i,h}^{k,t}(s)}\big[r_{i,h}(s,\mathbf{a})\big]+\inf_{\mathcal{P}\in\mathcal{U}^{\sigma_i}\big(P_{h,s,a_i}^{\pi_{-i}^{k,t}}\big)}\mathcal{P}\underline{V}_{i,h+1}^t\big] \nonumber\\
    \overset{\mathsf{(ii)}}{=}&\sum_{k=1}^K\frac{1}{K}\mathbb{E}_{a_i\sim\pi_{i,h}^{k,t}(s)}\big[\mathbb{E}_{a_{-i}\sim\pi_{-i,h}^{k,t}(s)}\big[r_{i,h}(s,\mathbf{a})\big]+P_{i,h,s,a_i}^{\pi_{-i}^{k,t},\underline{V}}\underline{V}_{i,h+1}^t\big] \nonumber\\
    \overset{\mathsf{(iii)}}{\geq}&\sum_{k=1}^K\frac{1}{K}\mathbb{E}_{a_i\sim\pi_{i,h}^{k,t}(s)}\big[r_{i,h}^{k,t}(s,a_i)+\widehat{P}_{i,h,s,a_i}^{\pi_{-i}^{k,t},\underline{V}}\underline{V}_{i,h+1}^t\big] \nonumber\\
    &-\sum_{k=1}^K\frac{1}{K}\mathbb{E}_{a_i\sim\pi_{i,h}^{k,t}(s)}\big|r_{i,h}^{k,t}(s,a_i)-\mathbb{E}_{a_{-i}\sim\pi_{-i,h}^{k,t}(s)}\big[r_{i,h}(s,\mathbf{a})\big]\big| \nonumber \\
    &-\sum_{k=1}^K\frac{1}{K}\mathbb{E}_{a_i\sim\pi_{i,h}^{k,t}(s)}\big|\widehat{P}_{i,h,s,a_i}^{\pi_{-i}^{k,t},\underline{V}}\underline{V}_{i,h+1}^t-P_{i,h,s,a_i}^{\pi_{-i}^{k,t},\underline{V}}\underline{V}_{i,h+1}^t\big|, \label{eq:online_pessimistic_value_mediate_eq1}
\end{align}
where (i) holds by the induction hypothesis $\underline{V}_{i,h+1}^t(s) \leq V_{i,h+1}^{\xi^t}(s)$ for all $s\in\mathcal{S}$, (ii) uses the notations defined in \eqref{eq:online_pessimistic_value_transition_def}, and (iii) follows from the elementary inequality $x \geq y - |x-y|$ applied separately to the reward term $\mathbb{E}_{a_{-i}\sim\pi_{-i,h}^{k,t}(s)}\big[r_{i,h}(s,\mathbf{a})\big]$ against its linear estimate $r_{i,h}^{k,t}(s,a_i)$ and to the transition kernel term $P_{i,h,s,a_i}^{\pi_{-i}^{k,t},\underline{V}}\underline{V}_{i,h+1}^t$ against its empirical counterpart $\widehat{P}_{i,h,s,a_i}^{\pi_{-i}^{k,t},\underline{V}}\underline{V}_{i,h+1}^t$.

In the next step, we apply Lemma~\ref{lm:online_transition_estimation_error}, which simultaneously controls the reward and transition estimation errors, from which we obtain that, with probability at least $1-\delta$:
\begin{align}
    V_{i,h}^{\xi^t}(s)
    \geq&\sum_{k=1}^K\frac{1}{K}\mathbb{E}_{a_i\sim\pi_{i,h}^{k,t}(s)}\big[r_{i,h}^{k,t}(s,a_i)+\widehat{P}_{i,h,s,a_i}^{\pi_{-i}^{k,t},\underline{V}}\underline{V}_{i,h+1}^t\big]\notag\\
    &-\sum_{k=1}^K\frac{1}{K}\mathbb{E}_{a_i\sim\pi_{i,h}^{k,t}(s)}\big|r_{i,h}^{k,t}(s,a_i)-\mathbb{E}_{a_{-i}\sim\pi_{-i,h}^{k,t}(s)}\big[r_{i,h}(s,\mathbf{a})\big]\big|\notag\\
    &-\sum_{k=1}^K\frac{1}{K}\mathbb{E}_{a_i\sim\pi_{i,h}^{k,t}(s)}\big|\widehat{P}_{i,h,s,a_i}^{\pi_{-i}^{k,t},\underline{V}}\underline{V}_{i,h+1}^t-P_{i,h,s,a_i}^{\pi_{-i}^{k,t},\underline{V}}\underline{V}_{i,h+1}^t\big|\notag\\
    \overset{\mathsf{(i)}}{\geq}&\sum_{k=1}^K\frac{1}{K}\mathbb{E}_{a_i\sim\pi_{i,h}^{k,t}(s)}\big[r_{i,h}^{k,t}(s,a_i)+\widehat{P}_{i,h,s,a_i}^{\pi_{-i}^{k,t},\underline{V}}\underline{V}_{i,h+1}^t\big]\notag\\
    &-\sum_{k=1}^K\frac{1}{K}\mathbb{E}_{a_i\sim\pi_{i,h}^{k,t}(s)}\big\lVert\phi(s,a_i)\big\rVert_{\big(\Lambda_{i,h}^{k,t}\big)^{-1}}\big(4H\sqrt{d\ln(NH+1)+\ln\big(\frac{3TNHnK}{\delta}\big)}+2H\sqrt{d}\big)-\frac{1}{N}\notag\\
    =&\sum_{k=1}^K\frac{1}{K}\mathbb{E}_{a_i\sim\pi_{i,h}^{k,t}(s)}\big[r_{i,h}^{k,t}(s,a_i)+\widehat{P}_{i,h,s,a_i}^{\pi_{-i}^{k,t},\underline{V}}\underline{V}_{i,h+1}^t\big]-\beta_{i,h,2}^t(s),\label{eq:online_pessimistic_estimate_eq_final}
\end{align}
where (i) follows from Lemma~\ref{lm:online_transition_estimation_error},
 and the final line follows from the definition of the pessimistic bonus $\beta_{i,h,2}^t(s)$ in \eqref{eq:online_pessimistic_bonus}, which we recall below:
\begin{align*}
    \beta_{i,h,2}^t(s)=\sum_{k=1}^K\frac{1}{K}\mathbb{E}_{a_i\sim\pi_{i,h}^{k,t}(s)}\big\lVert\phi(s,a_i)\big\rVert_{\big(\Lambda_{i,h}^{k,t}\big)^{-1}}\big(4H\sqrt{d\ln(NH+1)+\ln\big(\frac{3TNHnK}{\delta}\big)}+2H\sqrt{d}\big)+\frac{1}{N}.
\end{align*}

Finally, recalling the definition of the pessimistic estimation $\underline{V}_{i,h}^t(s)$ in \eqref{eq:estimate_value_function}
\begin{align}
    \underline{V}_{i,h}^t(s)=\textstyle\max\big\{ \sum_{k=1}^{K} \frac{1}{K}\big<\pi_{i,h}^{k,t}(\cdot\mid s),\underline{q}_{i,h}^{k,t}(s,\cdot)\big>
    - \beta_{i,h,2}^{t}(s),~0\big\},
\end{align}
combining \eqref{eq:online_pessimistic_estimate_eq_final} with the elementary fact that $V_{i,h}^{\xi^t}(s)\geq 0$ gives that
\begin{align*}
   \forall (h,i,t,s)\in[H]\times[n]\times[T] \times\cS: \quad \underline{V}_{i,h}^t(s)\leq V_{i,h}^{\xi^t}(s)
\end{align*}
holds with probability at least $1-\delta$.

\subsubsection{Proof of Lemma~\ref{lemma:lambda-expectation}}\label{proof:lemma:lambda-expectation}


we define the centered symmetric matrices
\begin{align*}
    W_m\defn\phi_i^{(m)}(\phi_i^{(m)})^\top-X_{\tau(m),h}, \quad \text{so that} \quad \sum_{m=1}^{\lvert\cD_i\rvert}W_m=\Lambda_{i,h}^{k,t}-\bar\Lambda,
\end{align*}
where each $W_m$ are independent conditional on the past $t-1$ ronuds. Before continuing, note the basic fact
\begin{align}\label{eq:phi-fact}
     \big\lVert\phi_i^{(m)}\big\rVert_2\leq 1 \qquad \text{and} \qquad  \lVert\phi_i^{(m)}(\phi_i^{(m)})^\top\rVert_{\mathrm{op}}=\lVert\phi_i^{(m)}\rVert_2^2\leq 1.
\end{align}
Therefore, we have
\begin{align}\label{eq:Wm-bound}
    \lVert W_m\rVert_{\mathrm{op}}&\leq\big\lVert\phi_i^{(m)}(\phi_i^{(m)})^\top\big\rVert_{\mathrm{op}}+\big\lVert X_{\tau(m),h}\big\rVert_{\mathrm{op}} \leq 2
\end{align}
where the first inequality follows the triangle inequality, and the last inequality holds by \eqref{eq:phi-fact} and \begin{align}
\lVert X_{\tau(m),h}\rVert_{\mathrm{op}}\leq\mathbb E[\lVert\phi_i^{(m)}(\phi_i^{(m)})^\top\rVert_{\mathrm{op}}]\leq 1,
\end{align}
which is achieved by applying Jensen's inequality.

In addition, for the variance term,
\begin{align*}
    \mathbb E[W_m^2] &=\mathbb E[(\phi_i^{(m)}(\phi_i^{(m)})^\top)^2]-X_{\tau(m),h}^2 \preceq\mathbb E\!\big[(\phi_i^{(m)}(\phi_i^{(m)})^\top)^2\big]\preceq\mathbb E\!\big[\phi_i^{(m)}(\phi_i^{(m)})^\top\big]=X_{\tau(m),h},
\end{align*}
where the first inequality holds by the fact that $X_{\tau(m),h}^2\succeq 0$, and the second inequality holds by $(\phi_i^{(m)}(\phi_i^{(m)})^\top)^2=\lVert\phi_i^{(m)}\rVert_2^2\,\phi_i^{(m)}(\phi_i^{(m)})^\top\preceq\phi_i^{(m)}(\phi_i^{(m)})^\top$, since $\lVert\phi_i^{(m)}\rVert_2\leq 1$ by \eqref{eq:phi-fact}.

Consequently,
\begin{align}\label{eq:matrix-variance}
    V\defn\sum_{m=1}^{\lvert\cD_i\rvert}\mathbb E[W_m^2]\preceq\sum_{m=1}^{\lvert\cD_i\rvert}X_{\tau(m),h}=\bar\Lambda-\lambda I_d\preceq\bar\Lambda.
\end{align}


With above results for $W_n$ and the variance of the sum in hand, we define the conjugated increments
\begin{align*}
    Y_m\defn\bar\Lambda^{-1/2}W_m\bar\Lambda^{-1/2}, \qquad \text{so that} \qquad \sum_{m=1}^{\lvert\cD_i\rvert}Y_m=\bar\Lambda^{-1/2}\big(\Lambda_{i,h}^{k,t}-\bar\Lambda\big)\bar\Lambda^{-1/2}.
\end{align*}
Notice that
\begin{align}\label{eq:Ym-bound}
    \lVert Y_m\rVert_{\mathrm{op}}\leq\lVert\bar\Lambda^{-1/2}\rVert_{\mathrm{op}}^2\,\lVert W_m\rVert_{\mathrm{op}}\leq\tfrac{2}{\lambda}
\end{align}
where the last inequality holds by \eqref{eq:Wm-bound} and the fact $\lVert\bar\Lambda^{-1/2}\rVert_{\mathrm{op}}^2=\lVert\bar\Lambda^{-1}\rVert_{\mathrm{op}}\leq\lambda^{-1}$ since $\bar\Lambda\succeq\lambda I_d$. For the variance term,
\begin{align*}
    \sum_{m=1}^{\lvert\cD_i\rvert}\mathbb E[Y_m^2]
    &=\bar\Lambda^{-1/2}\!\bigg(\sum_{m=1}^{\lvert\cD_i\rvert}\mathbb E\!\big[W_m\bar\Lambda^{-1}W_m\big]\bigg)\bar\Lambda^{-1/2} \notag \\
    &\overset{\mathsf{(i)}}{\preceq} \tfrac{1}{\lambda}\,\bar\Lambda^{-1/2}V\bar\Lambda^{-1/2} \overset{\mathsf{(ii)}}{\preceq}
    \tfrac{1}{\lambda}\,\bar\Lambda^{-1/2}\bar\Lambda\bar\Lambda^{-1/2}=\tfrac{1}{\lambda}I_d.
\end{align*}
where (i) holds by $\bar\Lambda^{-1}\preceq\lambda^{-1}I_d$, and (ii) follows from \eqref{eq:matrix-variance}.
So we have
\begin{align}\label{eq:Ym-variance}
    \sigma'^2\defn\Big\lVert\sum_{m=1}^{\lvert\cD_i\rvert}\mathbb E[Y_m^2]\Big\rVert_{\mathrm{op}}\leq\tfrac{1}{\lambda}.
\end{align}
Applying matrix Bernstein \citep[Theorem~6.1.1]{tropp2015introduction} to the independent, mean-zero, symmetric matrices $\{Y_m\}_{m=1}^{\lvert\cD_i\rvert}$ with $R' = \frac{2}{\lambda}$ and $\sigma'^2 = \frac{1}{\lambda}$ as in \eqref{eq:Ym-bound} and \eqref{eq:Ym-variance}, we obtain, for any $\theta>0$,
\begin{align*}
    \mathbb{P}\Big[\Big\lVert\!\sum_{m=1}^{\lvert\cD_i\rvert}Y_m\Big\rVert_{\mathrm{op}}\geq\theta\Big]\leq 2d\exp\!\Big(-\frac{\theta^2/2}{\sigma'^2+R'\theta/3}\Big).
\end{align*}
Setting the RHS to $\delta'\defn\delta/(nKHT)$ and inverting yields that with probability at least $1-\delta'$,
\begin{align}\label{eq:bernstein-conjugated}
    \left\|\sum_{m=1}^{\lvert\cD_i\rvert}Y_m \right\|_{\mathrm{op}} &=  \Big\lVert\bar\Lambda^{-1/2}\big(\Lambda_{i,h}^{k,t}-\bar\Lambda\big)\bar\Lambda^{-1/2}\Big\rVert_{\mathrm{op}} \leq \sqrt{2\sigma'^2\log(2d/\delta')}+\tfrac{2R'\log(2d/\delta')}{3} \notag \\
&\leq  \sqrt{\tfrac{2\log(2d/\delta')}{\lambda}}+\tfrac{4\log(2d/\delta')}{3\lambda} \leq \frac{1}{2}
\end{align}
where the last inequality holds by setting $\lambda\geq c_0\log(2dnKHT/\delta)$ for a sufficiently large universal constant $c_0$, so that both terms on the right-hand side of \eqref{eq:bernstein-conjugated} are at most $1/4$.

Regarding \eqref{eq:bernstein-conjugated}, we have $-\tfrac{1}{2}I_d\preceq\bar\Lambda^{-1/2}(\Lambda_{i,h}^{k,t}-\bar\Lambda)\bar\Lambda^{-1/2}\preceq\tfrac{1}{2}I_d$, which directly gives $-\tfrac{1}{2}\bar\Lambda\preceq\Lambda_{i,h}^{k,t}-\bar\Lambda\preceq\tfrac{1}{2}\bar\Lambda$, yielding
\begin{align}
    \tfrac{1}{2}\,\bar\Lambda\;\preceq\;\Lambda_{i,h}^{k,t}\;\preceq\;\tfrac{3}{2}\,\bar\Lambda.
\end{align}

\subsubsection{Proof of Lemma~\ref{lemma;trace-X}}\label{proof:lemma;trace-X}

Denote $\mu_1,\ldots,\mu_d \geq0$ denote the eigenvalues of the positive definite matrix $S_{t,h}^{-1/2}X_{t,h}S_{t,h}^{-1/2}$, since $X_{t,h}$ is positive semi-definite.
We have
\begin{align}
\|S_{t,h}^{-1/2}X_{t,h}S_{t,h}^{-1/2}\|_{\mathrm{op}} \leq\lVert S_{t,h}^{-1}\rVert_{\mathrm{op}}\lVert X_{t,h}\rVert_{\mathrm{op}}\leq 1
\end{align}
since the fact $S_{t,h}\succeq\lambda I_d$ in \eqref{eq:bar-Lambda-vs-S-0} with $\lambda\geq 1$ gives $\lVert S_{t,h}^{-1}\rVert_{\mathrm{op}}\leq 1$, and $\lVert X_{t,h}\rVert_{\mathrm{op}}\leq\mathsf{tr}(X_{t,h})=\mathbb E_{s_h,a_i}[\lVert\phi_i(s_h,a_i)\rVert_2^2]\leq 1$.
Therefore, we have $0\leq \mu_i \leq 1$ and thus
\begin{align}
\log(1+\mu_i) \geq \frac{\mu_i}{2}
\end{align}
due to $\log(1+x)\geq x/2$ for $x\in[0,1]$.
Applying the fact to each eigenvalue, we arrive at
\begin{align}
\mathsf{tr}\big(S_{t,h}^{-1/2}X_{t,h}S_{t,h}^{-1/2}\big) = \sum_{i=1}^d \mu_i  \leq 2 \sum_{i=1}^d \log\det(1 +\mu_i) = 2\log\det(I_d+ S_{t,h}^{-1/2}X_{t,h}S_{t,h}^{-1/2}).
\end{align}

Since $S_{t,h}\succ 0$, both $S_{t,h}^{1/2}$ and $S_{t,h}^{-1/2}$ are well-defined. Therefore,
we have
\begin{align}
\det(S_{t+1,h})
&= \det(S_{t,h}+X_{t,h}) \notag \\
&= \det\!\left(
S_{t,h}^{1/2}
\left(I_d+S_{t,h}^{-1/2}X_{t,h}S_{t,h}^{-1/2}\right)
S_{t,h}^{1/2}
\right) \notag \\
&= \det(S_{t,h}^{1/2})\det(I_d+ S_{t,h}^{-1/2}X_{t,h}S_{t,h}^{-1/2})\det(S_{t,h}^{1/2}) \notag \\
&= \det(S_{t,h})\det(I_d+S_{t,h}^{-1/2}X_{t,h}S_{t,h}^{-1/2}),
\end{align}
where the first equality uses the update rule $S_{t+1,h}=S_{t,h}+X_{t,h}$, and the last two equalities use the multiplicativity of the determinant and
$\det(S_{t,h}^{1/2})^2=\det(S_{t,h})$.

Summing over $t=1,\ldots,T$, we obtain
\begin{align}
\sum_{t=1}^T
\mathsf{tr}\!\left(S_{t,h}^{-1/2}X_{t,h}S_{t,h}^{-1/2}\right)
&\le
2\sum_{t=1}^T
\log\det\!\left(
I_d+S_{t,h}^{-1/2}X_{t,h}S_{t,h}^{-1/2}
\right) \notag \\
&=
2\sum_{t=1}^T
\left[
\log\det(S_{t+1,h})-\log\det(S_{t,h})
\right] \notag \\
&=
2\left[
\log\det(S_{T+1,h})-\log\det(S_{1,h})
\right] \notag \\
&=
2\left[
\log\det(S_{T+1,h})-d\log\lambda
\right],
\end{align}
where the last equality uses $S_{1,h}=\lambda I_d$.
It remains to upper bound $\log\det(S_{T+1,h})$. Let $\nu_1,\ldots,\nu_d>0$ be the eigenvalues of $S_{T+1,h}$. By he arithmetic mean–geometric mean inequality (AM--GM) and the update $S_{T+1,h}=\lambda I_d+\sum_{t=1}^T X_{t,h}$,
\begin{align}
\log\det(S_{T+1,h})
&=
\log\prod_{i=1}^d \nu_i  \le
d\log\!\left(\frac{1}{d}\sum_{i=1}^d \nu_i\right) \notag \\
&=
d\log\!\left(\frac{\mathsf{tr}(S_{T+1,h})}{d}\right) \notag \\
&=
d\log\!\left(
\frac{\mathsf{tr}(\lambda I_d)+\sum_{t=1}^T \mathsf{tr}(X_{t,h})}{d}
\right) \notag \\
&\le
d\log\!\left(\frac{d\lambda+T}{d}\right),
\end{align}
where the last inequality uses $\mathsf{tr}(X_{t,h})\le 1$ for all $t \in T$. Therefore,
\begin{align}
\sum_{t=1}^T
\mathsf{tr}\!\left(S_{t,h}^{-1/2}X_{t,h}S_{t,h}^{-1/2}\right)
&\le
2\left[
d\log\!\left(\frac{d\lambda+T}{d}\right)-d\log\lambda
\right] \notag \\
&=
2d\log\!\left(1+\frac{T}{d\lambda}\right)
=
\widetilde{O}(d).
\end{align}

\end{document}